\documentclass{article}

\IfFileExists{iclr2027_conference.sty}{%
  \usepackage{iclr2027_conference,times}
}{%
  \usepackage{times,natbib}
}
\usepackage{amsmath}
\usepackage{amssymb}
\usepackage{mathtools}
\usepackage{booktabs}
\usepackage{xcolor}
\usepackage{colortbl}
\usepackage{graphicx}
\usepackage{microtype}
\usepackage{multirow}
\usepackage{array}
\usepackage{tabularx}
\usepackage{placeins}
\usepackage{algorithm}
\usepackage{algpseudocode}
\usepackage{hyperref}
\usepackage{url}
\usepackage{wrapfig}
\usepackage{needspace}
\usepackage{subcaption}
\usepackage{enumitem}

\usepackage{amsmath,amsfonts,bm,amsthm}

\def\eqref#1{equation~\ref{#1}}

\def\1{\bm{1}}

\DeclareMathAlphabet{\mathsfit}{\encodingdefault}{\sfdefault}{m}{sl}
\SetMathAlphabet{\mathsfit}{bold}{\encodingdefault}{\sfdefault}{bx}{n}

\newtheorem{theorem}{Theorem}
\newtheorem{assumption}{Assumption}
\newtheorem{definition}{Definition}
\newtheorem{lemma}{Lemma}
\newtheorem{proposition}{Proposition}
\newtheorem{corollary}{Corollary}

\definecolor{eecaccent}{HTML}{256D72}
\definecolor{eecwash}{HTML}{EDF5F4}
\definecolor{tablehead}{HTML}{F1F3F4}
\definecolor{tablemuted}{HTML}{747C80}
\newcommand{\eqcmethod}{\textcolor{eecaccent}{\textbf{EqC (ours)}}}
\newcommand{\eecrow}{\rowcolor{eecwash}}
\newcommand{\headrow}{\rowcolor{tablehead}}
\newcommand{\paneltitle}[1]{\textcolor{eecaccent}{\bfseries #1}}
\newcommand{\best}[1]{\textbf{#1}}
\newcommand{\eh}[2]{#1\,\textcolor{tablemuted}{/}\,#2}
\newcommand{\tabstyle}{\footnotesize\setlength{\tabcolsep}{3pt}\renewcommand{\arraystretch}{1.12}}

\title{EqC: Equilibrium Energy Control for \mbox{Introspective} Lifelong Robot Learning}

\author{
  \makebox[0.96\textwidth][c]{\small\bfseries
    Haodong Yang$^{1,*}$\hspace{1.3em}
    Runhan Huang$^{1,*,\dagger}$\hspace{1.3em}
    Bowen Zhang$^{1,2,*}$\hspace{1.3em}
    Iofai Lei$^{1,*}$\hspace{1.3em}
    Zunzhe Zhang$^{1}$}\\[-1pt]
  \makebox[0.96\textwidth][c]{\small\bfseries
    Xiangcheng Zhang$^{1}$\hspace{1.3em}
    Shaoting Zhu$^{1,2}$\hspace{1.3em}
    Linzhan Mou$^{3}$\hspace{1.3em}
    Yicheng Liu$^{1,2}$\hspace{1.3em}
    Hang Zhao$^{1,2,\ddagger}$}\\[4pt]
  \makebox[0.96\textwidth][c]{\normalfont\small
    $^{1}$Tsinghua University\hspace{2em}
    $^{2}$Galaxea AI\hspace{2em}
    $^{3}$Princeton University}\\[-1pt]
  \makebox[0.96\textwidth][c]{\normalfont\scriptsize
    $^{*}$Equal contribution, random order.\hspace{1.5em}
    $^{\dagger}$Project lead.\hspace{1.5em}
    $^{\ddagger}$Corresponding author.}
}

\iclrfinalcopy

\begin{document}
\raggedbottom

\maketitle
\lhead{Preprint}
\begingroup
\renewcommand{\thefootnote}{\ensuremath{\ddagger}}
\footnotetext[0]{Correspondence to Hang Zhao:
  \href{mailto:hangzhao@mail.tsinghua.edu.cn}
       {\texttt{hangzhao@mail.tsinghua.edu.cn}}.}
\endgroup
\vspace{-1.2em}

\begin{center}
  \makebox[\linewidth][c]{%
    \begin{minipage}{1.02\linewidth} 
      \centering
      \includegraphics[width=0.85\linewidth]{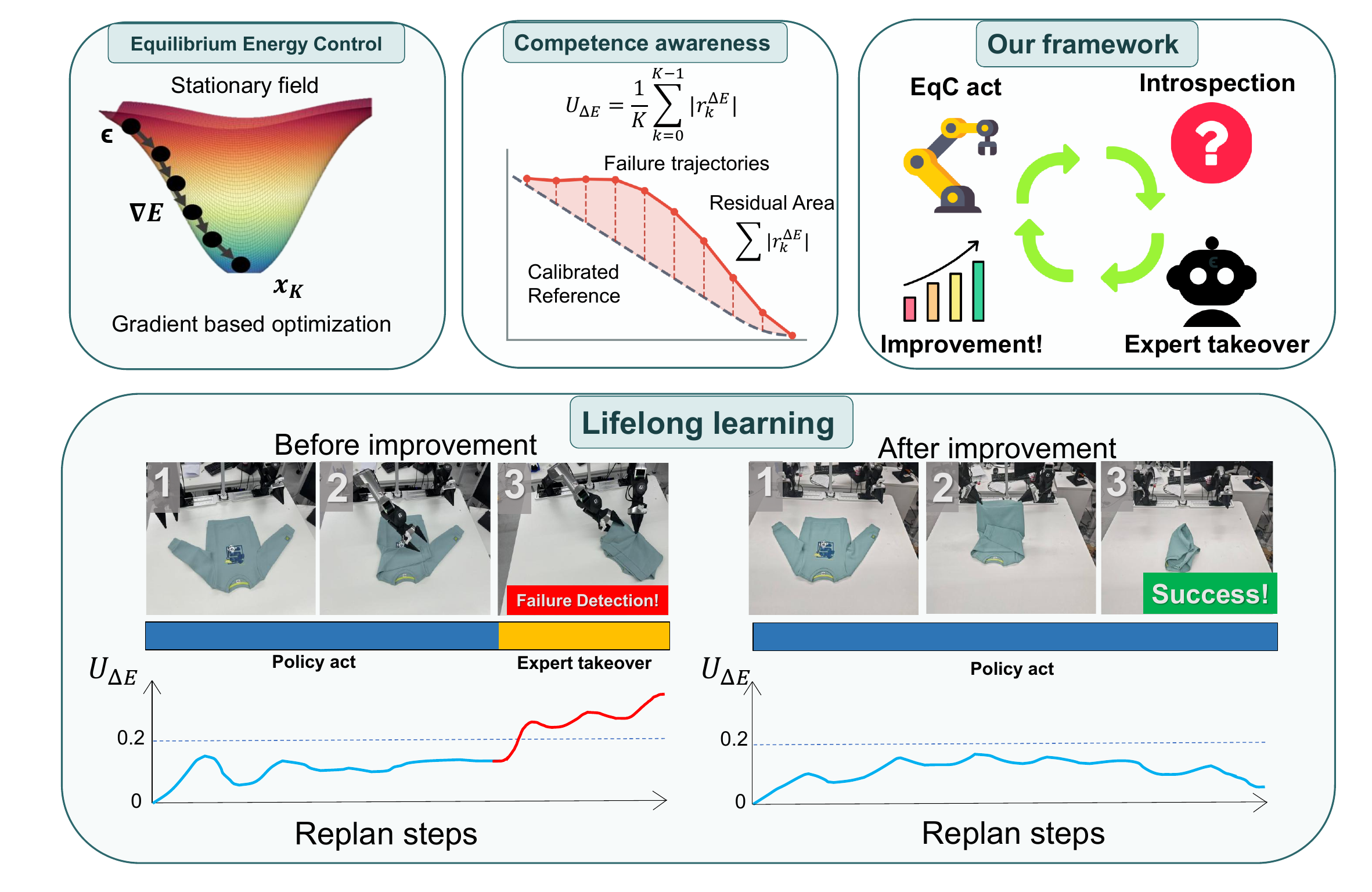}
      \vspace{0em}
      \captionof{figure}{\textbf{Overview of Equilibrium Energy Control (EqC).}
\textbf{Top:} EqC generates actions through energy-based refinement, while the geometry of the refinement path reveals when autonomous execution is weakly supported. Energy-Gated DAgger uses this signal to trigger expert takeover and subsequent policy improvement.
\textbf{Bottom:} In real-world deployment, persistent $U_{\Delta E}$ alarms prompt timely assistance before failure, progressively converting assisted interventions into fully autonomous execution through lifelong learning.
}
      \label{fig:teaser}
    \end{minipage}%
  }
\end{center}
\vspace{-0.2em}

\begin{abstract}
Long-term robot autonomy depends on more than effective action generation: a policy must recognize when its behavior is no longer sufficiently supported and use assistance to improve. We introduce \textbf{Equilibrium Energy Control (EqC)}, a framework for introspective lifelong robot learning built around an explicit conditional energy landscape. EqC casts generative action refinement as equilibrium-seeking optimization, while the geometry of the refinement process reveals whether the generated behavior remains consistent with the learned landscape. This competence signal enables Energy-Gated DAgger, where persistent energy-consistency violations trigger expert takeover and the resulting recovery trajectories are aggregated for subsequent policy updates. EqC integrates naturally into both Diffusion Transformer policies and flow-matching-based vision-language-action (VLA) policies. Across simulation and real-world bimanual manipulation, EqC matches or improves control performance and robustness while enabling timely intervention near the policy's competence boundary. A real-world study further shows how these interventions progressively expand the range of situations handled autonomously, providing a practical path from static generative controllers toward self-improving robot systems. 
\end{abstract}

\section{Introduction}\label{introduction}

Robots deployed in open-ended environments must manage their autonomy as operating conditions change~\citep{farid2024task,filos2020can}. A policy may perform reliably when the current situation is well represented by prior experience, yet become unreliable as deployment approaches the limits of that experience. This transition defines the policy's context-dependent competence boundary. An introspective robot recognizes its proximity to this boundary and regulates autonomous execution before uncertainty develops into failure. Such awareness is also central to lifelong robot learning: assistance requested near the competence boundary protects deployment and reveals where corrective experience is most valuable. Modern generative robot policies, however, are commonly deployed as static action generators that provide little indication of whether their generated behavior remains reliable in the current context~\citep{chi2025diffusion,black2024pi0}. Competence awareness can instead turn these boundary cases into targeted opportunities for learning rather than avoidable failures.

We introduce \textbf{Equilibrium Energy Control (EqC)}, an equilibrium energy formulation inspired by Equilibrium Matching~\citep{wang2025equilibrium} that equips generative robot policies with an explicit conditional energy landscape. Given visual observations, language instructions, and robot state, EqC optimizes an action chunk on this landscape and executes a short prefix under receding-horizon control. The optimization trajectory also reveals whether refinement follows the energy-progress relation learned from expert demonstrations, providing an introspective view of the policy's competence during execution. Energy-Gated DAgger uses the resulting path-wise energy-consistency signal to trigger expert intervention and aggregate recovery trajectories near the policy's current competence boundary. This condition-dependent energy formulation therefore supports action generation, online competence monitoring, and targeted data acquisition, while integrating naturally with both Diffusion Transformer policies~\citep{peebles2023scalable} and flow-matching-based vision-language-action (VLA) policies~\citep{black2024pi0,physicalintelligence2025pi05}.

We evaluate EqC on simulation benchmarks including LIBERO~\citep{liu2023libero}, RoboTwin 2.0~\citep{chen2025robotwin}, VLABench~\citep{zhang2025vlabench}, and LIBERO-Plus~\citep{fei2025libero}, together with real-world bimanual manipulation. EqC matches or improves control performance and robustness across visual and task variation, while its geometry-derived competence signal enables online detection of failure-prone execution. In real-world lifelong learning experiments, the resulting requests for assistance concentrate recovery data near the policy's competence boundary, and subsequent updates convert previously assisted situations into autonomous capabilities. This process progressively enlarges the range of situations the robot can handle independently, which we refer to as \emph{autonomous competence expansion}. Code is available on the \href{https://github.com/hrh6666/EqC}{GitHub page}. Video is provided in the supplementary material.

Our contributions are summarized as follows:
\vspace{-2pt}
\begin{itemize}[itemsep=0.1em, topsep=0.2em]
    \item \textbf{Equilibrium energy control.} EqC formulates action generation as equilibrium-seeking optimization under an explicit conditional energy function. The formulation integrates with both Diffusion Transformer and flow-matching-based VLA policies, providing a common energy-based control mechanism across distinct policy families.
    \item \textbf{Introspection from energy geometry.} EqC characterizes finite energy progress across the action-refinement trajectory and measures its agreement with the learned equilibrium relation. The resulting signal detects when autonomous execution approaches the policy's competence boundary and provides an online criterion for timely assistance.
    \item \textbf{Energy-Gated DAgger for lifelong robot learning.} Persistent energy-consistency violations trigger expert recovery, and the resulting trajectories are aggregated to update the policy around its current competence boundary. Real-world deployment shows how this closed loop converts assisted situations into autonomous capabilities and progressively expands robot competence.
\end{itemize}

\vspace{-5pt}
\section{Related Work}\label{related-work}
\vspace{-5pt}
\paragraph{Generative models for robotics.}
\looseness=-1
Generative modeling has driven substantial progress in robot learning through diffusion-based
policies~\citep{janner2022planning,ajay2022conditional,chi2025diffusion,huang2024diffuseloco,huang2025flexible}, world
models~\citep{zhang2026world,chen2025large,yuan2026fast}, and vision-language-action policies~\citep{wang2026qwen}. Within
VLA policies, iterative action generation is commonly instantiated with diffusion, as in RDT-1B~\citep{liu2025rdt}, or
with flow-matching action experts, as in \(\pi_0\) and \(\pi_{0.5}\)~\citep{lipman2022flow,black2024pi0,
physicalintelligence2025pi05}. Recent analysis relates the effectiveness of these generative policies to noise-covered
supervision and iterative computation~\citep{pan2026much}. EqC builds on this structure by realizing action refinement
as optimization over an explicit conditional energy landscape.

\paragraph{Energy-based models for robotics.}
\looseness=-1
Energy-based models represent compatibility through a scalar function over candidate configurations, turning
inference into optimization on a learned landscape~\citep{lecun2006tutorial}. In robot learning, observation--action
energy functions have been learned for imitation and used to optimize actions at inference
time~\citep{florence2022implicit}, with subsequent work improving their contrastive objectives, interpolation behavior, and
scaling to transformer policies~\citep{singh2023revisiting,davies2025ebt}. Recent energy-based VLA work uses terminal
action energy as a confidence measure for visual anomaly detection, while leaving its use for intervention-driven real-world
data aggregation unexplored~\citep{naghiyev2026energy}. Equilibrium Matching (EqM)~\citep{wang2025equilibrium} learns
stationary generative dynamics with an implicit energy structure, and GeCO~\citep{zhang2026generative} carries this
implicit-energy formulation into robot control through iterative refinement and adaptive computation. EqC makes the
conditional energy explicit in the deployed control field, allowing finite progress along the optimization trajectory
to support both action generation and introspection.

\paragraph{Failure detection and competence-aware autonomy.}
\looseness=-1
Failure detection and uncertainty estimation provide useful signals for deciding when autonomous execution should yield to
supervision~\citep{farid2024task,filos2020can}. DAgger establishes the iterative aggregation
of on-policy expert corrections~\citep{ross2011reduction}, while robot-gated variants use predicted action error,
ensemble disagreement, state novelty, and learned risk to decide when those corrections are
needed~\citep{zhang2016query,menda2019ensembledagger,hoque2021thriftydagger}. This principle also appears in
competence-aware autonomy, where a robot adjusts its level of autonomy according to its estimated ability in the current
situation~\citep{basich2020learning}. For generative manipulation policies, diffusion training loss has also served as a
failure signal for interactive data collection~\citep{lee2025diff}. EqC instead reads introspective information from the
consistency of energy progress across the full refinement path and uses persistent violations to gate recovery data
collection.

\vspace{-5pt}
\section{EqC for Introspective Lifelong Robot Learning}\label{eqc-method}
\vspace{-5pt}
\subsection{Overview}\label{overview}
\vspace{-3pt}
EqC organizes robot control and lifelong improvement around the condition-dependent energy function
\(E_\theta(x,z)\). Its conditional energy landscape refines actions through energy gradients, while finite energy
changes along the same refinement trajectory provide an introspective signal of policy competence. Energy-Gated DAgger uses this signal to
request expert intervention and aggregate targeted recovery experience. We first formulate energy-based action refinement,
then derive its introspective readout and use that signal for lifelong learning.
\vspace{-3pt}
\subsection{Equilibrium Energy Control}\label{equilibrium-energy-learning}
\vspace{-3pt}
We consider a robot policy that predicts a continuous action chunk \(a\in\mathbb{R}^D\) from a condition \(z\), which
may include images, a language instruction, robot proprioception, and recent observation history. Demonstrations are
drawn from an unknown conditional distribution \(p_{\mathrm{data}}(a\mid z)\). We parameterize the bridge from noise
to data by \(t\in[0,1]\), with \(t=0\) at noise and \(t=1\) at data, and sample \(t\) uniformly during training. For
\(\varepsilon\sim\mathcal{N}(0,I)\), the bridge displacement, intermediate state, and remaining schedule integral are
\begin{equation}
  d:=a-\varepsilon,
  \qquad x_t:=\varepsilon+t d,
  \qquad C(t):=\int_t^1 c(s)\,\mathrm ds.
  \label{eq:bridge-definitions}
\end{equation}
A nonnegative schedule \(c(t)\) controls the magnitude of the target field \(c(t)d\).

EqC represents action quality with a differentiable energy
\(E_\theta:\mathbb{R}^{D}\times\mathcal Z\to\mathbb R\). Its gradient
\(g_\theta(x,z)=\nabla_xE_\theta(x,z)\) is a stationary control field: it depends on the current action and condition,
but not on the bridge coordinate. At \(x_t\), the demonstrated displacement supplies the desired direction, and the
energy is trained with the gradient-matching objective
\begin{equation}
  \mathcal{L}_{\mathrm{grad}}
  = \mathbb{E}_{(a,z),\varepsilon,t}
  \left[
    \frac{1}{D}
    \left\|\nabla_xE_\theta(x_t,z)-c(t)d\right\|_2^2
  \right].
  \label{eq:gradient-matching}
\end{equation}

The bridge coordinate is used only to sample training targets; the deployed field depends on the current action and
condition alone. At inference, EqC initializes \(x^{(0)}\sim\mathcal N(0,I)\) and applies
\(x^{(k+1)}=x^{(k)}+\eta_k\nabla_xE_\theta(x^{(k)},z)\) for \(k=0,\ldots,K-1\), yielding \(\hat x=x^{(K)}\). The
controller executes a prefix and replans from the next observation. Standard smoothness conditions give monotonic
energy progress for sufficiently stable step sizes (Appendix~\ref{app:energy-ascent}).

\paragraph{Architecture compatibility.}\label{architecture-instantiations}
\looseness=-1
The framework requires only a differentiable, per-sample scalar over an action chunk. It may be produced by a dedicated
energy head or by a differentiable scalarization of action-conditioned features. In our implementations, a stationary
backbone representation \(f_\theta\) defines \(E_\theta(x,z)=\langle x,f_\theta(x,z)\rangle\) over active action
dimensions. The DiT model is conditioned on RGB observations, language, and robot state. The VLA model
retains the \(\pi_0\) multimodal prefix, action expert, action representation, and padding convention. Both expose the
same interface \(x\mapsto E_\theta(x,z)\mapsto\nabla_xE_\theta(x,z)\), so EqC changes the action objective and its
optimization through an interface shared across observation encoders and policy scales.

\vspace{-3pt}
\subsection{Introspection from Energy Geometry}
\vspace{-3pt}
\label{global-path-integral-consistency}
\label{energy-based-competence-awareness}
\label{equilibrium-energy-consistency-uncertainty}

The gradient-matching target also determines finite energy progress along a demonstration bridge. For
\(x_s=\varepsilon+sd\), integrating
\(\frac{d}{ds}E_\theta(x_s,z)=\nabla_xE_\theta(x_s,z)^\top d=c(s)\|d\|_2^2\) gives
\begin{equation}
E_\theta(a,z)-E_\theta(x_t,z)=\|d\|_2^2C(t).
\end{equation}
Violation of this relation along a realized refinement trajectory reveals disagreement between local optimization and
the energy geometry learned from demonstrations.

At deployment, we treat \(\hat x=x^{(K)}\) as the approximate equilibrium and use it as a self-anchor. Let
\(v=\hat x-x^{(0)}\) and \(D_v=\max(\|v\|_2^2,\delta)\). Projection onto the noise-to-equilibrium chord assigns each
iterate a progress coordinate \(\tau_k\), after which we compare its normalized energy-to-go with the bridge reference:
\begin{equation}
\begin{aligned}
\tau_k
&:=\operatorname{clip}\!\left(
\frac{\langle x^{(k)}-x^{(0)},v\rangle}{D_v},0,1
\right),\\
r_k^{\Delta E}
&:=\frac{E_\theta(\hat x,z)-E_\theta(x^{(k)},z)}{D_v}-C(\tau_k),\\
U_{\Delta E}(z)
&:=\frac{1}{K}\sum_{k=0}^{K-1}\left|r_k^{\Delta E}(z)\right|.
\end{aligned}
\label{eq:energy-consistency-score}
\end{equation}

The score is small when realized energy progress follows the bridge relation and increases when that relation is
violated. Absolute aggregation prevents cancellation across refinement states, while averaging makes the scale
comparable across different values of \(K\). It requires neither an expert action nor an auxiliary uncertainty model at
deployment. Appendix~\ref{app:energy-consistency} relates the residual to field error and off-chord deviation.

\vspace{-3pt}
\subsection{Energy-Gated DAgger for Lifelong Robot Learning}\label{competence-aware-data-acquisition}
\vspace{-3pt}
We use EqC's energy-consistency score to gate a DAgger-style data aggregation loop~\citep{ross2011reduction}, an instantiation we call
Energy-Gated DAgger. Before each action prefix, the controller computes \(U_{\Delta E}(z_j)\); \(m\) consecutive
threshold crossings trigger expert takeover, and the recovery trajectory is retained for the next policy update. The
gate therefore responds to persistent path-wise inconsistency rather than raw energy magnitude.

\begin{algorithm}[t]
  \caption{Energy-Gated DAgger with EqC}
  \label{alg:eqc-lifelong}
  \footnotesize
  \begin{algorithmic}[1]
    \Require Initial demonstrations \(\mathcal D_0\), threshold \(\gamma\), persistence \(m\), rounds \(R\)
    \State \(\theta_0 \gets \Call{TrainEqC}{\mathcal D_0}\)
    \For{\(r=0,\ldots,R-1\)}
      \State \(\mathcal Q_r\gets\varnothing\), \(h\gets 0\)
      \For{each replan \(j\) in deployment round \(r\)}
        \State \(z_j\gets\Call{Observe}{}\)
        \State \((\hat x_j,\mathcal X_j)\gets\Call{EnergyRefine}{E_{\theta_r},z_j}\)
        \State \(u_j\gets\Call{EnergyConsistency}{\mathcal X_j}\)
        \State \(h\gets(h+1)\,\mathbf 1[u_j>\gamma]\)
        \If{\(h=m\)}
          \State \(\xi_j^\star\gets\Call{ExpertRecover}{z_j}\)
          \State \(\mathcal Q_r\gets\mathcal Q_r\cup\{\xi_j^\star\}\), \(h\gets0\)
        \Else
          \State \(\Call{ExecutePrefix}{\hat x_j}\)
        \EndIf
      \EndFor
      \State \(\mathcal D_{r+1}\gets\mathcal D_r\cup\mathcal Q_r\)
      \State \(\theta_{r+1}\gets\Call{TrainEqC}{\mathcal D_{r+1}}\)
    \EndFor
    \State \Return \(E_{\theta_R}\)
  \end{algorithmic}
\end{algorithm}

This deployment loop concentrates supervision on states where the current controller lacks support: the energy used to
generate behavior also determines when autonomy should yield, focusing new experience near the policy's competence
boundary.

\section{Why EqC Fits Generative Robot Policies}\label{analysis-and-core-insights}

\paragraph{A favorable generation regime.}
Image synthesis must cover diverse pixel realizations left unspecified by a class or text prompt. In control, rich
visual, language, and proprioceptive conditions narrow the action choice, success requires one competent action, and
receding-horizon execution replans after a short prefix. Action outputs are also much smaller than images. These
properties make conditional action generation a natural setting for a structured optimizer. This view agrees with the
finding that noise-covered supervised iteration explains more of generative-control performance than distinct action
modes on common benchmarks~\citep{pan2026much}, and suggests why the explicit-energy cost observed for EqM image
generation~\citep{wang2025equilibrium} need not transfer directly to control.

\paragraph{Approximation and optimization.}
Appendix~\ref{app:stationary-representation} compares EqC with matched stationary and time-conditioned regressors and
shows
\begin{equation}
R_{\mathrm{EqC}}^\Theta-R_{\mathrm{tc}}^\star
\leq
\left(\sigma_{\mathrm{act}}+\sigma_{\mathrm{scale}}+\delta_\Theta\right)^2.
\label{eq:concentration-summary}
\end{equation}
The terms measure residual action ambiguity, uncertainty about bridge progress after removing time, and finite energy
class approximation. When they are small, explicit energy has limited regression cost. It then exposes a property not
provided by an unconstrained field: sufficiently small gradient-ascent steps provably increase the scalar optimized at
inference. The same path calculus relates \(U_{\Delta E}\) to mean absolute integrated field error and trajectory deviation. Full
definitions and proofs are given in Appendix~\ref{app:theory}.

\FloatBarrier
\vspace{-5pt}
\section{Experiments}\label{experiments}
\vspace{-5pt}
The experiments evaluate the three capabilities developed by EqC. We first study its effectiveness as a generative controller across DiT and VLA policies in simulation and real-world manipulation. We then test path-wise energy consistency as an introspective signal for online failure detection. Finally, we close the loop with Energy-Gated DAgger to evaluate lifelong improvement on a physical robot. Complete benchmark and implementation protocols, failure-detection analyses, real-robot details, and ablations are provided in Appendices~\ref{app:evaluation-details},~\ref{app:competence-awareness-details},~\ref{app:real-robot-details}, and~\ref{app:ablations}, respectively.

\vspace{-3pt}
\subsection{Simulated and Real-World Benchmarks}\label{simulation-and-real-world-benchmarks}
\vspace{-3pt}
\subsubsection{Setup}\label{experimental-questions}
\vspace{-3pt}

We evaluate action generation and competence awareness on LIBERO~\citep{liu2023libero}, RoboTwin
2.0~\citep{chen2025robotwin}, VLABench~\citep{zhang2025vlabench}, and
LIBERO-Plus~\citep{fei2025libero}. All methods use each benchmark's binary success criterion. The DiT
experiments share one conditioning stack, action interface, and demonstration set across objectives and are built on a
modular diffusion-policy implementation~\citep{dong2024cleandiffuser}. The VLA
experiments use the same flow-matching VLA backbone within each comparison. EqC always refines a noise-initialized
action chunk and executes a prefix under receding-horizon control. Appendix~\ref{app:evaluation-details} gives the
benchmark definitions, rollout budgets, and implementation details.

\subsubsection{Diffusion Transformer}\label{libero-dit}

We adapt EqC to a DiT architecture and evaluate it on the four LIBERO suites. Table~\ref{tab:libero-dit} follows the
benchmark comparison reported by GeCO~\citep{zhang2026generative} and places EqC's multi-seed mean alongside those
published baselines. EqC achieves the highest aggregate result, with \(94.0\%\) average success across training seeds.
Appendix~\ref{app:dit-multiseed} reports the complete EqC statistics together with our independent multi-seed
reproductions of the matched rectified-flow and GeCO baselines; Appendix~\ref{app:benchmarks} describes the protocol.

\begin{table}[t]
  \centering
  \caption{Success rates (\%) on LIBERO with the DiT policy and VLABench with VLA policies.}
  \label{tab:compact-simulation-results}
  \begin{subtable}[t]{0.455\textwidth}
    \centering
    \phantomsubcaption\label{tab:libero-dit}
    \paneltitle{(a) LIBERO--DiT}\par\vspace{2pt}
    \footnotesize
    \setlength{\tabcolsep}{1.8pt}
    \renewcommand{\arraystretch}{1.08}
    \begin{tabular}{@{}lccccc@{}}
      \toprule
      \headrow Method & Goal & Spatial & Object & Long & Avg. \\
      \midrule
      Diffusion Policy & 82.9 & 91.4 & 88.9 & 82.7 & 86.5 \\
      Rectified Flow & 92.4 & 94.6 & 97.0 & 76.0 & 90.0 \\
      GeCO & 95.2 & 95.8 & \best{99.0} & 83.8 & 93.5 \\
      \multicolumn{6}{c}{} \\
      \eecrow \eqcmethod & \best{95.6} & \best{96.4} & 98.1 & \best{85.5} & \best{94.0} \\
      \bottomrule
    \end{tabular}
  \end{subtable}
  \hfill
  \begin{subtable}[t]{0.525\textwidth}
    \centering
    \phantomsubcaption\label{tab:vlabench}
    \paneltitle{(b) VLABench--VLA}\par\vspace{2pt}
    \footnotesize
    \setlength{\tabcolsep}{2.0pt}
    \renewcommand{\arraystretch}{1.08}
    \begin{tabular}{@{}lcccccc@{}}
      \toprule
      \headrow Method & T1 & T2 & T3 & T4 & T6 & Avg. \\
      \midrule
      $\pi_{0.5}$ & 40.6 & \best{22.6} & 18.0 & \best{16.1} & 25.6 & 24.5 \\
      \eecrow $\pi_{0.5}$ + \eqcmethod
        & \best{44.3} & 21.3 & \best{18.5} & 15.4 & \best{36.0} & \best{27.1} \\
      $\pi_0$ & 47.0 & 21.2 & 29.1 & 17.3 & 32.2 & 29.4 \\
      $\pi_0$ + GeCO & 61.0 & 22.0 & \best{34.0} & \best{20.0} & 45.0 & 36.0 \\
      \eecrow $\pi_0$ + \eqcmethod
        & \best{64.4} & \best{29.0} & 28.6 & 15.8 & \best{45.6} & \best{36.7} \\
      \bottomrule
    \end{tabular}
  \end{subtable}
  \vspace{-5pt}
\end{table}

\subsubsection{Vision-Language-Action Policies}\label{robotwin-and-vlabench}

We integrate EqC into a VLA backbone by adapting the $\pi_0$ policy while preserving its multimodal architecture. EqC
replaces flow matching with the energy objective $E_\theta(x,o) = \langle x, v_\theta(x,o,0) \rangle$, evaluated at the
fixed model time $t_{\mathrm{model}}=0$. Training matches energy gradients along interpolated action trajectories and
masks inactive padded dimensions; inference refines actions iteratively with the learned energy field. We evaluate the
resulting VLA policies on RoboTwin 2.0, VLABench, and LIBERO-Plus, with benchmark protocols in
Appendix~\ref{app:benchmarks}; supplementary LIBERO results are provided in Appendix~\ref{app:vla-libero}.

\begin{table}[t]
  \centering
  \caption{VLA policy success rates (\%) on RoboTwin 2.0 (Easy / Hard) and LIBERO-Plus.}
  \label{tab:vla-simulation-results}
  \begin{subtable}[t]{0.45\textwidth}
    \centering
    \phantomsubcaption\label{tab:robotwin}
    \paneltitle{(a) RoboTwin 2.0 (Easy / Hard)}\par\vspace{2pt}
    \scriptsize
    \setlength{\tabcolsep}{1.4pt}
    \renewcommand{\arraystretch}{1.08}
    \begin{tabular}{@{}l cccccc@{}}
      \toprule
      \headrow Method & Bottle & Hammer & Rank & Alarm & Bell & Avg. \\
      \midrule
      $\pi_0$         & \eh{90}{56}  & \eh{43}{21} & \eh{\best{7}}{\best{1}} & \eh{63}{11} & \eh{44}{3}  & \eh{49}{18} \\
      $\pi_0$ + GeCO  & \eh{88}{66}  & \eh{52}{\best{36}} & \eh{0}{0}          & \eh{72}{16} & \eh{42}{21} & \eh{50}{28} \\
      \eecrow $\pi_0$ + \eqcmethod 
        & \eh{\best{100}}{\best{76}} & \eh{\best{76}}{27} & \eh{6}{0} & \eh{\best{83}}{\best{32}} & \eh{\best{49}}{\best{30}} & \eh{\best{63}}{\best{33}} \\
      \bottomrule
    \end{tabular}
  \end{subtable}
  \hfill
  \begin{subtable}[t]{0.53\textwidth}
    \centering
    \phantomsubcaption\label{tab:libero-plus}
    \paneltitle{(b) LIBERO-Plus}\par\vspace{2pt}
    \scriptsize
    \setlength{\tabcolsep}{1.3pt}
    \renewcommand{\arraystretch}{1.08}
    \begin{tabular}{@{}l ccccccccc@{}}
      \toprule
      \headrow Method & Orig & Cam & Rob & Lang & Light & Back & Noise & Lay & Avg. \\
      \midrule
      OpenVLA & 76.5 & 0.8  & 3.5  & 23.0 & 8.1  & 34.8 & 15.2 & 28.5 & 15.6 \\
      $\pi_0$ & 94.2 & 15.8 & 6.6  & \best{61.0} & 79.6 & 78.5 & \best{79.4} & 70.4 & 53.6 \\
      \eecrow $\pi_0$ + \eqcmethod 
        & \best{94.3} & \best{61.7} & \best{39.0} & 54.8 & \best{84.6} & \best{83.0} & 74.3 & \best{73.1} & \best{65.8} \\
      \bottomrule
    \end{tabular}
  \end{subtable}
  \vspace{-5pt}
\end{table}

\vspace{-2pt}
\looseness=-1
\paragraph{RoboTwin 2.0.} RoboTwin 2.0 evaluates bimanual manipulation under structured randomization of clutter,
lighting, background, tabletop height, and language instructions. Under its clean Easy and domain-randomized Hard
conditions, Table~\ref{tab:robotwin} shows that EqC raises the \(\pi_0\) averages from \(49/18\) to \(63/33\), compared
with \(50/28\) for \(\pi_0\)+GeCO. The improvement is particularly pronounced on Adjust Bottle and Click Alarmclock,
including under the more challenging Hard condition.
\vspace{-3pt}
\looseness=-1
\paragraph{VLABench.} We evaluate on VLABench across its five standardized tracks (T1--T4 and T6), covering object generalization, commonsense reasoning, semantics, and appearance shifts. As shown in Table~\ref{tab:vlabench}, EqC consistently enhances generalization for both flow-matching backbones, boosting $\pi_0$ from $29.4\%$ to $36.7\%$ and $\pi_{0.5}$ from $24.5\%$ to $27.1\%$.
\vspace{-3pt}
\looseness=-1
\paragraph{LIBERO-Plus.} LIBERO-Plus tests robustness under seven controlled perturbation families. Table~\ref{tab:libero-plus} shows that EqC raises overall success from $53.6\%$ to $65.8\%$. The most prominent improvements occur under camera viewpoint ($15.8\% \to 61.7\%$) and robot initialization shifts ($6.6\% \to 39.0\%$), demonstrating superior out-of-distribution robustness.

\Needspace{0.48\textheight}
\subsubsection{Real-World Experiments}\label{real-world-experiment}

\begin{wrapfigure}{r}{0.44\textwidth}
  \vspace{-6pt}
  \centering
  \includegraphics[width=\linewidth]{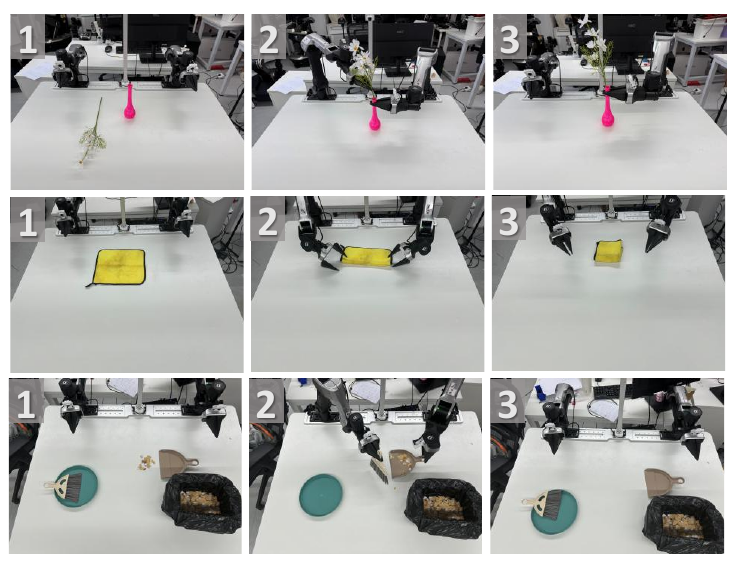} 
  \vspace{-6pt}
  \caption{\textbf{Real-world bimanual execution.} 
  \textit{Top (1--3):} Flower Insertion. 
  \textit{Middle (1--3):} Towel Folding. 
  \textit{Bottom (1--3):} Debris Sweeping.}
  \label{fig:real-world-benchmark}
  \vspace{-10pt}
\end{wrapfigure}
We evaluate EqC on a bimanual workstation with two Galaxea A1Z manipulators using the Galaxea G0.5\citep{liu2026g0} flow-matching based VLA policy, with 20 independent trials per task under randomized object placements. Flower Insertion (Figure~\ref{fig:real-world-benchmark}, top) requires the robot to grasp and reorient a flexible stem before inserting it through a narrow vase opening; Towel Folding (middle) requires coordinated deformable-object manipulation through lateral folding, regrasping, and a final vertical fold. Debris Sweeping (bottom) evaluates multi-stage bimanual tool coordination, requiring the robot to sweep scattered paper scraps into a handheld dustpan using a brush and deposit them cleanly into a waste bin without spillage. Appendix~\ref{app:real-robot-details} provides task definitions, success criteria, and training settings.

\begin{table}[t]
  \centering
  \caption{Control and competence-awareness evaluation. \textbf{(a)} Success and latency on three real-world bimanual tasks (20 trials each). \textbf{(b)} Failure detection on pooled LIBERO and LIBERO-Plus rollouts. ``EqC+Diff-DAgger'' denotes the Diff-DAgger flow-loss detector applied to the EqC backbone; EqC uses path-wise energy consistency.}
  \label{tab:control-and-awareness-results}
  \begin{subtable}[t]{0.595\textwidth}
    \centering
    \phantomsubcaption\label{tab:real-world-results}
    \paneltitle{(a) Real-world bimanual manipulation}\par\vspace{2pt}
    \fontsize{6.4pt}{7.6pt}\selectfont
    \setlength{\tabcolsep}{1.1pt}
    \renewcommand{\arraystretch}{1.10}
    \begin{tabular}{@{}l cccc cccc@{}}
      \toprule
      \headrow & \multicolumn{4}{c}{Success (\%)} & \multicolumn{4}{c}{Latency (ms)} \\
      \cmidrule(lr){2-5} \cmidrule(lr){6-9}
      \headrow \multirow{-2}{*}{Method} & Flower & Towel & Sweep. & Avg. & Flower & Towel & Sweep. & Avg. \\
      \midrule
      Galaxea G0.5 & 45.0 & 80.0 & 65.0 & 63.3 & \best{421.9} & \best{435.6} & \best{439.6} & \best{432.4} \\
      \addlinespace[1pt]
      \eecrow Galaxea G0.5 + \eqcmethod
        & \best{80.0} & \best{95.0} & \best{100.0} & \best{91.7} & 526.4 & 556.5 & 548.7 & 543.9 \\
      \bottomrule
    \end{tabular}
  \end{subtable}
  \hfill
  \begin{subtable}[t]{0.395\textwidth}
    \centering
    \phantomsubcaption\label{tab:failure-detection}
    \paneltitle{(b) Online competence awareness}\par\vspace{2pt}
    \fontsize{6.4pt}{7.6pt}\selectfont
    \setlength{\tabcolsep}{1.1pt}
    \renewcommand{\arraystretch}{1.10}
    \begin{tabular}{@{}lcccc@{}}
      \toprule
      \headrow Method & AUROC & TPR/FPR & Acc. (\%) & Saved (\%) \\
      \midrule
      Diff-DAgger & 0.8430 & 61.3/10.2 & 82.2 & 33.3 \\
      EqC+Diff-DAgger & 0.8888 & 72.8/8.8 & 85.5 & 42.1 \\
      \addlinespace[1pt]
      \eecrow \eqcmethod & \best{0.9244} & \best{77.9/8.5} & \best{87.5} & \best{45.5} \\
      \bottomrule
    \end{tabular}
  \end{subtable}
\end{table}
Table~\ref{tab:real-world-results} shows that EqC raises average success from 63.3\% to 91.7\%. Success improves from 45.0\% to 80.0\% on \textit{Flower Insertion}, from 80.0\% to 95.0\% on \textit{Towel Folding}, and reaches 100.0\% on \textit{Debris Sweeping}, while introducing moderate inference latency overhead.

\subsection{Competence Awareness}\label{competence-evaluation-and-ablations}
\label{sec:competence-eval}
\label{sec:ablations}

\subsubsection{Setup}\label{competence-awareness-setup}

EqC's energy-consistency signal measures whether inferred energy progress remains aligned with the relation induced by
expert demonstrations, enabling the policy to perform introspective competence assessment at inference time. We
evaluate this property as online failure detection: successful rollouts are negative examples and failed rollouts are
positive examples. An alarm requires the signal
to exceed a fixed threshold for \(m\) consecutive replanning steps, and detection timing is measured by the fraction of
the rollout remaining when an alarm is raised in a failed trial. 
Appendix~\ref{app:failure-detection-definition} formalizes the competence-aware task definition, while
Appendix~\ref{app:competence-awareness-split} provides the simulation failure-detection protocol, including the rollout partition, Receiver Operating Characteristic (ROC) curve construction, and threshold calibration.

\subsubsection{Simulated Rollout Analysis}\label{simulated-rollout-analysis}

In simulation, we evaluate online competence awareness on LIBERO and LIBERO-Plus using three configurations: Diff-DAgger's flow-loss detector on $\pi_0$~\citep{lee2025diff}, the same detector on the EqC backbone, and the full energy-consistency signal $U_{\Delta E}$. Threshold-dependent metrics use a disjoint split:
10\% of the rollouts select and freeze \((m,\tau^*)\), and the remaining 90\% are used only for evaluation. Under this protocol, the calibration subset selects $(m,\tau^*)$ while constraining the FPR to approximately $10\%$; the pair remains frozen for held-out evaluation, while Area Under the Receiver Operating Characteristic (AUROC) summarizes the full score ranking. Table~\ref{tab:failure-detection} shows that EqC's path-wise energy-consistency signal attains the highest AUROC, TPR, accuracy, and saved inference time, together with the lowest FPR, among Diff-DAgger and EqC + Diff-DAgger baseline; its AUROC reaches $0.9244$, compared with $0.8430$ for Diff-DAgger. Appendix~\ref{app:competence-awareness-diffdagger-baseline-method} details the adapted Diff-DAgger baseline, and Appendix~\ref{app:competence-awareness-detailed-results} reports calibration variants, window and Monte Carlo sensitivity, saved inference time, and inference latency.

\subsubsection{Real-World Rollout Analysis}\label{real-world-rollout-analysis}

We next use \textit{Hierarchical Block Stacking} as a real-world failure-detection testbed and collect 50 rollouts: 25 successful executions and 25 failed executions. Figure~\ref{fig:real-world-failure-curve-eqc} shows that the signal remains low for successful executions and rises
early for failures, whereas the Diff-DAgger trajectories in
Figure~\ref{fig:real-world-failure-curve-diffdagger} overlap through most of the horizon. The EqC signal therefore
preserves a more useful distinction between supported and unsupported execution under physical sensing and control
noise; Appendix~\ref{app:real-robot-details} provides the task details.

\begin{figure*}[t]
  \vspace{-8pt}
  \centering
  \begin{subfigure}[t]{0.445\textwidth}
    \centering
    \includegraphics[width=\linewidth]{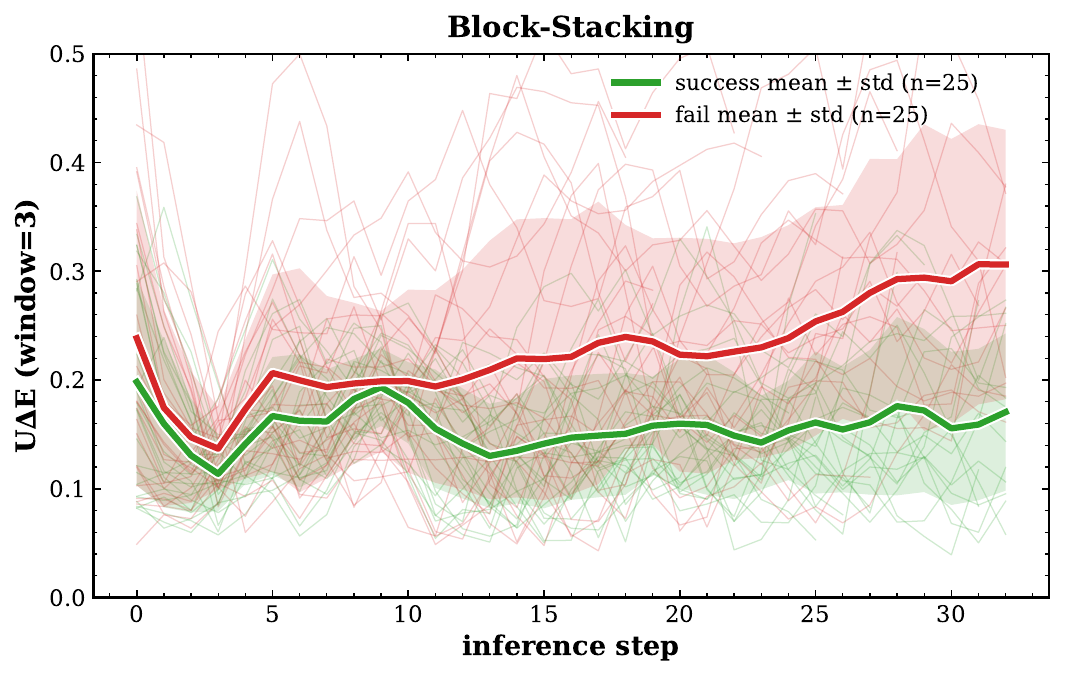}
    \caption{EqC (ours)}
    \label{fig:real-world-failure-curve-eqc}
  \end{subfigure}
  \hfill
  \begin{subfigure}[t]{0.445\textwidth}
    \centering
    \includegraphics[width=\linewidth]{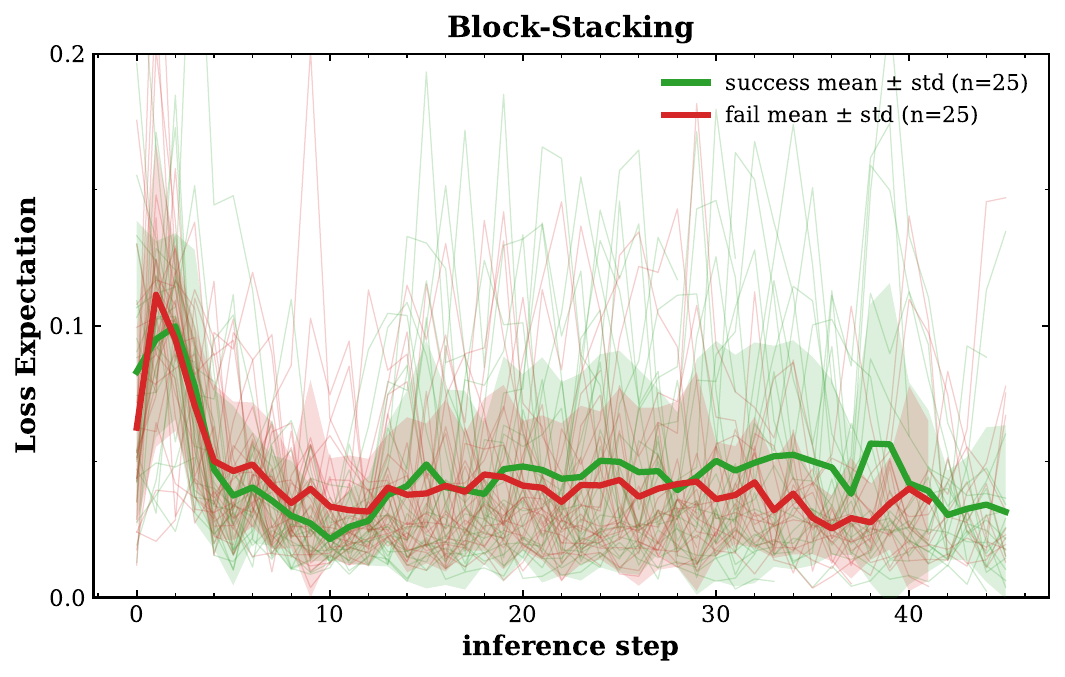}
    \caption{Diff-DAgger (baseline)}
    \label{fig:real-world-failure-curve-diffdagger}
  \end{subfigure}
  \caption{\textbf{Real-world failure detection} (mean $\pm 1\,\text{std}$, $n=25$). (a) EqC cleanly separates success and failure early for timely intervention, whereas (b) Diff-DAgger exhibits heavy overlap.}
  \label{fig:real-world-ood-curves-combined}
  \vspace{-1.0em}
\end{figure*}

\subsection{Energy-Gated DAgger for Lifelong Learning}\label{lifelong-learning}

\subsubsection{Experimental Setup and Protocol}\label{lifelong-learning-setup}
Real-world robot data are costly, and deployment conditions cannot be fully anticipated. We therefore test whether EqC recognizes its competence boundary and uses intervention data to expand it across two bimanual tasks on the Galaxea platform: 
(1) \textit{Hierarchical Block Stacking}, requiring the robot to construct a freestanding tower across $3! = 6$ color permutations under tight contact constraints; and 
(2) \textit{Garment Folding}, requiring sequential multi-stage folding of deformable fabric. 
Both policies start from limited seed demonstrations and follow Algorithm~\ref{alg:eqc-lifelong} ($\gamma = 0.2$), where persistent $U_{\Delta E}$ violations prompt expert recovery. The aggregated trajectories are used for subsequent policy fine-tuning (Appendix~\ref{app:real-robot-details}).

\subsubsection{Progressive Performance}\label{lifelong-results}

Figure~\ref{fig:lifelong-curves} (top) tracks competence acquisition over Energy-Gated DAgger deployment rounds across \textit{Hierarchical Block Stacking} and \textit{Garment Folding}. We evaluate against an automated Galaxea G0.5 + Diff-DAgger baseline and a skilled human-in-the-loop teleoperator. Under identical data and training budgets, Energy-Gated DAgger substantially outperforms all baselines across three independent experimental runs. On \textit{Hierarchical Block Stacking}, success rises from an initial $0.0\%$ to an average of \best{96.7\%} after incorporating 30 targeted recovery episodes, outperforming Galaxea G0.5 + Diff-DAgger ($80.0\%$) and human-triggered intervention ($58.3\%$). 
On \textit{Garment Folding}, the policy progresses from $30.0\%$ to an average of \best{98.3\%} autonomous execution in just 2 rounds (12 targeted recovery episodes), surpassing Galaxea G0.5 + Diff-DAgger ($81.7\%$) and skilled human intervention ($55.0\%$). Full round-by-round statistics across all individual runs are reported in Table~\ref{tab:lifelong-detailed-stats} (Appendix~\ref{app:real-robot-lifelong-protocol}). These results confirm that EqC's path-wise energy-consistency signal identifies weakly supported states more reliably than Diff-DAgger's signal or manual monitoring, yielding targeted recovery data that systematically expands autonomous competence. As qualitatively shown in Figure~\ref{fig:real-robot-rollouts} (bottom), the detection of execution anomalies trigger timely $U_{\Delta E}$ alarms to prompt expert recovery in earlier rounds, progressively enabling robust autonomous execution under perturbed setups. Furthermore, under sequential cross-task shift (\textit{Hierarchical Cup Stacking}, Appendix~\ref{app:task-shift-cup-stacking}), EqC detects task boundary transitions, mitigates catastrophic forgetting, and restores 100.0\% dual-task autonomy.

\begin{figure}[!htbp]
  \vspace{-6pt}
  \centering
  \includegraphics[width=0.76\linewidth]{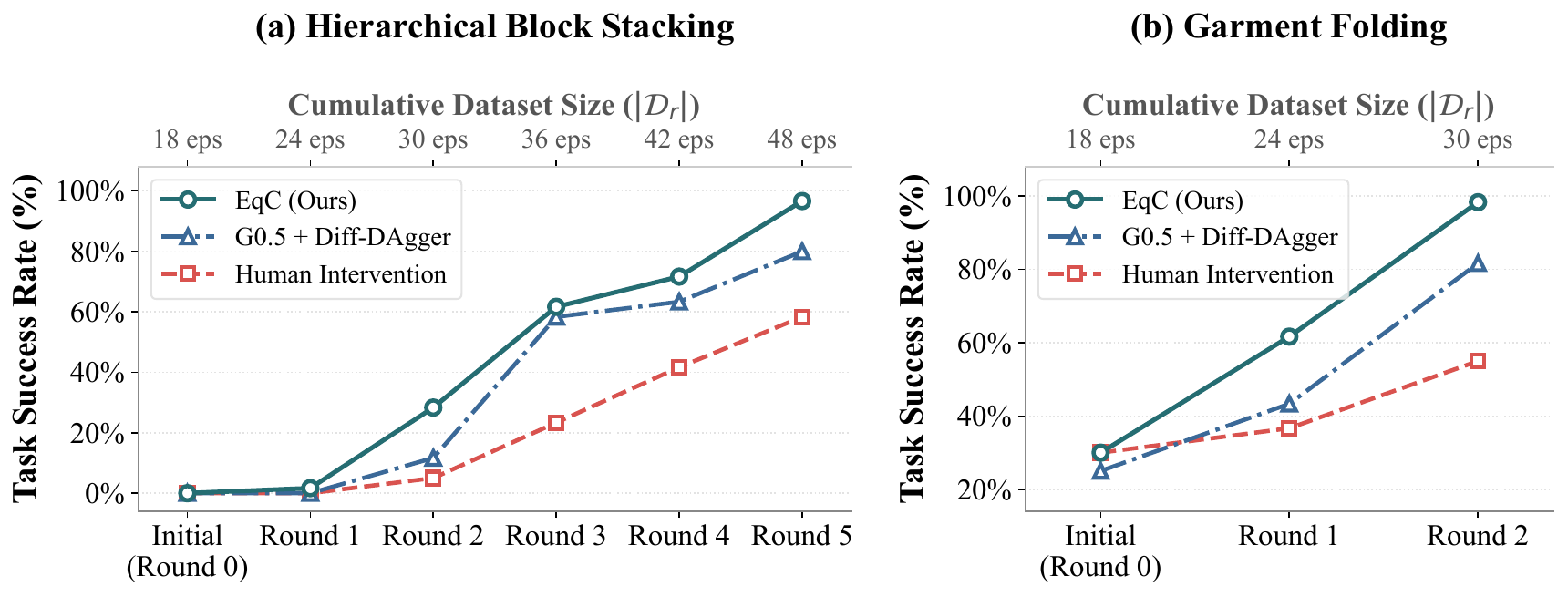}
  \vspace{2pt}
  \includegraphics[width=0.80\linewidth]{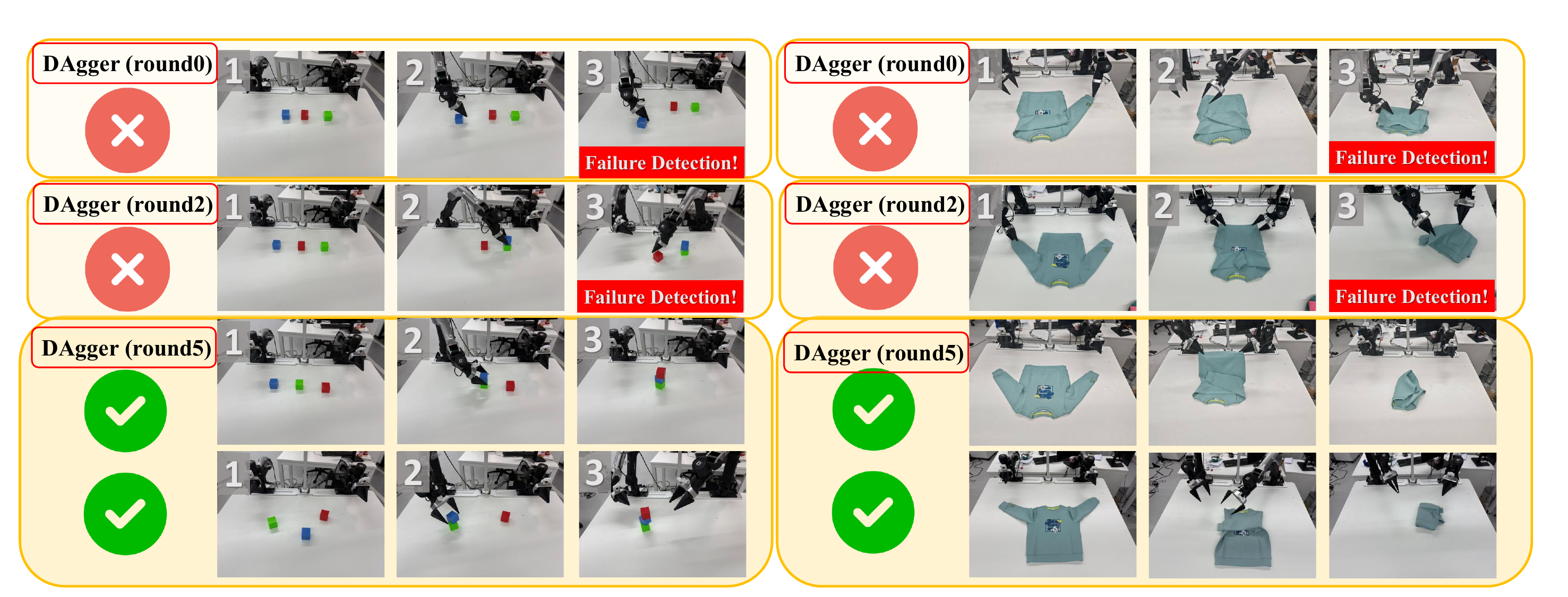}
  \vspace{-4pt}
  \caption{\textbf{Real-world lifelong learning evaluation with Energy-Gated DAgger.} 
    \textbf{Top:} Quantitative success curves across deployment rounds (averaged over 3 independent runs, with 20 autonomous evaluation trials per checkpoint per run). 
    \textbf{Bottom:} Qualitative rollouts: early drift triggers $U_{\Delta E}$ alarms (red banners) for expert recovery; fine-tuned policies achieve autonomous execution (Rows 3--4). 
    Across both \textbf{(a)}~\textit{Hierarchical Block Stacking} and \textbf{(b)}~\textit{Garment Folding}, EqC achieves \best{96.7\%} and \best{98.3\%} mean success rates, respectively, decisively outperforming Diff-DAgger and human intervention.}
  \label{fig:lifelong-combined}
  \label{fig:lifelong-curves}
  \label{fig:real-robot-rollouts}
  \vspace{-8pt}
\end{figure}

\FloatBarrier

\section{Limitations and Scope}\label{limitations-and-scope}

Our study focuses on continuous action-chunk policies with stationary refinement schedules. Future work should extend
EqC to hybrid, memory-based, and hierarchical policies; deepen the analysis of energy geometry, calibration, and
repeated data aggregation; and test how competence thresholds and intervention budgets transfer across embodiments
and longer deployments.

\section{Conclusion}\label{conclusion}

We presented Equilibrium Energy Control (EqC), an energy-based framework connecting generative robot control with
introspective competence awareness. Through an explicit conditional energy landscape, EqC refines actions toward
equilibrium and uses path-wise energy consistency to assess whether behavior remains supported; Energy-Gated DAgger
turns this introspective signal into targeted expert recovery and policy updates. Across simulation and physical robots,
EqC matches or improves control performance and robustness. Real-world lifelong learning experiments further show
that repeated recovery and updates expand autonomous competence. Its compatibility with diverse
robot-manipulation architectures also offers a path toward energy-based control and introspection in locomotion,
navigation, and autonomous driving. Embedding competence awareness into generative control may enable introspective,
self-evolving embodied agents whose autonomy grows through lifelong learning while retaining a mechanism for safe
intervention.

\clearpage
\subsection*{AI use statement}

Generative AI tools were used to improve the readability of the manuscript, create or modify scientific figures, and
edit code. The authors carefully reviewed all AI-assisted text, figures, and code, checked them against the underlying
research and experimental results, and take full responsibility for the final content of the paper, including any
AI-assisted material.

\subsection*{Reproducibility statement}

We provide the EqC formulation and theoretical derivations in the main text and Appendix~\ref{app:theory}. The
appendices document model and training configurations, benchmark protocols, evaluation splits, failure-detection
calibration, real-robot tasks, and baseline implementations (Appendices~\ref{app:evaluation-details},
\ref{app:competence-awareness-details}, and~\ref{app:real-robot-details}). The code for the EqC implementations and
evaluation procedures is available at the anonymous repository linked in the Introduction and
Appendix~\ref{app:code-availability}.

\bibliographystyle{iclr2027_conference}
\bibliography{iclr2027_conference}

\appendix

\section{Evaluation Details}\label{app:evaluation-details}

\begin{table}[ht]
  \caption{Evaluation coverage and rollout budgets.}
  \label{tab:protocols}
  \centering
  \tabstyle
  \setlength{\tabcolsep}{4pt}
  \begin{tabularx}{\columnwidth}{@{}c>{\centering\arraybackslash}Xc@{}}
    \toprule
    \headrow Benchmark & Coverage & Episodes \\
    \midrule
    LIBERO--DiT & 4 suites; 40 tasks & 50/task \\
    LIBERO--VLA & 4 suites; 40 tasks & 50/task \\
    RoboTwin 2.0 & 5 tasks; Easy and Hard & 100/task/setting \\
    VLABench & Tracks 1, 2, 3, 4, and 6; 50 tasks & 50/task \\
    LIBERO-Plus & 7 perturbation families & 10030 total \\
    \bottomrule
  \end{tabularx}
\end{table}
\FloatBarrier

\subsection{Implementation Protocol}\label{app:implementation-protocol}

Both implementations use the convention \(t=0\) for noise and \(t=1\) for data, and query the stationary backbone at model time zero. The scalar energy is reduced independently for each sample over active action coordinates, \(E_\theta(x,z)=\sum_{i\in\mathcal A}x_i f_{\theta,i}(x,z)\); the VLA mask excludes padded embodiment dimensions. Training differentiates this scalar with respect to the complete action chunk and retains the higher-order graph needed to update \(\theta\). Inference performs gradient ascent and detaches the action between refinement steps. The reported models use \(c(t)=\min\{4,20(1-t)\}\), with \(C(t)\) evaluated analytically; the analysis allows any schedule satisfying the stated integrability conditions.

\subsection{Benchmarks}\label{app:benchmarks}

\paragraph{LIBERO.}
We evaluate the DiT policy on the LIBERO simulation benchmark,
which contains four task suites---Goal, Spatial, Object, and Long---with
ten tasks per suite. These suites evaluate different aspects of
language-conditioned robot manipulation, including changes in task goals,
spatial configurations, object categories, and long-horizon compositional
tasks. We use the shared DiT architecture and data pipeline
described in Section~\ref{experiments}.

\paragraph{RoboTwin 2.0.}
We evaluate the VLA policy on RoboTwin 2.0, a benchmark for bimanual
robot manipulation under both clean and domain-randomized conditions.
We consider five representative tasks: Adjust Bottle, Beat Block Hammer,
Blocks Ranking (Size), Click Alarmclock, and Click Bell. Each task is
evaluated under the Easy and Hard settings. The Easy setting uses clean
environments, whereas the Hard setting introduces additional domain
randomization. We report task-wise success rates and average performance
across the five tasks for both settings. All results are averaged over
100 evaluation episodes per task and setting using the benchmark's
success criteria.

\paragraph{VLABench.}
We follow the current official VLA evaluation protocol and report the five tracks with released standardized episode
configurations: in-distribution task learning (T1), cross-category and object generalization (T2), commonsense
understanding (T3), semantic-instruction understanding (T4), and unseen background or table textures (T6). Each task is
evaluated over 50 episodes, and we report track-wise success rates and their average. Cross-task skill transfer (T5) is
part of the broader VLABench design, but the official protocol defines it as an open setting in which users choose their
own training and evaluation tasks; it does not provide a fixed task split or standardized episode configuration.
We therefore omit T5 and use the publicly reproducible five-track configuration rather than introducing a
study-specific cross-task split.

\paragraph{LIBERO-Plus.}
We evaluate robustness under controlled distribution shifts using
LIBERO-Plus, which extends the original LIBERO tasks with perturbations
that preserve the underlying task semantics. We consider seven
perturbation families: camera viewpoint, robot initialization, language,
illumination, background, sensor noise, and object layout. We report
success rates for each perturbation family and the corresponding
robustness aggregate. 

\subsection{DiT Details}
\label{app:dit-training-details}

For LIBERO experiments with the Diffusion Transformer (DiT), we encode visual observations using a pretrained DINOv2-base (\texttt{dinov2-base}) model and language instructions using T5-base (\texttt{t5-base}), with an observation horizon of $T_o = 1$. The policy predicts continuous action chunks with horizon $T_a = 16$ and $D_a = 7$ control dimensions ($D = 112$), executing the first 8 steps under receding-horizon control.

\paragraph{Training.}
We train all DiT models for 70k steps with a batch size of 256 in \texttt{bf16-mixed} precision. Training uses the AdamW optimizer with cosine learning-rate decay, a peak learning rate of $1\times 10^{-4}$, 5,000 linear warmup steps, and weight decay of $1\times 10^{-4}$.

\subsection{Multi-Seed DiT Results}\label{app:dit-multiseed}

Table~\ref{tab:dit-multiseed-detail} reports the LIBERO results obtained across multiple EqC training seeds. The final
row gives the mean and sample standard deviation for each suite. Unlike the published baseline values retained in the
main comparison, Table~\ref{tab:dit-multiseed-summary} reports our independent multi-seed reproductions of the matched
rectified-flow and GeCO objectives alongside EqC, without listing individual baseline runs.

\begin{table}[ht]
  \caption{Detailed EqC success rates (\%) across multiple training seeds on LIBERO.}
  \label{tab:dit-multiseed-detail}
  \centering
  \small
  \setlength{\tabcolsep}{5pt}
  \renewcommand{\arraystretch}{1.10}
  \begin{tabular}{@{}lccccc@{}}
    \toprule
    \headrow Seed & Goal & Spatial & Object & Long & Avg. \\
    \midrule
    0 & 95.6 & 97.2 & 98.6 & 85.4 & 94.3 \\
    1 & 96.0 & 97.0 & 98.4 & 87.0 & 94.7 \\
    2 & 95.0 & 96.0 & 97.6 & 85.4 & 93.5 \\
    3 & 95.8 & 95.2 & 97.8 & 84.2 & 93.3 \\
    \midrule
    \eecrow Mean $\pm$ sample s.d.
      & $95.60 \pm 0.43$ & $96.35 \pm 0.93$ & $98.10 \pm 0.48$ & $85.50 \pm 1.15$ & $93.95 \pm 0.66$ \\
    \bottomrule
  \end{tabular}
\end{table}

\begin{table}[ht]
  \caption{Overall LIBERO success across multiple training seeds.}
  \label{tab:dit-multiseed-summary}
  \centering
  \small
  \setlength{\tabcolsep}{12pt}
  \renewcommand{\arraystretch}{1.10}
  \begin{tabular}{@{}lcc@{}}
    \toprule
    \headrow Method & Mean (\%) & Sample s.d. (\%) \\
    \midrule
    Fixed RF, 20 steps & 89.12 & 0.80 \\
    GeCO & 92.30 & 1.02 \\
    \eecrow \eqcmethod & \best{93.95} & \best{0.66} \\
    \bottomrule
  \end{tabular}
\end{table}

\subsection{VLA Training Details}
\label{app:vla-training-details}

We fine-tune $\pi_0$ and $\pi_{0.5}$ VLAs from OpenPI base checkpoints on LeRobot-format datasets using the AdamW optimizer with gradient clipping (\texttt{clip\_gradient\_norm=1.0}) and a cosine learning-rate schedule (10k warmup steps and a peak learning rate of $5\times 10^{-5}$). Models are trained for 50k steps on LIBERO and 30k steps on RoboTwin 2.0 and VLABench.

\paragraph{RoboTwin 2.0.}
We fine-tune a task-specific $\pi_0$ model for each of the five bimanual tasks (batch size 32, 30k steps). Inputs comprise three RGB camera views (\texttt{cam\_high}, \texttt{cam\_left\_wrist}, \texttt{cam\_right\_wrist}) and robot proprioception, predicting action chunks of horizon $H=50$ with 14 active control dimensions ($D = 700$).

\paragraph{LIBERO and LIBERO-Plus.}
For LIBERO, we fine-tune $\pi_0$ on the \texttt{physical-intelligence/libero} dataset for 50k steps with batch size 32. For LIBERO-Plus, we directly evaluate the LIBERO-trained checkpoint in a zero-shot manner across all seven perturbation suites without any benchmark-specific adaptation.

\paragraph{VLABench.}
For VLABench, models are fine-tuned for 30k steps on \texttt{vlabench/vlabench\_primitive\_ft\_lerobot} across five evaluation tracks, using a batch size of 256 for $\pi_0$ and 32 for $\pi_{0.5}$.

\subsection{Supplementary LIBERO Results for VLA Policies}\label{app:vla-libero}

We additionally evaluate EqC with VLA policies ($\pi_0$) on LIBERO to examine whether the formulation transfers to large multimodal action models. Table~\ref{tab:libero-vla-app} reports the full comparison across the four LIBERO task suites.

\begin{table}[ht]
  \caption{LIBERO success rates (\%) with the VLA backbone.}
  \label{tab:libero-vla-app}
  \centering
  \scriptsize
  \setlength{\tabcolsep}{5pt}
  \renewcommand{\arraystretch}{1.12}
  \begin{tabular}{@{}*{6}{c}@{}}
    \toprule
    \headrow Method & Goal & Spatial & Object & Long & Avg. \\
    \midrule
    $\pi_0$ + FAST~\citep{pertsch2025fast} & 88.6 & 96.4 & 96.8 & 60.2 & 85.5 \\
    $\pi_0$ & 95.8 & 96.8 & \best{98.8} & 85.2 & 94.2 \\
    $\pi_0$ + GeCO & \best{96.4} & 96.8 & 97.0 & 85.4 & 93.9 \\
    \addlinespace[1pt]
    \eecrow $\pi_0$ + \eqcmethod & 94.6 & \best{97.6} & \best{98.8} & \best{86.2} & \best{94.3} \\
    \bottomrule
  \end{tabular}
\end{table}

EqC achieves a 94.3\% aggregate success rate with the VLA policy, together with strong results on Spatial, Object, and
Long. Combined with the DiT experiments, these results demonstrate the applicability of the equilibrium-energy
formulation across both policy families. Sections~\ref{competence-evaluation-and-ablations}
and~\ref{lifelong-learning} further evaluate the introspective competence signal enabled by the VLA instantiation and
its use for lifelong learning.

\section{Competence Awareness Details}\label{app:competence-awareness-details}

\subsection{Task Definition: Online Failure Detection}\label{app:failure-detection-definition}

Our goal is online failure detection rather than detecting visual novelty or distribution shift. In robot manipulation, environmental shifts do not necessarily imply execution failure: an adaptable policy may handle unfamiliar scenes autonomously, where takeover is unnecessary and disruptive. Conversely, executions under nominal setups frequently fail due to contact slip, occlusions, or compounding control error. Intervention should therefore depend on whether the policy is actually failing the task, rather than whether the current observation looks unfamiliar.

Ground-truth labels are determined strictly by task outcomes: successful rollouts are labeled negative (autonomous completion, no intervention needed), while failed rollouts are labeled positive (intervention required). This formulation avoids penalizing successful generalization: standard novelty detectors flag unfamiliar yet competent executions as anomalies, prompting unnecessary human takeover. The reported metrics evaluate the ability to anticipate and catch execution failures rather than measuring observation density. Labels remain tied to final task success rather than the scene configuration.

\subsection{Simulation Failure Detection Protocol}
\label{app:competence-awareness-details-setup}
\label{app:competence-awareness-split}

We evaluate competence-aware failure detection using three methodological configurations:

\begin{enumerate}
    \item \textbf{EqC (ours):} Our primary approach quantifies policy competence using the energy-consistency signal $U_{\Delta E}$ derived from the EqC-adapted $\pi_0$ policy. As introduced in Section~\ref{equilibrium-energy-consistency-uncertainty}, this score projects each post-update action onto the noise-to-equilibrium chord using the floored denominator $D_v = \max(\|x^{(K)} - x^{(0)}\|_2^2, 10^{-12})$ and averages the absolute energy-consistency residuals according to~\eqref{eq:energy-consistency-score}.
    \item \textbf{Diff-DAgger (baseline):} The standard baseline using the original $\pi_0$ policy paired with Diff-DAgger's multi-pass flow-loss detector~\citep{lee2025diff}. Implementation details are provided in Section~\ref{app:competence-awareness-diffdagger-baseline-method}.
    \item \textbf{EqC + Diff-DAgger (hybrid):} A hybrid ablation variant that deploys our EqC-adapted $\pi_0$ policy for action generation while computing the failure-detection signal using Diff-DAgger's re-noising and denoising flow-loss scoring mechanism.
\end{enumerate}

All three methods are evaluated on the LIBERO and LIBERO-Plus benchmark suites using 50k-step training checkpoints. A
failure alarm is triggered when the detection signal exceeds a predefined threshold for $m$ consecutive replanning
steps.

The statistical unit of the analysis is a complete rollout rather than an individual replanning step. Before sampling either split, we stratify the evaluation pool across the $8$ dataset variants (the original LIBERO and the $7$ LIBERO-Plus perturbation families). Within every stratum, rollouts are assigned proportionally to a $10\%$ calibration split and a $90\%$ test split, ensuring that neither split is dominated by any particular task or perturbation type. A rollout is treated as an indivisible unit: its initial-state instance, outcome label, and complete sequence of replanning signals are assigned strictly to one split. No rollout or trajectory segment is shared across calibration and testing, and all compared detectors evaluate on the exact same split membership. Further details regarding the composition of the calibration split are provided in Section~\ref{app:calibration-split-composition}.

Policy checkpoints are fixed before detector calibration and remain unchanged throughout the analysis. The calibration split is used only to select the temporal window $m$ and operating threshold $\tau^*$; it is not used for checkpoint selection or detector adaptation. Once selected, $(m,\tau^*)$ is frozen and applied once to the
held-out test rollouts. Test outcomes and signal trajectories therefore cannot influence the alarm rule. By preserving
the task, perturbation, and outcome composition across the two splits, the protocol evaluates rollout-level failure
discrimination without confounding the comparison through a different benchmark mixture.

Each rollout produces a time-varying sequence of failure-detection signals. For a fixed window size $m$ and threshold
$\tau$, a rollout raises an alarm if its signal exceeds $\tau$ for any sequence of $m$ consecutive replanning steps;
otherwise, no alarm is raised. We then compute TPR and FPR from rollout-level success and failure labels and construct
the ROC curve by sweeping $\tau$. To examine whether performance remains stable across calibration procedures rather
than being tied to a particular threshold-selection rule, we evaluate two calibration strategies on the calibration
split:

\begin{enumerate}
  \item \textbf{FPR-Constrained Calibration:} As reported in the main text (Section~\ref{simulated-rollout-analysis}), all candidate methods are first evaluated on the $10\%$ calibration split. By performing a fine-grained sweep over candidate thresholds logged from these rollouts, we fix a given window size $m$ and select the optimal threshold $\tau^*$ that maximizes TPR while strictly constraining the FPR to approximately $10\%$. This calibrated parameter pair $(m, \tau^*)$ is then frozen and applied to evaluate the remaining $90\%$ test split.
  \item \textbf{Youden's $J$ Index Calibration:} Alternatively, for a given sliding window size $m$, the optimal threshold $\tau^*$ is selected on the $10\%$ calibration split by maximizing Youden's $J$ statistic ($J = \text{TPR} - \text{FPR}$). The calibrated threshold pair $(m, \tau^*)$ is subsequently deployed on the $90\%$ test split to trigger failure alarms.
\end{enumerate}

Across both FPR-constrained calibration and Youden's $J$ index calibration in Table~\ref{tab:window-size-calibration-comparison}, EqC consistently achieves the highest AUROC and delivers the most favorable overall trade-off among TPR, FPR, and Accuracy across the vast majority of window sizes $m$. This broader advantage indicates that our competence signal provides robust failure detection without depending on a specific threshold-selection rule.

AUROC is independent of the operating threshold $\tau$, so we report a single global ranking statistic obtained by sweeping $\tau$ over all labeled rollouts rather than a separate AUROC for each calibration rule. This descriptive quantity is not used to select $(m,\tau^*)$ or to compute the threshold-dependent operating metrics, which retain the calibration/test separation above. Regarding the time-saved metric, when an alarm is raised at elapsed horizon $h_{\mathrm{alarm}}$ in an episode capped at $H_{\max}$, the percentage of saved inference time is calculated as $100(1-h_{\mathrm{alarm}}/H_{\max})\%$.

\subsection{Real-World Failure Detection Protocol}
\label{app:real-world-failure-detection-protocol}

The real-world testbed evaluates the intermediate Round 3 policy checkpoint fine-tuned on $\mathcal{D}_3$ (36 cumulative demonstrations) on \textit{Hierarchical Block Stacking}. Across nominal block configurations , we collect 50 rollouts comprising 25 successful and 25 failed executions, with binary ground-truth labels assigned strictly from the final task outcome. To reduce short-timescale measurement noise, each detection signal is smoothed with a three-step moving average: starting from the third replanning step, the current value is averaged with the two preceding values. An alarm is raised when the smoothed signal remains above its threshold for $m$ consecutive replanning steps. Appendix~\ref{app:real-robot-details} provides the platform, task, and rollout-collection details.

\subsection{Composition of the Calibration Split}\label{app:calibration-split-composition}

The shared $10\%$ calibration split comprises $1{,}203$ rollouts in total. To ensure balanced coverage across environmental conditions, we perform stratified sampling across the $8$ dataset variants (the original LIBERO and $7$ LIBERO-Plus perturbation types). Within each dataset variant, rollouts are selected purely at random and shared identically across all evaluated methods (though ground-truth outcome labels depend on each policy's execution). As detailed in Table~\ref{tab:calibration-split-breakdown}, because rollouts are randomly sampled within each perturbation family, the distribution across the four suites naturally yields a nearly uniform breakdown ($\approx 25\%$ per suite) without requiring explicit suite-level stratification.

\begin{table}[htbp]
  \caption{Detailed composition of the shared $10\%$ calibration split ($1{,}203$ total rollouts).}
  \label{tab:calibration-split-breakdown}
  \centering
  \small
  \begin{subtable}[t]{0.48\linewidth}
    \centering
    \caption{Breakdown by Perturbation Variant}
    \label{tab:calibration-by-dataset}
    \begin{tabular}{@{} l c r @{}}
      \toprule
      Dataset & Count & Ratio (\%) \\
      \midrule
      Original LIBERO & 200 & 16.63 \\
      Camera          & 160 & 13.30 \\
      Noise           & 160 & 13.30 \\
      Robot           & 155 & 12.88 \\
      Language        & 154 & 12.80 \\
      Layout          & 152 & 12.64 \\
      Light           & 114 & 9.48 \\
      Background      & 108 & 8.98 \\
      \bottomrule
    \end{tabular}
  \end{subtable}
  \hfill
  \begin{subtable}[t]{0.48\linewidth}
    \centering
    \caption{Breakdown by Benchmark Suite}
    \label{tab:calibration-by-suite}
    \begin{tabular}{@{} l c r @{}}
      \toprule
      Suite & Count & Ratio (\%) \\
      \midrule
      LIBERO-Goal    & 307 & 25.52 \\
      LIBERO-Object  & 302 & 25.10 \\
      LIBERO-Spatial & 299 & 24.85 \\
      LIBERO-10      & 295 & 24.52 \\
      \bottomrule
    \end{tabular}
  \end{subtable}
\end{table}

\subsection{Diff-DAgger Baseline Implementation Adapted for \texorpdfstring{$\pi_0$}{pi0}}\label{app:competence-awareness-diffdagger-baseline-method}

To adapt Diff-DAgger as a failure-detection baseline for the $\pi_0$ backbone, we use $\pi_0$'s conditional flow-matching formulation. The detector evaluates whether an action sequence generated by $\pi_0$ is consistent with the flow-matching vector field learned during training. Because this inference-only procedure does not modify the original $\pi_0$ training objective, it preserves the $\pi_0$ architecture, including the PaliGemma vision-language backbone, action expert, observation preprocessing, and action projection head. The detector is attached directly to the original Euler sampler.

Given an observation $o$, $\pi_0$ first generates a predicted action sequence $\hat{a}$ through its standard Euler sampler:
\begin{equation}
    \hat{a} = S_{\theta}(o, \epsilon_0),
\end{equation}
where $\epsilon_0 \sim \mathcal{N}(0, I)$ is an initial Gaussian noise sample. The generated action sequence $\hat{a}$ is then treated as a pseudo-label, so the detector requires no expert demonstrations or additional supervision at test time.

For each Monte Carlo sample $q \in \{1, \dots, M\}$, we independently sample a flow-matching timestep $t_q$ and Gaussian noise $\epsilon_q$:
\begin{equation}
    t_q \sim \operatorname{Beta}(1.5, 1), \qquad \epsilon_q \sim \mathcal{N}(0, I).
\end{equation}
Following $\pi_0$'s standard training formulation, we construct a perturbed version of the generated action $x_{t_q}$:
\begin{equation}
    x_{t_q} = t_q \epsilon_q + (1 - t_q) \hat{a},
\end{equation}
with the corresponding target velocity defined as
\begin{equation}
    u_{t_q} = \epsilon_q - \hat{a}.
\end{equation}
The $\pi_0$ model is then queried using the original observation $o$, the perturbed action $x_{t_q}$, and the sampled timestep $t_q$, producing the predicted velocity vector:
\begin{equation}
    v_\theta(x_{t_q}, t_q, o).
\end{equation}

The Diff-DAgger score for the $q$-th Monte Carlo sample is the normalized flow-matching error
\begin{equation}
    \ell_q = \frac{1}{H D_{\mathrm{act}}} \sum_{h=1}^{H} \sum_{d=1}^{D_{\mathrm{act}}} m_d \left[ v_\theta(x_{t_q}, t_q, o)_{h,d} - u_{t_q,h,d} \right]^2,
\end{equation}
where $H$ represents the action horizon, $D_{\mathrm{act}}$ denotes the number of active action dimensions, and $m_d \in \{0, 1\}$ is an action-space mask. This mask is essential for $\pi_0$ as its action representation includes zero-padded dimensions, which are excluded from both the pseudo-label construction and loss evaluation.

The final per-query failure score is computed as the averaged flow loss over $M$ Monte Carlo evaluations:
\begin{equation}
    \mathcal{L}_{\mathrm{flow}} = \frac{1}{M} \sum_{q=1}^{M} \ell_q.
\end{equation}
We reserve $K$ for the number of EqC refinement iterates and use $M$ for the number of Diff-DAgger Monte Carlo samples; the default EqC and Diff-DAgger configurations use $K=10$ and $M=10$, respectively.

\subsection{Detailed Results}\label{app:competence-awareness-detailed-results}

As presented in Table~\ref{tab:window-size-calibration-comparison} and Table~\ref{tab:time-saved-calibration-comparison}, we evaluate failure detection performance and early intervention capability across sliding window sizes $m \in [3, 13]$ under both calibration protocols. Note that AUROC reflects the threshold-independent detection capacity evaluated globally across all dataset rollouts (and is thus independent of split partitions and identical across both subtables for any given $m$), whereas TPR, FPR, Accuracy, and Time Saved reflect operational performance on the $90\%$ test split using operating thresholds calibrated from the $10\%$ calibration split.

A comprehensive comparative analysis highlights the clear advantages of EqC across window sizes, operating metrics, and thresholding strategies:

\begin{itemize}
    \item \textbf{Dominance in Theoretical Detection Capacity (AUROC):} Across all evaluated window sizes $m \in [3, 13]$, EqC consistently achieves the highest AUROC values among all compared methods, ascending monotonically from $0.8877$ ($m=3$) to $0.9300$ ($m=13$). It maintains a decisive lead over both the baseline Diff-DAgger ($0.8296\text{--}0.8576$) and the hybrid variant EqC + Diff-DAgger ($0.8778\text{--}0.8978$), confirming that energy-derived competence signals provide an inherently stronger and more reliable failure detection capability.
    
    \item \textbf{Superiority under FPR-Constrained Calibration:} Under strict false-alarm control (Table~\ref{tab:fpr-constrained-sweep}, target $\text{FPR} \approx 10\%$), EqC significantly outperforms Diff-DAgger in both TPR and overall Accuracy across almost all $m$. For example, at $m=13$, EqC reaches $78.9\%$ TPR and $87.4\%$ Accuracy, compared to Diff-DAgger's $63.8\%$ TPR and $83.1\%$ Accuracy. Furthermore, EqC maintains a lead over the hybrid variant across the majority of window sizes, demonstrating higher TPR.
    
    \item \textbf{Superiority under Youden's $J$ Calibration:} When optimizing the decision threshold via Youden's $J$ index ($J = \text{TPR} - \text{FPR}$, Table~\ref{tab:youden-j-sweep}), EqC consistently achieves higher $J$ statistic values, yielding a distinctly superior trade-off between TPR and FPR control. Under this objective, EqC attains higher peak TPR (reaching up to $87.5\%$ at $m=12$) while securing the highest overall classification Accuracy across the vast majority of window sizes (peaking at $86.4\%$ at $m=11$). This confirms that EqC's operational superiority stems from a fundamentally better signal trade-off profile rather than an artifact of a specific calibration rule.
    
    \item \textbf{Responsiveness in Early Intervention (Time Saved):} Beyond accurately classifying failures, a practical failure-detection system must intervene timely to allow trajectory recovery. Table~\ref{tab:time-saved-calibration-comparison} reports the execution time saved specifically on failed rollouts within the $90\%$ test split. Across both calibration protocols, EqC achieves the highest overall time saved across almost all window sizes $m$, consistently outperforming the baseline Diff-DAgger and surpassing the hybrid variant (EqC + Diff-DAgger) in the vast majority of cases. While the hybrid model also improves upon pure Diff-DAgger—further confirming the value of incorporating energy-based information—EqC provides the most responsive signal overall, timely avoiding rollouts from reaching an irrecoverable failure state.
\end{itemize}

Overall, these results confirm that EqC provides a robust, scalable, and highly responsive failure-detection signal regardless of the choice of threshold calibration, window size, or performance metric.

\begin{table}[htbp]
  \centering
  \small
  \caption{Sensitivity analysis of detection window size $m$ under two distinct threshold calibration strategies on the 90\% test split.}
  \label{tab:window-size-calibration-comparison}

  \begin{subtable}{\textwidth}
    \centering
    \caption{FPR-Constrained Calibration (Target $\text{FPR} \approx 10\%$ on calibration split)}
    \label{tab:fpr-constrained-sweep}
    \setlength{\tabcolsep}{3.5pt}
    \renewcommand{\arraystretch}{1.05}
    \begin{tabular}{@{} c ccc ccc ccc @{}}
      \toprule
      & \multicolumn{3}{c}{Diff-DAgger} & \multicolumn{3}{c}{EqC + Diff-DAgger} & \multicolumn{3}{c}{\eqcmethod} \\
      \cmidrule(lr){2-4} \cmidrule(lr){5-7} \cmidrule(lr){8-10}
      $m$ & AUROC & TPR / FPR (\%) & Acc (\%) & AUROC & TPR / FPR (\%) & Acc (\%) & AUROC & TPR / FPR (\%) & Acc (\%) \\
      \midrule
      3  & 0.8296 & 58.1 / 9.1 & 82.1 & 0.8778 & 71.9 / 9.9 & 84.8 & 0.8877 & 69.5 / 9.1 & 84.6 \\
      4  & 0.8307 & 56.3 / 8.3 & 82.2 & 0.8774 & 70.4 / 8.8 & 85.1 & 0.8947 & 70.8 / 8.9 & 85.1 \\
      5  & 0.8310 & 57.1 / 8.6 & 82.2 & 0.8802 & 70.9 / 9.3 & 84.9 & 0.9005 & 70.7 / 7.9 & 85.8 \\
      6  & 0.8328 & 55.7 / 8.1 & 82.2 & 0.8810 & 71.2 / 9.2 & 85.1 & 0.9070 & 74.7 / 9.0 & 86.2 \\
      7  & 0.8343 & 58.3 / 9.2 & 82.1 & 0.8818 & 71.7 / 8.9 & 85.4 & 0.9117 & 74.8 / 9.1 & 86.1 \\
      8  & 0.8345 & 59.8 / 9.9 & 82.0 & 0.8829 & 72.4 / 9.5 & 85.2 & 0.9147 & 76.1 / 9.3 & 86.3 \\
      9  & 0.8352 & 61.5 / 11.0 & 81.6 & 0.8849 & 73.5 / 9.8 & 85.3 & 0.9183 & 76.7 / 8.9 & 86.9 \\
      10 & 0.8391 & 60.3 / 10.1 & 82.0 & 0.8862 & 72.5 / 9.1 & 85.5 & 0.9222 & 77.7 / 8.9 & 87.1 \\
      11 & 0.8430 & 61.3 / 10.2 & 82.2 & 0.8888 & 72.8 / 8.8 & 85.8 & 0.9244 & 77.9 / 8.5 & 87.5 \\
      12 & 0.8517 & 62.3 / 9.2 & 83.2 & 0.8936 & 74.6 / 10.4 & 85.2 & 0.9274 & 78.4 / 8.4 & 87.6 \\
      13 & 0.8576 & 63.8 / 9.9 & 83.1 & 0.8978 & 74.0 / 9.2 & 85.9 & 0.9300 & 78.9 / 9.0 & 87.4 \\
      \bottomrule
    \end{tabular}
  \end{subtable}

  \vspace{10pt}

  \begin{subtable}{\textwidth}
    \centering
    \caption{Youden's $J$ Index Calibration (Maximizing $J = \text{TPR} - \text{FPR}$ on calibration split)}
    \label{tab:youden-j-sweep}
    \setlength{\tabcolsep}{3.5pt}
    \renewcommand{\arraystretch}{1.05}
    \begin{tabular}{@{} c ccc ccc ccc @{}}
      \toprule
      & \multicolumn{3}{c}{Diff-DAgger} & \multicolumn{3}{c}{EqC + Diff-DAgger} & \multicolumn{3}{c}{\eqcmethod} \\
      \cmidrule(lr){2-4} \cmidrule(lr){5-7} \cmidrule(lr){8-10}
      $m$ & AUROC & TPR / FPR (\%) & Acc (\%) & AUROC & TPR / FPR (\%) & Acc (\%) & AUROC & TPR / FPR (\%) & Acc (\%) \\
      \midrule
      3  & 0.8296 & 73.9 / 20.8 & 77.8 & 0.8778 & 78.3 / 16.9 & 81.7 & 0.8877 & 78.1 / 14.9 & 83.0 \\
      4  & 0.8307 & 65.2 / 13.3 & 80.9 & 0.8774 & 80.6 / 19.5 & 80.5 & 0.8947 & 82.4 / 20.0 & 80.7 \\
      5  & 0.8310 & 76.0 / 24.4 & 75.7 & 0.8802 & 79.2 / 16.8 & 82.0 & 0.9005 & 82.1 / 17.5 & 82.4 \\
      6  & 0.8328 & 73.0 / 19.9 & 78.2 & 0.8810 & 80.0 / 18.1 & 81.4 & 0.9070 & 79.4 / 13.0 & 84.7 \\
      7  & 0.8343 & 75.0 / 23.2 & 76.3 & 0.8818 & 78.6 / 15.9 & 82.5 & 0.9117 & 85.3 / 20.2 & 81.4 \\
      8  & 0.8345 & 70.4 / 18.9 & 78.2 & 0.8829 & 76.9 / 13.7 & 83.6 & 0.9147 & 81.9 / 14.5 & 84.4 \\
      9  & 0.8352 & 65.5 / 14.5 & 80.2 & 0.8849 & 73.8 / 10.1 & 85.2 & 0.9183 & 83.8 / 15.5 & 84.3 \\
      10 & 0.8391 & 71.3 / 17.6 & 79.4 & 0.8862 & 80.8 / 18.0 & 81.7 & 0.9222 & 84.0 / 14.7 & 85.0 \\
      11 & 0.8430 & 70.3 / 16.4 & 80.1 & 0.8888 & 78.1 / 14.2 & 83.6 & 0.9244 & 82.1 / 11.8 & 86.4 \\
      12 & 0.8517 & 65.1 / 11.0 & 82.6 & 0.8936 & 77.5 / 12.7 & 84.4 & 0.9274 & 87.5 / 18.3 & 83.4 \\
      13 & 0.8576 & 69.0 / 13.6 & 81.7 & 0.8978 & 74.6 / 9.6 & 85.8 & 0.9300 & 85.5 / 14.5 & 85.5 \\
      \bottomrule
    \end{tabular}
  \end{subtable}
\end{table}

\begin{table}[htbp]
  \centering
  \small
  \caption{Evaluation of early intervention capability (Time Saved \%) on the 90\% test split. The operational decision thresholds are calibrated using the 10\% calibration split under FPR-Constrained and Youden's $J$ index strategies.}
  \label{tab:time-saved-calibration-comparison}
  \setlength{\tabcolsep}{8pt}
  \renewcommand{\arraystretch}{1.05}
  \begin{tabular}{@{} c ccc ccc @{}}
    \toprule
    & \multicolumn{3}{c}{FPR-Constrained Calibration} & \multicolumn{3}{c}{Youden's $J$ Index Calibration} \\
    \cmidrule(lr){2-4} \cmidrule(lr){5-7}
    $m$ & Diff-DAgger & EqC + Diff & \eqcmethod & Diff-DAgger & EqC + Diff & \eqcmethod \\
    \midrule
    3  & 33.6 & 44.5 & 45.2 & 47.9 & 51.8 & 53.7 \\
    4  & 31.8 & 42.5 & 45.3 & 39.2 & 53.8 & 57.4 \\
    5  & 32.1 & 42.7 & 44.5 & 49.4 & 51.8 & 55.5 \\
    6  & 30.6 & 42.6 & 46.7 & 45.2 & 52.2 & 51.5 \\
    7  & 32.0 & 42.8 & 46.6 & 47.0 & 50.2 & 57.1 \\
    8  & 32.5 & 43.0 & 46.3 & 42.4 & 47.9 & 52.2 \\
    9  & 33.9 & 43.5 & 46.2 & 37.4 & 43.8 & 53.7 \\
    10 & 32.7 & 42.4 & 46.0 & 41.8 & 51.0 & 53.0 \\
    11 & 33.3 & 42.1 & 45.5 & 40.4 & 47.9 & 49.8 \\
    12 & 33.3 & 43.8 & 45.3 & 35.4 & 46.4 & 54.8 \\
    13 & 34.3 & 42.4 & 45.3 & 38.6 & 42.9 & 51.5 \\
    \bottomrule
  \end{tabular}
\end{table}

To further establish the theoretical upper bound on detection performance independent of split partitioning, Table~\ref{tab:full-m-k-experiments} presents an evaluation conducted directly on the entire dataset (without calibration/test splitting). In this setup, AUROC is calculated by sweeping the threshold globally, while TPR, FPR, and Accuracy are reported using the theoretical optimal threshold determined by maximizing Youden's $J$ index over the full dataset. Across all evaluated sliding-window sizes $m \in [3, 13]$, EqC consistently achieves the highest AUROC and the best optimal TPR/FPR trade-off compared to the Diff-DAgger baseline ($M=10$) and the hybrid variant. Furthermore, increasing Diff-DAgger's sample count from $M=10$ to $M \in \{16, 32, 64, 128\}$ fails to close this performance gap, indicating that merely scaling Monte Carlo dropout samples does not yield a fundamentally better failure-detection signal.

\begin{table*}[htbp]
  \caption{Evaluation of the theoretical upper bound of online failure detection across window sizes $m \in \{3,\ldots,13\}$ under Youden's optimal thresholding strategy (Best Youden). \textit{Top:} Comparison among the baseline Diff-DAgger, the EqC + Diff-DAgger hybrid, and our proposed EqC; EqC and Diff-DAgger use their default $K=10$ and $M=10$ samples, respectively. \textit{Bottom:} Performance of Diff-DAgger when scaling $M \in \{16, 32, 64, 128\}$. TPR and FPR are reported as percentages at Youden's optimal threshold.}
  \label{tab:full-m-k-experiments}
  \centering
  \small
  \textbf{(a) Primary Method Comparison}\\[0.4em]
  \setlength{\tabcolsep}{3.5pt}
  \begin{tabular}{@{} c ccc ccc ccc @{}}
    \toprule
    & \multicolumn{3}{c}{Diff-DAgger (baseline)} & \multicolumn{3}{c}{EqC + Diff-DAgger} & \multicolumn{3}{c}{\eqcmethod} \\
    \cmidrule(lr){2-4} \cmidrule(lr){5-7} \cmidrule(lr){8-10}
    $m$ & AUROC & TPR / FPR (\%) & Acc (\%) & AUROC & TPR / FPR (\%) & Acc (\%) & AUROC & TPR / FPR (\%) & Acc (\%) \\
    \midrule
    3  & 0.8296 & 72.9 / 19.4 & 78.5 & 0.8778 & 73.8 / 11.5 & \textbf{84.2} & \textbf{0.8877} & 78.0 / 15.0 & 82.9 \\
    4  & 0.8307 & 72.3 / 18.9 & 78.7 & 0.8774 & 76.7 / 14.8 & 82.7 & \textbf{0.8947} & 79.2 / 15.3 & \textbf{83.0} \\
    5  & 0.8310 & 67.8 / 15.1 & 80.3 & 0.8802 & 75.7 / 13.4 & \textbf{83.4} & \textbf{0.9005} & 81.1 / 15.9 & 83.2 \\
    6  & 0.8328 & 74.0 / 20.6 & 78.0 & 0.8810 & 75.4 / 12.6 & 83.9 & \textbf{0.9070} & 79.2 / 12.7 & \textbf{84.9} \\
    7  & 0.8343 & 69.5 / 16.0 & 80.1 & 0.8818 & 76.3 / 13.5 & 83.5 & \textbf{0.9117} & 78.2 / 11.7 & \textbf{85.3} \\
    8  & 0.8345 & 69.1 / 16.5 & 79.7 & 0.8829 & 76.6 / 13.3 & 83.7 & \textbf{0.9147} & 82.2 / 14.8 & \textbf{84.3} \\
    9  & 0.8352 & 72.1 / 19.5 & 78.2 & 0.8849 & 76.5 / 12.9 & 84.0 & \textbf{0.9183} & 83.0 / 14.4 & \textbf{84.8} \\
    10 & 0.8391 & 71.7 / 17.6 & 79.6 & 0.8862 & 78.2 / 14.1 & 83.6 & \textbf{0.9222} & 81.1 / 11.5 & \textbf{86.3} \\
    11 & 0.8430 & 71.8 / 16.9 & 80.1 & 0.8888 & 78.9 / 14.9 & 83.2 & \textbf{0.9244} & 82.7 / 12.5 & \textbf{86.1} \\
    12 & 0.8517 & 73.6 / 17.5 & 80.1 & 0.8936 & 79.1 / 14.3 & 83.8 & \textbf{0.9274} & 82.3 / 11.9 & \textbf{86.4} \\
    13 & 0.8576 & 75.9 / 19.2 & 79.5 & 0.8978 & 77.1 / 11.8 & 85.0 & \textbf{0.9300} & 86.6 / 15.4 & \textbf{85.2} \\
    \bottomrule
  \end{tabular}

  \vspace{1.2em}

  \textbf{(b) Diff-DAgger Scaling Across $M$}\\[0.4em]
  \setlength{\tabcolsep}{4.5pt}
  \begin{tabular}{@{} c cc cc cc cc @{}}
    \toprule
    & \multicolumn{2}{c}{Diff-DAgger ($M=16$)} & \multicolumn{2}{c}{Diff-DAgger ($M=32$)} & \multicolumn{2}{c}{Diff-DAgger ($M=64$)} & \multicolumn{2}{c}{Diff-DAgger ($M=128$)} \\
    \cmidrule(lr){2-3} \cmidrule(lr){4-5} \cmidrule(lr){6-7} \cmidrule(lr){8-9}
    $m$ & AUROC & TPR / FPR (\%) & AUROC & TPR / FPR (\%) & AUROC & TPR / FPR (\%) & AUROC & TPR / FPR (\%) \\
    \midrule
    3  & 0.8266 & 68.0 / 14.6 & 0.8302 & 74.5 / 21.1 & 0.8342 & 71.6 / 16.2 & 0.8264 & 71.3 / 18.2 \\
    4  & 0.8255 & 69.3 / 16.8 & 0.8263 & 70.2 / 17.1 & 0.8315 & 72.4 / 18.1 & 0.8216 & 68.3 / 16.3 \\
    5  & 0.8280 & 71.6 / 18.5 & 0.8250 & 67.9 / 15.4 & 0.8294 & 71.8 / 17.9 & 0.8210 & 69.6 / 18.1 \\
    6  & 0.8270 & 67.2 / 14.6 & 0.8261 & 70.9 / 18.8 & 0.8295 & 68.9 / 15.4 & 0.8218 & 67.4 / 16.1 \\
    7  & 0.8289 & 68.9 / 15.8 & 0.8295 & 71.0 / 18.0 & 0.8289 & 68.8 / 15.7 & 0.8210 & 65.5 / 13.8 \\
    8  & 0.8298 & 69.8 / 16.7 & 0.8297 & 69.2 / 17.2 & 0.8311 & 69.8 / 16.7 & 0.8209 & 66.1 / 14.3 \\
    9  & 0.8304 & 69.6 / 16.2 & 0.8306 & 67.9 / 15.2 & 0.8281 & 69.9 / 17.0 & 0.8203 & 67.9 / 15.8 \\
    10 & 0.8316 & 70.9 / 17.1 & 0.8313 & 69.2 / 16.3 & 0.8292 & 69.2 / 16.4 & 0.8219 & 68.0 / 16.2 \\
    11 & 0.8338 & 68.8 / 14.7 & 0.8355 & 67.5 / 14.2 & 0.8334 & 70.3 / 17.1 & 0.8248 & 67.7 / 15.5 \\
    12 & 0.8389 & 72.1 / 18.1 & 0.8414 & 73.8 / 19.8 & 0.8406 & 70.1 / 16.0 & 0.8317 & 65.6 / 12.6 \\
    13 & 0.8437 & 70.8 / 15.8 & 0.8470 & 70.7 / 15.8 & 0.8481 & 70.6 / 15.9 & 0.8385 & 72.6 / 19.0 \\
    \bottomrule
  \end{tabular}
\end{table*}

Beyond detection accuracy, Table~\ref{tab:time-saved-detailed} provides a detailed breakdown of the theoretical maximum inference time saved across various environmental perturbation suites in LIBERO and LIBERO-Plus. Standalone EqC achieves the highest overall time saved, outperforming both the baseline Diff-DAgger and the hybrid EqC + Diff-DAgger variant across the majority of evaluation setups and representative window sizes $m \in \{4, 11, 13\}$, reaching up to $54.0\%$ in weighted average time saved. Notably, even when scaling Diff-DAgger's sample budget to $M \in \{16, 32\}$, its performance in time saved does not improve, demonstrating that simply increasing trajectory sampling density cannot enhance its failure detection capability. While the hybrid model also demonstrates notable gains over pure Diff-DAgger—further underscoring the benefit of energy-derived features—standalone EqC provides the most effective and responsive signal overall for timely failure intervention.

\begin{table*}[t]
  \caption{Breakdown of theoretically optimal inference time saved (\%) across LIBERO and LIBERO-Plus suites under representative evaluation setups. \textit{Orig.} corresponds to the LIBERO tasks, while the remaining seven columns represent environmental perturbation variants in LIBERO-Plus. Diff-DAgger uses $M=10$ unless otherwise specified.}
  \label{tab:time-saved-detailed}
  \centering
  \tabstyle
  \setlength{\tabcolsep}{1.5pt}
  \begin{tabularx}{\linewidth}{@{} l *{9}{>{\centering\arraybackslash}X} @{}}
    \toprule
    \headrow Method & Orig. & Camera & Robot & Lang. & Light & Backgr. & Noise & Layout & W. Avg. \\
    \midrule
    Diff-DAgger ($m=4$) 
      & 27.7 & 50.5 & 50.1 & 46.4 & 30.0 & 34.0 & 44.1 & 47.9 & 46.1 \\
    Diff-DAgger ($m=11$) 
      & 26.5 & 44.2 & 46.0 & 39.9 & 32.5 & 32.3 & 37.3 & 44.9 & 41.5 \\
    Diff-DAgger ($m=13$) 
      & 28.9 & 47.2 & 48.6 & 44.1 & 33.8 & 34.5 & 43.1 & 46.1 & 44.6 \\
    \midrule
    Diff-DAgger ($M=16, m=4$) 
      & 27.0 & 45.0 & 47.5 & 44.1 & 33.8 & 28.0 & 39.0 & 44.0 & 42.7 \\
    Diff-DAgger ($M=16, m=11$) 
      & 23.5 & 41.3 & 42.4 & 37.4 & 28.5 & 30.2 & 35.1 & 40.6 & 38.4 \\
    Diff-DAgger ($M=16, m=13$) 
      & 25.9 & 42.8 & 43.5 & 39.0 & 30.7 & 30.5 & 37.1 & 41.8 & 39.8 \\
    \midrule
    Diff-DAgger ($M=32, m=4$) 
      & 28.1 & 48.5 & 48.3 & 42.8 & 34.9 & 34.4 & 38.5 & 45.6 & 43.9 \\
    Diff-DAgger ($M=32, m=11$) 
      & 28.3 & 40.4 & 42.9 & 34.8 & 31.1 & 28.7 & 36.1 & 38.9 & 38.0 \\
    Diff-DAgger ($M=32, m=13$) 
      & 28.6 & 42.9 & 44.1 & 37.4 & 32.3 & 31.3 & 38.6 & 40.6 & 40.0 \\
    \midrule
    EqC + Diff-DAgger ($m=4$) 
      & 30.7 & 50.3 & 51.8 & 56.9 & 44.1 & 27.0 & 44.5 & 48.9 & 49.1 \\
    EqC + Diff-DAgger ($m=11$) 
      & 30.3 & 51.4 & 50.8 & 53.7 & 42.6 & 29.1 & 46.3 & 48.3 & 48.5 \\
    EqC + Diff-DAgger ($m=13$) 
      & 26.8 & 48.4 & 47.0 & 50.2 & 40.8 & 26.4 & 43.5 & 45.2 & 45.3 \\
    \midrule
    \eecrow \eqcmethod ($m=4$)
      & 26.7 & 54.8 & 57.3 & 70.7 & 41.7 & 29.0 & 40.1 & 55.4 & \best{54.0} \\
    \eecrow \eqcmethod ($m=11$) 
      & 24.7 & 53.4 & 55.3 & 60.8 & 39.7 & 30.2 & 41.5 & 47.8 & \best{50.5} \\
    \eecrow \eqcmethod ($m=13$) 
      & 26.7 & 56.4 & 58.2 & 61.0 & 40.3 & 32.2 & 46.1 & 49.5 & \best{52.8} \\
    \bottomrule
  \end{tabularx}
\end{table*}

Finally, we evaluate the computational efficiency of EqC by benchmarking per-replan inference latency, encompassing both action candidate generation and failure-score computation at each step. As shown in Table~\ref{tab:inference-latency}, standalone EqC exhibits an inference latency comparable to that of low-budget Diff-DAgger variants. Crucially, as demonstrated in our earlier evaluations, EqC achieves significantly stronger and more reliable failure-detection performance than Diff-DAgger. While increasing Diff-DAgger's sample budget \(M\) monotonically inflates its inference latency, it yields no significant improvement in failure detection capability, rendering larger sample budgets computationally inefficient for real-time introspection. Furthermore, EqC + Diff-DAgger incurs a substantial latency overhead compared to either standalone EqC or Diff-DAgger, proving impractical for physical deployment. Consequently, EqC delivers the optimal balance, offering superior failure detection while maintaining the computational responsiveness required for real-time robotic control.

\begin{table}[htbp]
  \caption{Comparison of per-replan inference latency (ms) across methods. The latency includes both model action generation and failure-detection signal computation (evaluated at batch size = 1 after JAX warm-up). The optimization step budget $K$ for EqC is set to 10.}
  \label{tab:inference-latency}
  \centering
  \small
  \setlength{\tabcolsep}{12pt}
  \begin{tabular}{@{} l c @{}}
    \toprule
    \headrow Method & Mean Latency (ms) \\
    \midrule
    Diff-DAgger ($M=10$) & 79.44 \\
    Diff-DAgger ($M=16$) & 91.43 \\
    Diff-DAgger ($M=32$) & 122.11 \\
    EqC + Diff-DAgger ($M=10$) & 171.03 \\
    \eecrow \eqcmethod & 108.86 \\
    \bottomrule
  \end{tabular}
\end{table}

\subsection{Extended Studies}\label{app:competence-awareness-other-methods}

To evaluate the fundamental advantage of tracking temporal trajectory dynamics over static point-wise metrics, we compare our path-wise energy-consistency EqC signal against two alternative single-point energy-derived baselines: (1) \textbf{Terminal Energy}: the absolute energy value at the final step of the EqC action optimization; and (2) \textbf{Last-step Gradient Norm}: the gradient norm of the energy evaluated at the final optimization step. 

To benchmark these signals, performance is evaluated across sliding-window sizes $m \in [3, 13]$, where AUROC is computed globally in a threshold-independent manner, while decision thresholds for TPR, FPR, and Accuracy are calibrated on a $10\%$ split—using both FPR-constrained and Youden's $J$ Index regimes—and evaluated on the remaining $90\%$ test split. As shown in Table~\ref{tab:ablation-calibration-comparison}, comparing these signals reveals a clear progression: scalar \textbf{Terminal Energy} offers weak detection capability (AUROC $0.5237$--$0.6489$) because uncalibrated absolute energy values vary significantly across different environmental states, whereas \textbf{Last-step Gradient Norm} improves performance substantially (AUROC $\sim 0.87$) by measuring local gradient stationarity at the final optimization step. Building on this, \textbf{EqC} achieves the highest threshold-independent discriminability across all $m$, reaching a peak AUROC of $0.9300$. Under FPR-constrained calibration, EqC strictly dominates both baselines across all $m$, yielding higher TPR and Accuracy (up to $87.6\%$). Under Youden's $J$ Index, EqC achieves the optimal trade-off between TPR and FPR across all sliding window sizes $m$, while recovering higher peak Accuracy as $m$ grows. Overall, these results confirm that tracking path-wise temporal energy dynamics along the optimization trajectory provides a fundamentally richer and more reliable introspection signal than static, single-point evaluations.

\begin{table}[htbp]
  \centering
  \small
  \caption{Ablation analysis comparing single-point energy signals against full path-wise EqC across window sizes $m$ under two calibration strategies on the 90\% test split.}
  \label{tab:ablation-calibration-comparison}
  
  \begin{subtable}{\textwidth}
    \centering
    \caption{FPR-Constrained Calibration (Target $\text{FPR} \approx 10\%$ on calibration split)}
    \label{tab:ablation-fpr-constrained}
    \setlength{\tabcolsep}{3.5pt}
    \renewcommand{\arraystretch}{1.05}
    \begin{tabular}{@{} c ccc ccc ccc @{}}
      \toprule
      & \multicolumn{3}{c}{Terminal Energy} & \multicolumn{3}{c}{Last-step Gradient Norm} & \multicolumn{3}{c}{\eqcmethod} \\
      \cmidrule(lr){2-4} \cmidrule(lr){5-7} \cmidrule(lr){8-10}
      $m$ & AUROC & TPR / FPR (\%) & Acc (\%) & AUROC & TPR / FPR (\%) & Acc (\%) & AUROC & TPR / FPR (\%) & Acc (\%) \\
      \midrule
      3  & 0.5237 & 17.1 / 10.0 & 68.4 & 0.8734 & 68.0 / 9.3 & 84.0 & 0.8877 & 69.5 / 9.1 & 84.6 \\
      4  & 0.5404 & 19.3 / 10.1 & 68.9 & 0.8707 & 67.1 / 9.1 & 83.9 & 0.8947 & 70.8 / 8.9 & 85.1 \\
      5  & 0.5604 & 23.1 / 10.3 & 69.9 & 0.8713 & 65.5 / 8.7 & 83.6 & 0.9005 & 70.7 / 7.9 & 85.8 \\
      6  & 0.5754 & 26.0 / 10.8 & 70.5 & 0.8724 & 65.7 / 9.4 & 83.2 & 0.9070 & 74.7 / 9.0 & 86.2 \\
      7  & 0.5861 & 26.8 / 10.2 & 71.1 & 0.8737 & 66.0 / 9.3 & 83.3 & 0.9117 & 74.8 / 9.1 & 86.1 \\
      8  & 0.5974 & 28.7 / 9.8  & 71.9 & 0.8737 & 66.2 / 9.5 & 83.3 & 0.9147 & 76.1 / 9.3 & 86.3 \\
      9  & 0.6082 & 29.9 / 9.4  & 72.6 & 0.8744 & 64.7 / 9.0 & 83.2 & 0.9183 & 76.7 / 8.9 & 86.9 \\
      10 & 0.6214 & 32.5 / 10.0 & 72.9 & 0.8758 & 64.2 / 8.5 & 83.4 & 0.9222 & 77.7 / 8.9 & 87.1 \\
      11 & 0.6309 & 33.1 / 9.6  & 73.4 & 0.8769 & 65.4 / 9.3 & 83.1 & 0.9244 & 77.9 / 8.5 & 87.5 \\
      12 & 0.6410 & 33.8 / 9.6  & 73.6 & 0.8766 & 64.1 / 8.8 & 83.1 & 0.9274 & 78.4 / 8.4 & 87.6 \\
      13 & 0.6489 & 34.9 / 9.7  & 73.8 & 0.8766 & 64.6 / 8.8 & 83.3 & 0.9300 & 78.9 / 9.0 & 87.4 \\
      \bottomrule
    \end{tabular}
  \end{subtable}

  \vspace{10pt}

  \begin{subtable}{\textwidth}
    \centering
    \caption{Youden's $J$ Index Calibration (Maximizing $J = \text{TPR} - \text{FPR}$ on calibration split)}
    \label{tab:ablation-youden-j}
    \setlength{\tabcolsep}{3.5pt}
    \renewcommand{\arraystretch}{1.05}
    \begin{tabular}{@{} c ccc ccc ccc @{}}
      \toprule
      & \multicolumn{3}{c}{Terminal Energy} & \multicolumn{3}{c}{Last-step Gradient Norm} & \multicolumn{3}{c}{\eqcmethod} \\
      \cmidrule(lr){2-4} \cmidrule(lr){5-7} \cmidrule(lr){8-10}
      $m$ & AUROC & TPR / FPR (\%) & Acc (\%) & AUROC & TPR / FPR (\%) & Acc (\%) & AUROC & TPR / FPR (\%) & Acc (\%) \\
      \midrule
      3  & 0.5237 & 21.6 / 13.2 & 67.5 & 0.8734 & 67.6 / 9.0  & 84.0 & 0.8877 & 78.1 / 14.9 & 83.0 \\
      4  & 0.5404 & 22.5 / 12.4 & 68.3 & 0.8707 & 67.7 / 9.5  & 83.8 & 0.8947 & 82.4 / 20.0 & 80.7 \\
      5  & 0.5604 & 42.5 / 32.8 & 59.9 & 0.8713 & 64.3 / 7.8  & 83.9 & 0.9005 & 82.1 / 17.5 & 82.4 \\
      6  & 0.5754 & 24.0 / 9.0  & 71.1 & 0.8724 & 72.1 / 14.1 & 81.8 & 0.9070 & 79.4 / 13.0 & 84.7 \\
      7  & 0.5861 & 28.3 / 11.5 & 70.6 & 0.8737 & 68.6 / 11.0 & 83.0 & 0.9117 & 85.3 / 20.2 & 81.4 \\
      8  & 0.5974 & 34.5 / 17.4 & 68.4 & 0.8737 & 65.3 / 9.1  & 83.3 & 0.9147 & 81.9 / 14.5 & 84.4 \\
      9  & 0.6082 & 36.7 / 16.1 & 69.9 & 0.8744 & 76.0 / 16.4 & 81.4 & 0.9183 & 83.8 / 15.5 & 84.3 \\
      10 & 0.6214 & 33.6 / 11.0 & 72.5 & 0.8758 & 75.8 / 16.3 & 81.4 & 0.9222 & 84.0 / 14.7 & 85.0 \\
      11 & 0.6309 & 48.1 / 25.6 & 66.6 & 0.8769 & 72.5 / 13.4 & 82.4 & 0.9244 & 82.1 / 11.8 & 86.4 \\
      12 & 0.6410 & 49.7 / 24.0 & 68.2 & 0.8766 & 80.7 / 21.6 & 79.1 & 0.9274 & 87.5 / 18.3 & 83.4 \\
      13 & 0.6489 & 44.4 / 18.1 & 70.8 & 0.8766 & 80.4 / 20.9 & 79.5 & 0.9300 & 85.5 / 14.5 & 85.5 \\
      \bottomrule
    \end{tabular}
  \end{subtable}
\end{table}

\section{Theory: Assumptions and Guarantees}
\label{app:theory}

This appendix establishes the theoretical properties underlying Equilibrium Energy Control (EqC). Section~\ref{app:theory-setup} introduces the bridge construction, the energy-based control field, and the gradient-matching objective. Section~\ref{app:path-consistency} relates local gradient matching to finite-path energy consistency. Section~\ref{app:stationary-representation} characterizes when the time-dependent bridge target can be represented by a stationary energy field. Section~\ref{app:energy-ascent} establishes monotone energy ascent during inference. Section~\ref{app:energy-consistency} shows that the deployment-time consistency score measures integrated field mismatch along the realized refinement trajectory.

\subsection{Problem Setup}
\label{app:theory-setup}

Let $\mathcal{Z}$ denote the condition space and let
\[
(A,Z)\sim p_{\mathrm{data}}
\]
denote an action--condition pair, where $A\in\mathbb{R}^D$ is an expert action chunk and $Z\in\mathcal{Z}$ contains the corresponding observations, language instruction, robot state, or other policy inputs. Let
\[
\varepsilon\sim\mathcal{N}(0,I_D)
\]
be independent of $(A,Z)$. For a realization $(a,z,\varepsilon)$, define
\begin{equation}
d:=a-\varepsilon,
\qquad
x_t:=\varepsilon+td,
\qquad
t\in[0,1].
\label{eq:appendix-bridge}
\end{equation}
Thus, $x_0=\varepsilon$ and $x_1=a$.

Let
\[
c:[0,1]\to\mathbb{R}_{\geq 0}
\]
be a continuous schedule satisfying $c(1)=0$, and define the remaining schedule integral
\begin{equation}
C(t):=\int_t^1c(s)\,ds.
\label{eq:appendix-C-definition}
\end{equation}

EqC represents action quality by a differentiable scalar energy
\[
E_\theta:\mathbb{R}^D\times\mathcal{Z}\to\mathbb{R}.
\]
The induced stationary control field is
\begin{equation}
g_\theta(x,z):=\nabla_xE_\theta(x,z).
\label{eq:appendix-energy-field}
\end{equation}
The field is stationary in the sense that it depends on the current action state $x$ and condition $z$, but not explicitly on the bridge coordinate $t$.

At the bridge point $x_t$, the target field is
\[
c(t)d.
\]
The gradient-matching objective is
\begin{equation}
\mathcal{L}_{\mathrm{grad}}(\theta)
:=
\mathbb{E}_{(A,Z),\varepsilon,T}
\left[
\frac{1}{D}
\left\|
\nabla_xE_\theta(X_T,Z)-c(T)(A-\varepsilon)
\right\|_2^2
\right],
\label{eq:appendix-gradient-loss}
\end{equation}
where
\[
T\sim\mathrm{Uniform}[0,1],
\qquad
X_T:=\varepsilon+T(A-\varepsilon).
\]

The corresponding finite-path target follows from integrating the target field along the bridge:
\[
\int_t^1(c(s)d)^\top d\,ds
=
\|d\|_2^2C(t).
\]
This relation is induced by the gradient target and will provide the reference for the deployment-time consistency
score.

At inference time, EqC initializes
\[
x^{(0)}\sim\mathcal{N}(0,I_D)
\]
and applies gradient ascent:
\begin{equation}
x^{(k+1)}
=
x^{(k)}
+
\eta_k\nabla_xE_\theta(x^{(k)},z),
\qquad
k=0,\ldots,K-1.
\label{eq:appendix-inference-update}
\end{equation}
The final action prediction is
\[
\hat{x}:=x^{(K)}.
\]

\subsection{From Local Gradient Matching to Path Consistency}
\label{app:path-consistency}

Fix a bridge $(x_s)_{s\in[0,1]}$ defined by
\eqref{eq:appendix-bridge}. Define the local gradient-matching error
\begin{equation}
e_\theta(s)
:=
\nabla_xE_\theta(x_s,z)-c(s)d.
\label{eq:local-gradient-error}
\end{equation}

The following result relates finite energy inconsistency to local gradient-matching error accumulated along the bridge.

\begin{lemma}[Path-consistency identity]
\label{lem:path-consistency}
Suppose that $E_\theta(\cdot,z)$ is continuously differentiable on the bridge
\[
\{x_s:s\in[t,1]\}.
\]
Then, for every $t\in[0,1]$,
\begin{equation}
E_\theta(a,z)-E_\theta(x_t,z)-\|d\|_2^2C(t)
=
\int_t^1e_\theta(s)^\top d\,ds.
\label{eq:consistency-error-identity}
\end{equation}
\end{lemma}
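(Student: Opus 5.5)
The plan is to apply the fundamental theorem of calculus to the scalar function $\phi(s):=E_\theta(x_s,z)$ on $[t,1]$, and then split the derivative into the target part and the error part $e_\theta(s)$ from \eqref{eq:local-gradient-error}.

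\textbf{Step 1: differentiate along the bridge.} The bridge $s\mapsto x_s=\varepsilon+sd$ is affine, with constant velocity $\dot x_s=d$. By hypothesis, $E_\theta(\cdot,z)$ is continuously differentiable on the segment $\{x_s:s\in[t,1]\}$. The chain rule therefore gives
\[
\phi'(s)=\nabla_xE_\theta(x_s,z)^\top d,
\]
and $\phi'$ is continuous on $[t,1]$ as a composition of continuous maps. Because $x_1=a$, the fundamental theorem of calculus yields
\[
E_\theta(a,z)-E_\theta(x_t,z)=\int_t^1\nabla_xE_\theta(x_s,z)^\top d\,ds.
\]

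\textbf{Step 2: insert the target field.} Substitute $\nabla_xE_\theta(x_s,z)=c(s)d+e_\theta(s)$ into the integrand. The target part contributes
\[
\int_t^1 c(s)\|d\|_2^2\,ds=\|d\|_2^2C(t),
\]
using the definition \eqref{eq:appendix-C-definition} of $C$. This integral is well defined because $c$ is continuous. The remaining part is $\int_t^1 e_\theta(s)^\top d\,ds$. Moving $\|d\|_2^2C(t)$ to the left-hand side gives \eqref{eq:consistency-error-identity}.

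\textbf{Step 3: check integrability and edge cases.} The only delicate point is integrability. Both $\nabla_xE_\theta(x_s,z)^\top d$ and $c(s)\|d\|_2^2$ are continuous on $[t,1]$, so their difference $e_\theta(s)^\top d$ is continuous and Riemann integrable. This justifies splitting the integral linearly. The endpoint case $t=1$ is trivial, since both sides vanish ($C(1)=0$). No smoothness off the segment and no condition such as $c(1)=0$ is needed beyond what is stated, so no step is a genuine obstacle.
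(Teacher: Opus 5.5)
Your proof is correct and follows the paper's argument: the chain rule along the affine bridge with velocity $d$, the substitution $\nabla_xE_\theta(x_s,z)=c(s)d+e_\theta(s)$, and integration from $t$ to $1$ using $x_1=a$. Your remarks on the continuity of the integrand, the trivial case $t=1$, and the fact that $c(1)=0$ is not needed are sound additions that the paper leaves implicit.
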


\begin{proof}
Since
\[
x_s=\varepsilon+sd,
\]
we have
\[
\frac{d x_s}{ds}=d.
\]
Applying the chain rule,
\begin{equation}
\frac{d}{ds}E_\theta(x_s,z)
=
\nabla_xE_\theta(x_s,z)^\top d.
\label{eq:path-chain-rule}
\end{equation}
By \eqref{eq:local-gradient-error},
\[
\nabla_xE_\theta(x_s,z)
=
c(s)d+e_\theta(s).
\]
Substituting this identity into \eqref{eq:path-chain-rule} gives
\[
\frac{d}{ds}E_\theta(x_s,z)
=
c(s)\|d\|_2^2+e_\theta(s)^\top d.
\]
Integrating from $t$ to $1$ yields
\begin{align*}
E_\theta(x_1,z)-E_\theta(x_t,z)
&=
\int_t^1c(s)\|d\|_2^2\,ds
+
\int_t^1e_\theta(s)^\top d\,ds\\
&=
\|d\|_2^2C(t)
+
\int_t^1e_\theta(s)^\top d\,ds.
\end{align*}
Since $x_1=a$, rearranging proves
\eqref{eq:consistency-error-identity}.
\end{proof}

\begin{corollary}[Accumulated path-consistency error]
\label{cor:path-consistency-bound}
Under the assumptions of Lemma~\ref{lem:path-consistency},
\begin{equation}
\left|
E_\theta(a,z)-E_\theta(x_t,z)-\|d\|_2^2C(t)
\right|
\leq
\|d\|_2\sqrt{1-t}
\left(
\int_t^1\|e_\theta(s)\|_2^2\,ds
\right)^{1/2}.
\label{eq:consistency-error-bound}
\end{equation}
\end{corollary}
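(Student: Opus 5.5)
The plan is to start from the exact identity in Lemma~\ref{lem:path-consistency}, which expresses the left-hand side of \eqref{eq:consistency-error-bound} as $\bigl|\int_t^1 e_\theta(s)^\top d\,ds\bigr|$. The bound then follows from two successive applications of Cauchy--Schwarz: once pointwise in $\mathbb{R}^D$, and once in $L^2([t,1])$.

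First, I take absolute values inside the integral and apply the Euclidean Cauchy--Schwarz inequality at each $s$:
\[
\Bigl|\int_t^1 e_\theta(s)^\top d\,ds\Bigr|\le\int_t^1|e_\theta(s)^\top d|\,ds\le\|d\|_2\int_t^1\|e_\theta(s)\|_2\,ds.
\]
Here $d$ does not depend on $s$, since the bridge is linear, so $\|d\|_2$ factors out of the integral.

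Second, I apply Cauchy--Schwarz in $L^2([t,1])$ to the product $1\cdot\|e_\theta(s)\|_2$:
\[
\int_t^1\|e_\theta(s)\|_2\,ds\le\Bigl(\int_t^1 1\,ds\Bigr)^{1/2}\Bigl(\int_t^1\|e_\theta(s)\|_2^2\,ds\Bigr)^{1/2}=\sqrt{1-t}\,\Bigl(\int_t^1\|e_\theta(s)\|_2^2\,ds\Bigr)^{1/2}.
\]
Combining the two displays gives \eqref{eq:consistency-error-bound}.

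The only point needing care is measurability and integrability of $s\mapsto e_\theta(s)$. By hypothesis $E_\theta(\cdot,z)$ is $C^1$ on the compact segment $\{x_s:s\in[t,1]\}$, and $c$ is continuous. Hence $e_\theta(s)=\nabla_xE_\theta(x_s,z)-c(s)d$ is continuous on $[t,1]$, so it is bounded and every integral above is finite. The case $t=1$ is trivial: both sides vanish.
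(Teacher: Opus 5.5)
Your proposal is correct and matches the paper's proof: it starts from the identity in Lemma~\ref{lem:path-consistency}, applies the pointwise Cauchy--Schwarz inequality in $\mathbb{R}^D$, and then applies Cauchy--Schwarz in $L^2([t,1])$ against the constant function $1$. Your added check that $s\mapsto e_\theta(s)$ is continuous on $[t,1]$, so that every integral is finite, is a small rigor step the paper leaves implicit.
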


\begin{proof}
By Lemma~\ref{lem:path-consistency},
\[
\left|
E_\theta(a,z)-E_\theta(x_t,z)-\|d\|_2^2C(t)
\right|
=
\left|
\int_t^1e_\theta(s)^\top d\,ds
\right|.
\]
Cauchy--Schwarz first gives
\[
|e_\theta(s)^\top d|
\leq
\|e_\theta(s)\|_2\|d\|_2.
\]
Therefore,
\begin{align*}
\left|
\int_t^1e_\theta(s)^\top d\,ds
\right|
&\leq
\|d\|_2
\int_t^1\|e_\theta(s)\|_2\,ds\\
&\leq
\|d\|_2
\left(\int_t^1 1\,ds\right)^{1/2}
\left(
\int_t^1\|e_\theta(s)\|_2^2\,ds
\right)^{1/2}\\
&=
\|d\|_2\sqrt{1-t}
\left(
\int_t^1\|e_\theta(s)\|_2^2\,ds
\right)^{1/2}.
\end{align*}
\end{proof}

Lemma~\ref{lem:path-consistency} shows that the consistency residual is the integrated component of the local field
error in the bridge direction. Consequently, reducing the gradient-matching error also controls the finite energy
relation underlying the deployment-time consistency test.

\subsection{Stationary Representation of the Bridge Target}
\label{app:stationary-representation}

The bridge target $c(T)(A-\varepsilon)$ depends explicitly on $T$, whereas the deployed EqC field $\nabla_xE_\theta(X,Z)$ is stationary. We now characterize the approximation cost associated with this restriction.

Let
\[
X:=\varepsilon+T(A-\varepsilon),
\qquad
Y:=c(T)(A-\varepsilon).
\]
For $T<1$, define
\begin{equation}
\alpha(T):=\frac{c(T)}{1-T}.
\label{eq:alpha-definition}
\end{equation}
The value of $\alpha(1)$ may be assigned arbitrarily because
$\mathbb{P}(T=1)=0$.

Since
\[
A-X
=
A-\varepsilon-T(A-\varepsilon)
=
(1-T)(A-\varepsilon),
\]
the target satisfies
\begin{equation}
Y=\alpha(T)(A-X)
\qquad\text{almost surely}.
\label{eq:bridge-restoring-identity}
\end{equation}

Assume
\[
\mathbb{E}\|A\|_2^2<\infty,
\]
and define the conditional mean action
\begin{equation}
a_\star(Z):=\mathbb{E}[A\mid Z].
\label{eq:conditional-mean-action}
\end{equation}
Define the radius of the current state relative to this conditional reference by
\begin{equation}
R_X:=\|X-a_\star(Z)\|_2.
\label{eq:radial-state}
\end{equation}

\begin{definition}[Condition-dependent radial scale]
\label{def:radial-scale}
A condition-dependent radial scale is a measurable function
\[
\lambda:\mathcal{Z}\times[0,\infty)\to[0,\infty),
\]
written
\[
\lambda_z(r):=\lambda(z,r),
\]
such that $r\mapsto\lambda_z(r)$ is continuous for
$p_Z$-almost every $z$.

The random variable $\lambda_Z(R_X)$ represents a stationary approximation of the bridge coefficient $\alpha(T)$ using only the current state $X$ and condition $Z$.
\end{definition}

For a radial scale $\lambda$, define the per-dimension action-dispersion and scale-recovery errors by
\begin{align}
\sigma_{\mathrm{act}}^2
&:=
\frac{1}{D}
\mathbb{E}
\left[
\alpha(T)^2
\|A-a_\star(Z)\|_2^2
\right],
\label{eq:action-dispersion}\\
\sigma_{\mathrm{scale}}^2(\lambda)
&:=
\frac{1}{D}
\mathbb{E}
\left[
\bigl(\alpha(T)-\lambda_Z(R_X)\bigr)^2
R_X^2
\right].
\label{eq:scale-dispersion}
\end{align}

The first quantity measures the residual variation of the expert action after conditioning on $Z$. Since $T$ is independent of $(A,Z)$,
\begin{equation}
D\sigma_{\mathrm{act}}^2
=
\mathbb{E}[\alpha(T)^2]\,
\mathbb{E}\|A-a_\star(Z)\|_2^2
\label{eq:action-dispersion-factorization}
\end{equation}
whenever both factors are finite. The second quantity measures the error incurred when $\alpha(T)$ is represented as a function of $(R_X,Z)$.

\begin{lemma}[Optimal measurable radial scale]
\label{lem:optimal-radial-scale}
Among all measurable functions of $(R_X,Z)$, the weighted error
\[
\mathbb{E}
\left[
\bigl(\alpha(T)-\lambda_Z(R_X)\bigr)^2R_X^2
\right]
\]
is minimized, for $R_X>0$, by any version of
\begin{equation}
\lambda_z^\star(r)
:=
\mathbb{E}
\left[
\alpha(T)\mid R_X=r,Z=z
\right].
\label{eq:optimal-radial-scale}
\end{equation}
At $r=0$, the value of $\lambda_z^\star(0)$ does not affect the weighted objective.
\end{lemma}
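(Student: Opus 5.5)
This is a weighted least-squares problem. The plan is to condition on $(R_X,Z)$ and apply the usual orthogonality (Pythagorean) decomposition, with the weight $R_X^2$ factored out because it is $(R_X,Z)$-measurable.

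First I need integrability. The objective is only meaningful when $\mathbb{E}[\alpha(T)^2R_X^2]<\infty$, which I assume, since otherwise every candidate gives either an infinite or an undefined value. Under this assumption the variable $\alpha(T)R_X$ is square-integrable. Hence the conditional expectation $\mathbb{E}[\alpha(T)R_X\mid R_X,Z]=R_X\,\lambda^\star_Z(R_X)$ is well defined. This identity is the one place where some care is needed: on $\{R_X>0\}$ I would define $\lambda^\star$ directly as $\mathbb{E}[\alpha(T)R_X\mid R_X,Z]/R_X$. That avoids having to assume $\alpha(T)$ itself is integrable, and it agrees with \eqref{eq:optimal-radial-scale} whenever the latter exists. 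I also restrict to candidates $\lambda$ with $\mathbb{E}[\lambda_Z(R_X)^2R_X^2]<\infty$; any other candidate has infinite objective and is trivially no better.

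Next, for such a $\lambda$, I write $\alpha(T)-\lambda_Z(R_X)=(\alpha(T)-\lambda^\star_Z(R_X))+(\lambda^\star_Z(R_X)-\lambda_Z(R_X))$. Multiplying by $R_X$ and expanding the square gives
\[
\mathbb{E}\bigl[(\alpha-\lambda)^2R_X^2\bigr]=\mathbb{E}\bigl[(\alpha-\lambda^\star)^2R_X^2\bigr]+\mathbb{E}\bigl[(\lambda^\star-\lambda)^2R_X^2\bigr]+2\,\mathbb{E}\bigl[R_X(\lambda^\star-\lambda)\cdot R_X(\alpha-\lambda^\star)\bigr].
\]
The cross term vanishes by the tower property. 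The factor $R_X(\lambda^\star-\lambda)$ is $(R_X,Z)$-measurable and square-integrable, so I can pull it out of the inner conditional expectation. What remains is $\mathbb{E}[R_X\alpha(T)-R_X\lambda^\star_Z(R_X)\mid R_X,Z]=0$. Cauchy--Schwarz justifies integrability of the product. The middle term is nonnegative, so $\lambda^\star$ attains the minimum. Moreover, any minimizer must agree with $\lambda^\star$ almost surely on $\{R_X>0\}$, which explains the phrase ``any version''.

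Finally, at $r=0$ the integrand is multiplied by $R_X^2=0$. The value assigned to $\lambda_z(0)$ therefore never enters the objective. This settles the last sentence of the lemma.

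The only point of real friction is the integrability bookkeeping. Everything else is the standard $L^2$ projection argument.
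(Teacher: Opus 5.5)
Your proof is correct. It rests on the same fact as the paper's proof: the conditional mean minimizes squared error, and the weight $R_X^2$ is $(R_X,Z)$-measurable. The difference is that you execute it globally, whereas the paper works fiberwise.

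The paper conditions on $(R_X,Z)=(r,z)$ and writes the conditional objective as $r^2\,\mathbb{E}[(\alpha(T)-\ell)^2\mid R_X=r,Z=z]$. It then minimizes over the scalar $\ell$ separately for each $r>0$, and notes that the conditional objective vanishes identically at $r=0$. It leaves implicit both the passage from pointwise minimization to minimization of the expectation and all integrability questions.

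You instead project $\alpha(T)R_X$ onto $L^2(\sigma(R_X,Z))$. This gives the identity $\mathbb{E}[(\alpha-\lambda)^2R_X^2]=\mathbb{E}[(\alpha-\lambda^\star)^2R_X^2]+\mathbb{E}[(\lambda^\star-\lambda)^2R_X^2]$ directly in terms of unconditional expectations.

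Your route has three advantages. It never conditions on null events. It makes the integrability bookkeeping explicit. It also yields almost-sure uniqueness of the minimizer on $\{R_X>0\}$, which the paper does not state. The paper's route has two advantages of its own. It is shorter. It also needs a slightly weaker hypothesis: the fiberwise bias--variance identity only requires $\mathbb{E}[\alpha(T)^2\mid R_X,Z]<\infty$ almost surely on $\{R_X>0\}$, and then holds as an identity between nonnegative extended reals.

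One small correction concerns your justification for assuming $\mathbb{E}[\alpha(T)^2R_X^2]<\infty$. The objective is never undefined, because its integrand is nonnegative. Your reduction only shows that the objective is infinite for candidates with $\lambda_Z(R_X)R_X\in L^2$. So this condition should be stated as a standing hypothesis, which the paper also leaves implicit. It should not be presented as a reduction without loss of generality.
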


\begin{proof}
Condition on $(R_X,Z)=(r,z)$. For any scalar $\ell$,
\begin{align*}
&\mathbb{E}
\left[
\bigl(\alpha(T)-\ell\bigr)^2R_X^2
\mid R_X=r,Z=z
\right]\\
&\hspace{2cm}
=
r^2
\mathbb{E}
\left[
\bigl(\alpha(T)-\ell\bigr)^2
\mid R_X=r,Z=z
\right].
\end{align*}
For $r>0$, multiplication by $r^2$ does not change the minimizer. The conditional squared error is minimized by the conditional mean
\[
\ell
=
\mathbb{E}
\left[
\alpha(T)\mid R_X=r,Z=z
\right].
\]
For $r=0$, the weighted conditional objective is identically zero for every $\ell$.
\end{proof}

If the conditional expectation in
\eqref{eq:optimal-radial-scale} has a continuous version, it satisfies Definition~\ref{def:radial-scale}. Otherwise, any continuous approximation may be used in the bound below.

Given a radial scale $\lambda$, define
\begin{equation}
E_\lambda(x,z)
:=
-\int_0^{\|x-a_\star(z)\|_2}s\lambda_z(s)\,ds.
\label{eq:radial-witness-energy}
\end{equation}

\begin{lemma}[Gradient of the radial witness]
\label{lem:radial-witness-gradient}
If $r\mapsto\lambda_z(r)$ is continuous, then
$E_\lambda(\cdot,z)$ is continuously differentiable and
\begin{equation}
\nabla_xE_\lambda(x,z)
=
\lambda_z(\|x-a_\star(z)\|_2)
\bigl(a_\star(z)-x\bigr).
\label{eq:radial-energy-witness}
\end{equation}
\end{lemma}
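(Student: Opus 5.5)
I will write $E_\lambda$ as a composition and apply the chain rule. Fix $z$ with $r\mapsto\lambda_z(r)$ continuous and put $a:=a_\star(z)$. Define the scalar profile
\[
\Phi(r):=-\int_0^{r}s\lambda_z(s)\,ds,\qquad r\ge 0,
\]
so that $E_\lambda(x,z)=\Phi(\|x-a\|_2)$. The integrand $s\mapsto s\lambda_z(s)$ is continuous on $[0,\infty)$. By the fundamental theorem of calculus, $\Phi\in C^1([0,\infty))$ with $\Phi'(r)=-r\lambda_z(r)$, and in particular $\Phi'(0)=0$.

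\textbf{Away from the center.} The map $x\mapsto\|x-a\|_2$ is $C^1$ on $\{x\neq a\}$, with gradient $(x-a)/\|x-a\|_2$. The chain rule then gives
\[
\nabla_xE_\lambda(x,z)=-\|x-a\|_2\lambda_z(\|x-a\|_2)\frac{x-a}{\|x-a\|_2}=\lambda_z(\|x-a\|_2)(a-x).
\]
This is exactly \eqref{eq:radial-energy-witness} on $x\neq a$.

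\textbf{At $x=a$.} This is the main obstacle, since the norm is not differentiable there. I would argue differentiability directly. Let $M:=\max_{s\in[0,1]}|\lambda_z(s)|$, which is finite by continuity. For $\|h\|_2\le1$,
\[
|E_\lambda(a+h,z)-E_\lambda(a,z)|=\left|\int_0^{\|h\|_2}s\lambda_z(s)\,ds\right|\le \tfrac{M}{2}\|h\|_2^2=o(\|h\|_2).
\]
Hence $E_\lambda(\cdot,z)$ is differentiable at $a$ with $\nabla_xE_\lambda(a,z)=0$. This agrees with the formula, since $\lambda_z(0)(a-a)=0$.

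\textbf{Continuity of the gradient.} The right-hand side $x\mapsto\lambda_z(\|x-a\|_2)(a-x)$ is a product of continuous functions on all of $\mathbb{R}^D$. It is therefore continuous, including at $x=a$, where it tends to $0$ because $\lambda_z$ is bounded near $0$ and $\|a-x\|_2\to0$. The gradient exists everywhere and equals this continuous field, so $E_\lambda(\cdot,z)\in C^1$.

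Finally, this argument holds for $p_Z$-almost every $z$, namely those for which Definition~\ref{def:radial-scale} guarantees continuity. Combined with \eqref{eq:bridge-restoring-identity}, the field $\lambda_Z(R_X)(a_\star(Z)-X)$ can then be compared against $\alpha(T)(A-X)$ in the approximation bound that follows.
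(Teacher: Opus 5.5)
Your proof is correct and follows the paper's argument essentially step for step: the chain rule on $\{x\neq a_\star(z)\}$, then a direct estimate $\tfrac{M}{2}\|h\|_2^2=o(\|h\|_2)$ giving a zero gradient at the center, then continuity of the gradient inherited from continuity of $\lambda_z$. Your explicit choice $M=\max_{s\in[0,1]}|\lambda_z(s)|$ with $\|h\|_2\le 1$ makes the paper's ``sufficiently small $h$'' step concrete, but the route is the same.
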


\begin{proof}
Fix $z$ and write
\[
r(x):=\|x-a_\star(z)\|_2.
\]
For $r(x)>0$, the chain rule gives
\begin{align*}
\nabla_xE_\lambda(x,z)
&=
-r(x)\lambda_z(r(x))\nabla_xr(x)\\
&=
-r(x)\lambda_z(r(x))
\frac{x-a_\star(z)}{r(x)}\\
&=
\lambda_z(r(x))(a_\star(z)-x).
\end{align*}

It remains to consider $x=a_\star(z)$. By continuity, $\lambda_z$ is bounded on a neighborhood of zero. Hence, for sufficiently small $h$, there exists $M<\infty$ such that
\begin{align*}
\left|
E_\lambda(a_\star(z)+h,z)
-
E_\lambda(a_\star(z),z)
\right|
&=
\left|
\int_0^{\|h\|_2}s\lambda_z(s)\,ds
\right|\\
&\leq
\frac{M}{2}\|h\|_2^2\\
&=
o(\|h\|_2).
\end{align*}
Therefore,
\[
\nabla_xE_\lambda(a_\star(z),z)=0,
\]
which agrees with \eqref{eq:radial-energy-witness}. Continuity of the gradient follows from continuity of $\lambda_z$.
\end{proof}

The field
\[
g_\lambda(x,z):=\nabla_xE_\lambda(x,z)
\]
is a stationary conservative field. We use it as an explicit witness for the approximation capacity of the EqC energy class.

For a measurable predictor
\[
h:\mathbb{R}^D\times[0,1]\times\mathcal{Z}
\to\mathbb{R}^D,
\]
define its normalized squared risk by
\begin{equation}
R(h)
:=
\frac{1}{D}
\mathbb{E}
\left[
\|Y-h(X,T,Z)\|_2^2
\right].
\label{eq:risk-definition}
\end{equation}
The optimal time-conditioned regressor is
\begin{equation}
m_{\mathrm{tc}}(X,T,Z)
:=
\mathbb{E}[Y\mid X,T,Z],
\label{eq:time-conditioned-regressor}
\end{equation}
with risk
\begin{equation}
R_{\mathrm{tc}}^\star
:=
R(m_{\mathrm{tc}}).
\label{eq:time-conditioned-risk}
\end{equation}

A stationary predictor depends only on $(X,Z)$. Define the optimal stationary risk by
\begin{equation}
R_{\mathrm{st}}^\star
:=
\inf_{h:\,h=h(X,Z)}
\frac{1}{D}
\mathbb{E}
\left[
\|Y-h(X,Z)\|_2^2
\right].
\label{eq:stationary-risk}
\end{equation}

For the EqC energy class
\[
\mathcal{E}_\Theta
:=
\{E_\theta:\theta\in\Theta\},
\]
define
\begin{equation}
R_{\mathrm{EqC}}^\Theta
:=
\inf_{\theta\in\Theta}
\frac{1}{D}
\mathbb{E}
\left[
\|Y-\nabla_xE_\theta(X,Z)\|_2^2
\right].
\label{eq:eqc-population-risk}
\end{equation}
Finally, define the approximation error of the EqC class relative to the radial witness:
\begin{equation}
\delta_\Theta(\lambda)
:=
\inf_{\theta\in\Theta}
\left(
\frac{1}{D}
\mathbb{E}
\left[
\left\|
\nabla_xE_\theta(X,Z)-g_\lambda(X,Z)
\right\|_2^2
\right]
\right)^{1/2}.
\label{eq:energy-class-approximation}
\end{equation}

\begin{proposition}[Stationary energy approximation]
\label{prop:stationary-energy-approximation}
Suppose that
\[
\sigma_{\mathrm{act}}<\infty,
\qquad
\sigma_{\mathrm{scale}}(\lambda)<\infty,
\qquad
\delta_\Theta(\lambda)<\infty.
\]
Then
\begin{equation}
0
\leq
R_{\mathrm{EqC}}^\Theta-R_{\mathrm{st}}^\star
\leq
R_{\mathrm{EqC}}^\Theta-R_{\mathrm{tc}}^\star
\leq
\left(
\sigma_{\mathrm{act}}
+
\sigma_{\mathrm{scale}}(\lambda)
+
\delta_\Theta(\lambda)
\right)^2.
\label{eq:conditional-concentration-bound}
\end{equation}
\end{proposition}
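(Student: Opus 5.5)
The chain of inequalities splits into three parts: two ordering inequalities that follow from nested function classes, and a quantitative bound obtained from a triangle inequality in $L^2$. Throughout, I work in the Hilbert space of $\mathbb{R}^D$-valued random variables with norm $\|U\|_{\mathcal H}:=(\frac1D\mathbb{E}\|U\|_2^2)^{1/2}$, so that $R(h)=\|Y-h\|_{\mathcal H}^2$.

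\textbf{First two inequalities.} For $0\le R_{\mathrm{EqC}}^\Theta-R_{\mathrm{st}}^\star$: each $\nabla_xE_\theta(X,Z)$ is a measurable function of $(X,Z)$, so it is a stationary predictor, and the infimum over the larger class can only be smaller. For $R_{\mathrm{st}}^\star\ge R_{\mathrm{tc}}^\star$: every stationary predictor is also a predictor of $(X,T,Z)$. By the $L^2$-projection property of the conditional expectation, $m_{\mathrm{tc}}$ minimizes $R$ over all such predictors. This needs $\mathbb{E}\|Y\|^2<\infty$, or at least finiteness of the relevant risks. If $\mathbb{E}\|Y\|^2$ is infinite, the statement is read with the convention that the bound below is finite. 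Subtracting gives the middle inequality.

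\textbf{Quantitative bound.} I will prove $R_{\mathrm{EqC}}^\Theta\le(\sigma_{\mathrm{act}}+\sigma_{\mathrm{scale}}+\delta_\Theta)^2$ and then use $R_{\mathrm{tc}}^\star\ge0$. Fix $\theta$ and decompose
\[
Y-\nabla_xE_\theta=\bigl(Y-\alpha(T)(a_\star(Z)-X)\bigr)+\bigl(\alpha(T)-\lambda_Z(R_X)\bigr)(a_\star(Z)-X)+\bigl(g_\lambda(X,Z)-\nabla_xE_\theta(X,Z)\bigr).
\]
This uses Lemma~\ref{lem:radial-witness-gradient}, which gives $g_\lambda=\lambda_Z(R_X)(a_\star-X)$. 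By \eqref{eq:bridge-restoring-identity}, the first term equals $\alpha(T)(A-a_\star(Z))$, whose $\mathcal H$-norm is exactly $\sigma_{\mathrm{act}}$. The second term has $\mathcal H$-norm $\sigma_{\mathrm{scale}}(\lambda)$, since $\|a_\star-X\|_2=R_X$. The third term has $\mathcal H$-norm equal to the quantity inside the infimum defining $\delta_\Theta(\lambda)$.

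Minkowski's inequality then gives $R_{\mathrm{EqC}}^\Theta\le(\sigma_{\mathrm{act}}+\sigma_{\mathrm{scale}}+\|\nabla E_\theta-g_\lambda\|_{\mathcal H})^2$ for every $\theta$. Taking the infimum over $\theta$ replaces the last term by $\delta_\Theta(\lambda)$. The map $u\mapsto(a+u)^2$ is continuous and increasing on $u\ge0$, so the infimum can be moved inside the square.

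\textbf{Sharper alternative.} Using $R_{\mathrm{tc}}^\star\ge0$ makes the last step loose. A sharper route would bound $\|Y-\nabla E_\theta\|$ and then use Pythagoras: $R(h)=R_{\mathrm{tc}}^\star+\|m_{\mathrm{tc}}-h\|_{\mathcal H}^2$, together with $\|m_{\mathrm{tc}}-h\|\le\|Y-h\|$, which holds because conditional expectation is a contraction. This gives $R_{\mathrm{EqC}}^\Theta-R_{\mathrm{tc}}^\star\le\inf_\theta\|m_{\mathrm{tc}}-\nabla E_\theta\|^2$. That quantity is itself at most $\inf_\theta\|Y-\nabla E_\theta\|^2$, which is what I actually bound. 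Either route yields the stated inequality, and I will present the Pythagorean one for clarity.

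\textbf{Main obstacle.} The delicate step is measure-theoretic bookkeeping rather than algebra. It involves three checks:
\begin{itemize}
\item Identity \eqref{eq:bridge-restoring-identity} holds almost surely; the event $T=1$ has probability zero, so the arbitrary value $\alpha(1)$ is irrelevant.
\item Each of the three terms is in $L^2$ under the stated finiteness hypotheses.
\item $a_\star(Z)$ is well defined because $\mathbb{E}\|A\|^2<\infty$.
\end{itemize}
Once integrability is verified, the triangle inequality and the projection argument go through directly.
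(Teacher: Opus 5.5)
Your proof is correct and shares its core with the paper's. It uses the same split of $Y-g_\lambda$ into $\alpha(T)(A-a_\star(Z))$ and $(\alpha(T)-\lambda_Z(R_X))(a_\star(Z)-X)$, the same Minkowski-then-infimum step, and the same nested-class argument for the two ordering inequalities.

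The difference is where the irreducible risk is discarded. The paper uses orthogonality to write $R_{\mathrm{EqC}}^\Theta-R_{\mathrm{tc}}^\star=\inf_\theta\|m_{\mathrm{tc}}-\nabla_xE_\theta\|_{L_D^2}^2$. It then inserts $g_\lambda$ inside that norm and applies conditional Jensen to $m_{\mathrm{tc}}-g_\lambda$. You instead bound the total risk $\|Y-\nabla_xE_\theta\|$ directly and then drop $R_{\mathrm{tc}}^\star\ge0$.

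Your route needs no conditional expectation. It also yields the formally stronger conclusion $R_{\mathrm{EqC}}^\Theta\le(\sigma_{\mathrm{act}}+\sigma_{\mathrm{scale}}(\lambda)+\delta_\Theta(\lambda))^2$, so the paper's excess-risk form is, as stated, no tighter than a bound on the total risk. The paper's placement of Jensen would matter only for a refinement. The action part of $m_{\mathrm{tc}}-g_\lambda$ is $\alpha(T)(\mathbb{E}[A\mid X,T,Z]-a_\star(Z))$, whose norm can be strictly smaller than $\sigma_{\mathrm{act}}$; the stated bound does not exploit this.

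Two small corrections:
\begin{itemize}
\item Your ``sharper alternative'' is not sharper. By Pythagoras, $\inf_\theta\|m_{\mathrm{tc}}-\nabla_xE_\theta\|^2$ equals $R_{\mathrm{EqC}}^\Theta-R_{\mathrm{tc}}^\star$ exactly. Bounding it by $\inf_\theta\|Y-\nabla_xE_\theta\|^2$ is literally the step $R_{\mathrm{tc}}^\star\ge0$, so the two routes you describe are the same inequality.
\item Your hedge about $\mathbb{E}\|Y\|_2^2=\infty$ is unnecessary. The schedule $c$ is continuous on $[0,1]$ and hence bounded, $\varepsilon$ is Gaussian, and $\mathbb{E}\|A\|_2^2<\infty$ is assumed. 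So $Y=c(T)(A-\varepsilon)$ is automatically square-integrable, and all risks and $m_{\mathrm{tc}}$ are well defined.
\end{itemize}
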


\begin{proof}
For a square-integrable random vector $W$, define
\[
\|W\|_{L_D^2}
:=
\left(
\frac{1}{D}
\mathbb{E}\|W\|_2^2
\right)^{1/2}.
\]

The orthogonality principle for conditional expectations gives, for every measurable $h(X,T,Z)$,
\begin{equation}
R(h)-R_{\mathrm{tc}}^\star
=
\|h-m_{\mathrm{tc}}\|_{L_D^2}^2.
\label{eq:regression-orthogonality}
\end{equation}
Indeed,
\begin{align*}
\|Y-h\|_{L_D^2}^2
&=
\|Y-m_{\mathrm{tc}}+m_{\mathrm{tc}}-h\|_{L_D^2}^2\\
&=
\|Y-m_{\mathrm{tc}}\|_{L_D^2}^2
+
\|m_{\mathrm{tc}}-h\|_{L_D^2}^2,
\end{align*}
because
\[
\mathbb{E}
\left[
(Y-m_{\mathrm{tc}})^\top(m_{\mathrm{tc}}-h)
\right]
=
0.
\]

Since $g_\lambda(X,Z)$ is measurable with respect to $(X,T,Z)$,
\begin{align}
\|m_{\mathrm{tc}}-g_\lambda\|_{L_D^2}
&=
\left\|
\mathbb{E}
\left[
Y-g_\lambda(X,Z)
\mid X,T,Z
\right]
\right\|_{L_D^2}
\nonumber\\
&\leq
\|Y-g_\lambda(X,Z)\|_{L_D^2},
\label{eq:conditional-jensen-bound}
\end{align}
where the inequality follows from conditional Jensen.

Using \eqref{eq:bridge-restoring-identity} and
\eqref{eq:radial-energy-witness},
\begin{align}
Y-g_\lambda(X,Z)
&=
\alpha(T)(A-X)
-
\lambda_Z(R_X)(a_\star(Z)-X)
\nonumber\\
&=
\alpha(T)(A-a_\star(Z))
+
\bigl(\alpha(T)-\lambda_Z(R_X)\bigr)
(a_\star(Z)-X).
\label{eq:witness-error-decomposition}
\end{align}
Applying Minkowski's inequality,
\begin{align*}
\|Y-g_\lambda(X,Z)\|_{L_D^2}
&\leq
\left\|
\alpha(T)(A-a_\star(Z))
\right\|_{L_D^2}\\
&\quad+
\left\|
\bigl(\alpha(T)-\lambda_Z(R_X)\bigr)
(a_\star(Z)-X)
\right\|_{L_D^2}\\
&=
\sigma_{\mathrm{act}}
+
\sigma_{\mathrm{scale}}(\lambda),
\end{align*}
where
\[
\|a_\star(Z)-X\|_2=R_X.
\]
Together with \eqref{eq:conditional-jensen-bound}, this gives
\begin{equation}
\|m_{\mathrm{tc}}-g_\lambda\|_{L_D^2}
\leq
\sigma_{\mathrm{act}}
+
\sigma_{\mathrm{scale}}(\lambda).
\label{eq:conditional-mean-witness-bound}
\end{equation}

For any $\theta\in\Theta$, Minkowski's inequality and
\eqref{eq:conditional-mean-witness-bound} imply
\begin{align*}
\|m_{\mathrm{tc}}-\nabla_xE_\theta\|_{L_D^2}
&\leq
\|m_{\mathrm{tc}}-g_\lambda\|_{L_D^2}
+
\|g_\lambda-\nabla_xE_\theta\|_{L_D^2}\\
&\leq
\sigma_{\mathrm{act}}
+
\sigma_{\mathrm{scale}}(\lambda)
+
\|g_\lambda-\nabla_xE_\theta\|_{L_D^2}.
\end{align*}
Taking the infimum over $\theta$ gives
\[
\inf_{\theta\in\Theta}
\|m_{\mathrm{tc}}-\nabla_xE_\theta\|_{L_D^2}
\leq
\sigma_{\mathrm{act}}
+
\sigma_{\mathrm{scale}}(\lambda)
+
\delta_\Theta(\lambda).
\]
By \eqref{eq:regression-orthogonality},
\[
R_{\mathrm{EqC}}^\Theta-R_{\mathrm{tc}}^\star
=
\inf_{\theta\in\Theta}
\|m_{\mathrm{tc}}-\nabla_xE_\theta\|_{L_D^2}^2.
\]
Therefore,
\[
R_{\mathrm{EqC}}^\Theta-R_{\mathrm{tc}}^\star
\leq
\left(
\sigma_{\mathrm{act}}
+
\sigma_{\mathrm{scale}}(\lambda)
+
\delta_\Theta(\lambda)
\right)^2.
\]

Every EqC gradient $\nabla_xE_\theta(X,Z)$ is a stationary predictor. Hence,
\[
R_{\mathrm{st}}^\star
\leq
R_{\mathrm{EqC}}^\Theta.
\]
Moreover, every stationary predictor is contained in the class of time-conditioned predictors, so
\[
R_{\mathrm{tc}}^\star
\leq
R_{\mathrm{st}}^\star.
\]
Combining these inequalities completes the proof.
\end{proof}

Proposition~\ref{prop:stationary-energy-approximation} separates the stationary approximation error into three terms: conditional action dispersion, recovery of the bridge scale from the current state, and approximation by the chosen neural energy class.

\begin{corollary}[Exact stationary representation]
\label{cor:exact-stationary-representation}
Suppose that the following conditions hold:
\begin{align}
&A=a_\star(Z)
&&\text{almost surely},
\label{eq:exact-action-concentration}\\
&\operatorname{Var}\!\left(
\alpha(T)\mid R_X,Z
\right)=0
&&\text{almost surely}.
\label{eq:exact-scale-recovery}
\end{align}
Let
\[
\lambda_z(r)
:=
\mathbb{E}
\left[
\alpha(T)\mid R_X=r,Z=z
\right]
\]
be a continuous version of the conditional expectation. If there exists
$\theta_\star\in\Theta$ such that
\begin{equation}
\nabla_xE_{\theta_\star}(x,z)
=
\lambda_z(\|x-a_\star(z)\|_2)
\bigl(a_\star(z)-x\bigr)
\label{eq:exact-energy-realizability}
\end{equation}
on the support of $(X,Z)$, then
\begin{equation}
\nabla_xE_{\theta_\star}(X,Z)=Y
\qquad\text{almost surely}.
\label{eq:exact-target-representation}
\end{equation}
Consequently,
\begin{equation}
R_{\mathrm{EqC}}^\Theta
=
R_{\mathrm{st}}^\star
=
R_{\mathrm{tc}}^\star
=
0.
\label{eq:exact-risk-equality}
\end{equation}
\end{corollary}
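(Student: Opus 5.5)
The plan is to verify the pointwise identity \eqref{eq:exact-target-representation} directly from the decomposition \eqref{eq:witness-error-decomposition}. Then the risk equalities \eqref{eq:exact-risk-equality} follow from the sandwich of Proposition~\ref{prop:stationary-energy-approximation} together with nonnegativity of risks.

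First, on the support of $(X,Z)$, hypothesis \eqref{eq:exact-energy-realizability} together with Lemma~\ref{lem:radial-witness-gradient} gives $\nabla_xE_{\theta_\star}(X,Z)=g_\lambda(X,Z)$ almost surely. Here $\lambda$ is the given continuous version of the conditional expectation, so it is an admissible radial scale in the sense of Definition~\ref{def:radial-scale}. Next I use \eqref{eq:witness-error-decomposition}:
\[
Y-g_\lambda(X,Z)=\alpha(T)\bigl(A-a_\star(Z)\bigr)+\bigl(\alpha(T)-\lambda_Z(R_X)\bigr)\bigl(a_\star(Z)-X\bigr).
\]
The first term vanishes almost surely by \eqref{eq:exact-action-concentration}. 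Note that $\alpha(T)$ is finite for $T<1$, which holds almost surely, so no $0\cdot\infty$ issue arises.

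For the second term, \eqref{eq:exact-scale-recovery} says that $\alpha(T)$ has zero conditional variance given $(R_X,Z)$. This gives $\alpha(T)=\mathbb{E}[\alpha(T)\mid R_X,Z]=\lambda_Z(R_X)$ almost surely. This is the step needing care: zero conditional variance means $\mathbb{E}[(\alpha(T)-\lambda_Z(R_X))^2]=\mathbb{E}[\operatorname{Var}(\alpha(T)\mid R_X,Z)]=0$, by the tower property. That identity requires $\alpha(T)$ to be square integrable, or at least the conditional expectation to be well defined. Without integrability I would instead read \eqref{eq:exact-scale-recovery} as "$\alpha(T)$ is a.s.\ a measurable function of $(R_X,Z)$", which again identifies it with $\lambda_Z(R_X)$ a.s. One further subtlety is that the statement fixes a specific continuous version $\lambda$. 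Two versions agree a.s.\ under the law of $(R_X,Z)$, so evaluating at the random point $(R_X,Z)$ is unaffected. Hence the second term also vanishes almost surely, and $\nabla_xE_{\theta_\star}(X,Z)=Y$ a.s.

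Finally, plugging $\theta_\star$ into \eqref{eq:eqc-population-risk} gives $R_{\mathrm{EqC}}^\Theta\le 0$. Since all risks are nonnegative and the chain $R_{\mathrm{tc}}^\star\le R_{\mathrm{st}}^\star\le R_{\mathrm{EqC}}^\Theta$ was established in the proof of Proposition~\ref{prop:stationary-energy-approximation}, all three risks equal zero. As a consistency check, $\sigma_{\mathrm{act}}=\sigma_{\mathrm{scale}}(\lambda)=\delta_\Theta(\lambda)=0$ here, so the upper bound in \eqref{eq:conditional-concentration-bound} is also zero. The main obstacle is the integrability and version bookkeeping in the conditional-variance step. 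I would handle it by stating the square-integrability assumption on $\alpha(T)$ explicitly, noting that it holds for the schedule $c(t)=\min\{4,20(1-t)\}$, since then $\alpha\le 20$ is bounded.
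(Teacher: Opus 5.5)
Your proposal is correct and follows essentially the paper's argument: zero conditional variance forces $\alpha(T)=\lambda_Z(R_X)$ almost surely, concentration removes the action-dispersion term, so $Y=\lambda_Z(R_X)(a_\star(Z)-X)=\nabla_xE_{\theta_\star}(X,Z)$ almost surely, and the risks then vanish. Routing this through the witness decomposition and writing out $R_{\mathrm{tc}}^\star\le R_{\mathrm{st}}^\star\le R_{\mathrm{EqC}}^\Theta\le 0$ only repackages the same algebra, and your integrability and version remarks supply details the paper leaves implicit.
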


\begin{proof}
Equation~\eqref{eq:exact-scale-recovery} implies that
$\alpha(T)$ is measurable with respect to $(R_X,Z)$. Therefore,
\[
\alpha(T)
=
\mathbb{E}
\left[
\alpha(T)\mid R_X,Z
\right]
=
\lambda_Z(R_X)
\qquad\text{almost surely}.
\]
Using \eqref{eq:bridge-restoring-identity} and
\eqref{eq:exact-action-concentration},
\begin{align*}
Y
&=
\alpha(T)(A-X)\\
&=
\lambda_Z(R_X)(a_\star(Z)-X).
\end{align*}
By \eqref{eq:exact-energy-realizability},
\[
\lambda_Z(R_X)(a_\star(Z)-X)
=
\nabla_xE_{\theta_\star}(X,Z).
\]
Hence,
\[
Y=\nabla_xE_{\theta_\star}(X,Z)
\]
almost surely. The corresponding squared risk is zero, which proves
\eqref{eq:exact-risk-equality}.
\end{proof}

In the concentrated case $A=a_\star(Z)$,
\begin{equation}
R_X
=
(1-T)\|\varepsilon-a_\star(Z)\|_2.
\label{eq:radius-progress-relation}
\end{equation}
Thus, whenever $(R_X,Z)$ determines $\alpha(T)$, a stationary energy can represent a nonconstant bridge schedule without receiving $T$ explicitly.

\subsection{Monotone Energy Ascent}
\label{app:energy-ascent}

We next establish the optimization property of the inference update
\eqref{eq:appendix-inference-update}.

\begin{assumption}[Local smoothness]
\label{ass:local-smoothness}
Fix $z$. There exists $L>0$ such that, for every
$k\in\{0,\ldots,K-1\}$,
\begin{equation}
\left\|
\nabla_xE_\theta(x,z)-\nabla_xE_\theta(y,z)
\right\|_2
\leq
L\|x-y\|_2
\label{eq:gradient-lipschitz}
\end{equation}
for all $x$ and $y$ on the segment
\[
[x^{(k)},x^{(k+1)}]
:=
\left\{
x^{(k)}+s(x^{(k+1)}-x^{(k)}):
s\in[0,1]
\right\}.
\]
\end{assumption}

\begin{theorem}[Monotone energy ascent]
\label{thm:monotone-ascent}
Suppose Assumption~\ref{ass:local-smoothness} holds and EqC uses the exact gradient-ascent update
\[
x^{(k+1)}
=
x^{(k)}
+
\eta_k\nabla_xE_\theta(x^{(k)},z)
\]
with
\begin{equation}
0<\eta_k<\frac{2}{L}.
\label{eq:stable-step-size}
\end{equation}
Then
\begin{equation}
E_\theta(x^{(k+1)},z)-E_\theta(x^{(k)},z)
\geq
w_k
\left\|
\nabla_xE_\theta(x^{(k)},z)
\right\|_2^2,
\label{eq:monotone-energy-ascent}
\end{equation}
where
\begin{equation}
w_k
:=
\eta_k
\left(
1-\frac{L\eta_k}{2}
\right)>0.
\label{eq:ascent-weight}
\end{equation}
Therefore,
\[
E_\theta(x^{(k+1)},z)
\geq
E_\theta(x^{(k)},z),
\]
with strict inequality whenever
\[
\nabla_xE_\theta(x^{(k)},z)\neq 0.
\]
\end{theorem}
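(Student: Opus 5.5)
The argument is the standard descent lemma, run in the ascent direction and restricted to the segment where Assumption~\ref{ass:local-smoothness} holds. Fix $z$ and $k$, and write $g:=\nabla_xE_\theta(x^{(k)},z)$ and $p:=x^{(k+1)}-x^{(k)}=\eta_k g$. Define $\phi(s):=E_\theta(x^{(k)}+sp,z)$ for $s\in[0,1]$. Because $E_\theta(\cdot,z)$ is differentiable and its gradient is Lipschitz, hence continuous, on the segment $[x^{(k)},x^{(k+1)}]$, the function $\phi$ is continuously differentiable. Its derivative is $\phi'(s)=\nabla_xE_\theta(x^{(k)}+sp,z)^\top p$. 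The fundamental theorem of calculus then gives
\[
E_\theta(x^{(k+1)},z)-E_\theta(x^{(k)},z)=g^\top p+\int_0^1\bigl(\nabla_xE_\theta(x^{(k)}+sp,z)-g\bigr)^\top p\,ds .
\]
This is the same path-integral device used in the proof of Lemma~\ref{lem:path-consistency}, applied now to the inference segment instead of the bridge.

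Next we bound the integral term from below. By Cauchy--Schwarz and \eqref{eq:gradient-lipschitz}, both applied to points on the segment, the integrand is at least $-L s\|p\|_2^2$. Integrating over $s\in[0,1]$ shows the integral term is at least $-\tfrac{L}{2}\|p\|_2^2$. Substituting $p=\eta_k g$ yields
\[
E_\theta(x^{(k+1)},z)-E_\theta(x^{(k)},z)\ge \eta_k\|g\|_2^2-\tfrac{L}{2}\eta_k^2\|g\|_2^2=w_k\|g\|_2^2,
\]
which is \eqref{eq:monotone-energy-ascent}.

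Finally, the step-size condition \eqref{eq:stable-step-size} gives $1-L\eta_k/2>0$ and $\eta_k>0$, so $w_k>0$. This yields monotonicity of the energy, with strict increase whenever $g\neq 0$.

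The only point needing care is that the Lipschitz bound is assumed only on the segment $[x^{(k)},x^{(k+1)}]$, not globally. The integral representation evaluates the gradient only at points $x^{(k)}+sp$ with $s\in[0,1]$, and compares each of them with $x^{(k)}$, which is also on the segment. So the local hypothesis suffices. We should also check that differentiability of $\phi$ is justified on this set, which follows from the gradient being Lipschitz there. No other step presents an obstacle.
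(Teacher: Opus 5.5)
Your proposal is correct and follows the paper's proof essentially step for step. Both write the one-step energy change as a fundamental-theorem-of-calculus integral along $x^{(k)}+s\eta_k g_k$, split off $\eta_k\|g_k\|_2^2$, and bound the remainder below by $-\tfrac{L}{2}\eta_k^2\|g_k\|_2^2$ via Cauchy--Schwarz and the segment-local Lipschitz bound of Assumption~\ref{ass:local-smoothness}, which suffices because every compared point lies on $[x^{(k)},x^{(k+1)}]$, as you note.
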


\begin{proof}
Let
\[
g_k:=\nabla_xE_\theta(x^{(k)},z).
\]
The update is
\[
x^{(k+1)}=x^{(k)}+\eta_kg_k.
\]
Define the segment
\[
q_k(s):=x^{(k)}+s\eta_kg_k,
\qquad
s\in[0,1].
\]
By the fundamental theorem of calculus,
\begin{align}
E_\theta(x^{(k+1)},z)-E_\theta(x^{(k)},z)
&=
\int_0^1
\nabla_xE_\theta(q_k(s),z)^\top
(\eta_kg_k)\,ds
\nonumber\\
&=
\eta_k\|g_k\|_2^2
+
\eta_k
\int_0^1
\left(
\nabla_xE_\theta(q_k(s),z)-g_k
\right)^\top g_k\,ds.
\label{eq:ascent-integral-decomposition}
\end{align}
By Cauchy--Schwarz and Assumption~\ref{ass:local-smoothness},
\begin{align*}
\left(
\nabla_xE_\theta(q_k(s),z)-g_k
\right)^\top g_k
&\geq
-
\left\|
\nabla_xE_\theta(q_k(s),z)-g_k
\right\|_2
\|g_k\|_2\\
&\geq
-L\|q_k(s)-x^{(k)}\|_2\|g_k\|_2\\
&=
-Ls\eta_k\|g_k\|_2^2.
\end{align*}
Substituting this bound into
\eqref{eq:ascent-integral-decomposition},
\begin{align*}
E_\theta(x^{(k+1)},z)-E_\theta(x^{(k)},z)
&\geq
\eta_k\|g_k\|_2^2
-
L\eta_k^2\|g_k\|_2^2
\int_0^1s\,ds\\
&=
\eta_k
\left(
1-\frac{L\eta_k}{2}
\right)
\|g_k\|_2^2\\
&=
w_k\|g_k\|_2^2.
\end{align*}
The step-size condition \eqref{eq:stable-step-size} implies
$w_k>0$, proving the result.
\end{proof}

\begin{corollary}[Finite-step stationarity]
\label{cor:finite-step-stationarity}
Under the assumptions of Theorem~\ref{thm:monotone-ascent},
\begin{equation}
\min_{0\leq k<K}
\left\|
\nabla_xE_\theta(x^{(k)},z)
\right\|_2^2
\leq
\frac{
E_\theta(x^{(K)},z)-E_\theta(x^{(0)},z)
}{
\sum_{k=0}^{K-1}w_k
}.
\label{eq:finite-step-stationarity}
\end{equation}
If
\[
E_\theta(x^{(k)},z)\leq E_{\sup}
\qquad
\text{for }k=0,\ldots,K,
\]
then
\begin{equation}
\min_{0\leq k<K}
\left\|
\nabla_xE_\theta(x^{(k)},z)
\right\|_2^2
\leq
\frac{
E_{\sup}-E_\theta(x^{(0)},z)
}{
\sum_{k=0}^{K-1}w_k
}.
\label{eq:bounded-energy-stationarity}
\end{equation}
\end{corollary}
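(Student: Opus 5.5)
The plan is to telescope the per-step ascent inequality of Theorem~\ref{thm:monotone-ascent} over the $K$ refinement steps. For each $k\in\{0,\ldots,K-1\}$, inequality~\eqref{eq:monotone-energy-ascent} gives
\[
E_\theta(x^{(k+1)},z)-E_\theta(x^{(k)},z)\geq w_k\|\nabla_xE_\theta(x^{(k)},z)\|_2^2 .
\]
Summing over $k$, the left-hand side collapses to $E_\theta(x^{(K)},z)-E_\theta(x^{(0)},z)$.

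On the right-hand side I would use that every $w_k>0$, which holds under the step-size condition~\eqref{eq:stable-step-size}. Each squared gradient norm is then bounded below by $\min_{0\le j<K}\|\nabla_xE_\theta(x^{(j)},z)\|_2^2$, which gives
\[
\sum_{k=0}^{K-1}w_k\|\nabla_xE_\theta(x^{(k)},z)\|_2^2\geq\Bigl(\sum_{k=0}^{K-1}w_k\Bigr)\min_{0\le j<K}\|\nabla_xE_\theta(x^{(j)},z)\|_2^2 .
\]
Dividing by the strictly positive quantity $\sum_k w_k$ (positive because $K\ge1$ and each $w_k>0$) yields~\eqref{eq:finite-step-stationarity}.

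For the second claim, I would bound the numerator using $E_\theta(x^{(K)},z)\le E_{\sup}$, which is the hypothesis at $k=K$. Replacing the numerator then gives~\eqref{eq:bounded-energy-stationarity}. The assumption at the other indices is not needed, and by monotonicity it is in fact implied by the bound at $k=K$.

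There is no real obstacle in this argument. The only points to check are that the denominator is nonzero and that the inequality direction survives the division, and both follow from $w_k>0$.
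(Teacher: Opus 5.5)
Your proposal is correct and follows the paper's proof exactly: telescope the per-step inequality~\eqref{eq:monotone-energy-ascent}, bound the weighted sum below by $\bigl(\sum_k w_k\bigr)$ times the minimum squared gradient norm, divide by the positive weight sum, and use $E_\theta(x^{(K)},z)\le E_{\sup}$ for the second bound. Your side remark is also right: only the bound at $k=K$ is needed, and monotonicity makes the other indices redundant.
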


\begin{proof}
Summing \eqref{eq:monotone-energy-ascent} over
$k=0,\ldots,K-1$ gives
\begin{align*}
E_\theta(x^{(K)},z)-E_\theta(x^{(0)},z)
&=
\sum_{k=0}^{K-1}
\left(
E_\theta(x^{(k+1)},z)
-
E_\theta(x^{(k)},z)
\right)\\
&\geq
\sum_{k=0}^{K-1}
w_k
\left\|
\nabla_xE_\theta(x^{(k)},z)
\right\|_2^2\\
&\geq
\left(
\min_{0\leq k<K}
\left\|
\nabla_xE_\theta(x^{(k)},z)
\right\|_2^2
\right)
\sum_{k=0}^{K-1}w_k.
\end{align*}
Since each $w_k>0$, division by
$\sum_{k=0}^{K-1}w_k$ proves
\eqref{eq:finite-step-stationarity}. The bound
\eqref{eq:bounded-energy-stationarity} follows from
\[
E_\theta(x^{(K)},z)\leq E_{\sup}.
\]
\end{proof}

Theorem~\ref{thm:monotone-ascent} establishes that every EqC refinement step increases the same scalar energy under the stated smoothness and step-size conditions. Corollary~\ref{cor:finite-step-stationarity} quantifies the resulting approach to a first-order stationary point.

\subsection{Energy Consistency as a Path-Wise Mismatch Certificate}
\label{app:energy-consistency}

Let
\[
\hat{x}:=x^{(K)},
\qquad
v:=\hat{x}-x^{(0)}.
\]
For a fixed normalization constant $\delta>0$, define
\begin{equation}
D_v:=\max(\|v\|_2^2,\delta),
\qquad
\gamma_v:=\frac{\|v\|_2^2}{D_v}.
\label{eq:chord-normalization}
\end{equation}
For $u\in\mathbb{R}$, define
\[
\operatorname{clip}(u,0,1)
:=
\min\{1,\max\{0,u\}\}.
\]
The projected progress coordinate of iterate $x^{(k)}$ is
\begin{equation}
\tau_k
:=
\operatorname{clip}
\left(
\frac{
\langle x^{(k)}-x^{(0)},v\rangle
}{D_v},
0,1
\right).
\label{eq:projected-progress-coordinate}
\end{equation}
The corresponding point on the chord from $x^{(0)}$ to $\hat{x}$ is
\begin{equation}
\bar{x}^{(k)}
:=
x^{(0)}+\tau_kv,
\label{eq:chord-point}
\end{equation}
and the off-chord displacement is
\begin{equation}
h_k
:=
x^{(k)}-\bar{x}^{(k)}.
\label{eq:off-chord-displacement}
\end{equation}

Define the field mismatch along the endpoint chord by
\begin{equation}
e_v(s)
:=
\nabla_xE_\theta(x^{(0)}+sv,z)-c(s)v,
\qquad
s\in[0,1].
\label{eq:chord-field-error}
\end{equation}

The deployment-time energy residual is
\begin{equation}
r_k^{\Delta E}
:=
\frac{
E_\theta(\hat{x},z)-E_\theta(x^{(k)},z)
}{D_v}
-
C(\tau_k),
\label{eq:deployment-energy-residual}
\end{equation}
and the energy-consistency score is
\begin{equation}
U_{\Delta E}
:=
\frac{1}{K}
\sum_{k=1}^K
|r_k^{\Delta E}|.
\label{eq:deployment-energy-score}
\end{equation}

\begin{theorem}[Energy-consistency decomposition]
\label{thm:energy-consistency-decomposition}
Suppose that $E_\theta(\cdot,z)$ is continuously differentiable on the chord
\[
\{x^{(0)}+sv:s\in[0,1]\}.
\]
Then, for each $k\in\{1,\ldots,K\}$,
\begin{equation}
r_k^{\Delta E}
=
\frac{1}{D_v}
\int_{\tau_k}^1e_v(s)^\top v\,ds
+
\frac{
E_\theta(\bar{x}^{(k)},z)
-
E_\theta(x^{(k)},z)
}{D_v}
+
(\gamma_v-1)C(\tau_k).
\label{eq:energy-consistency-residual-decomposition}
\end{equation}
\end{theorem}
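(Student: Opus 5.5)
The plan is to follow the template of Lemma~\ref{lem:path-consistency}, but now along the endpoint chord $s\mapsto x^{(0)}+sv$ rather than the demonstration bridge. The decomposition comes from splitting the numerator of $r_k^{\Delta E}$ at the chord point $\bar{x}^{(k)}$:
\[
E_\theta(\hat{x},z)-E_\theta(x^{(k)},z)
=
\bigl[E_\theta(\hat{x},z)-E_\theta(\bar{x}^{(k)},z)\bigr]
+
\bigl[E_\theta(\bar{x}^{(k)},z)-E_\theta(x^{(k)},z)\bigr].
\]
The second bracket, divided by $D_v$, is already the off-chord term in \eqref{eq:energy-consistency-residual-decomposition}. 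No further work is needed on it.

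For the first bracket, I will note that $\hat{x}=x^{(0)}+1\cdot v$ and $\bar{x}^{(k)}=x^{(0)}+\tau_k v$ with $\tau_k\in[0,1]$ by the clipping. Both points therefore lie on the chord, where $E_\theta(\cdot,z)$ is $C^1$ by hypothesis. The chain rule gives
\[
\frac{d}{ds}E_\theta(x^{(0)}+sv,z)=\nabla_xE_\theta(x^{(0)}+sv,z)^\top v.
\]
Substituting the definition \eqref{eq:chord-field-error}, this equals $c(s)\|v\|_2^2+e_v(s)^\top v$. Integrating from $\tau_k$ to $1$ then gives
\[
E_\theta(\hat{x},z)-E_\theta(\bar{x}^{(k)},z)=\|v\|_2^2C(\tau_k)+\int_{\tau_k}^1e_v(s)^\top v\,ds.
\]

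Dividing by $D_v$ and using $\gamma_v=\|v\|_2^2/D_v$ turns the schedule part into $\gamma_vC(\tau_k)$. Subtracting $C(\tau_k)$, as in the definition \eqref{eq:deployment-energy-residual}, leaves exactly $(\gamma_v-1)C(\tau_k)$. Collecting the three pieces yields \eqref{eq:energy-consistency-residual-decomposition}.

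There is no genuine obstacle here. The only points to check are that $\tau_k\in[0,1]$, so the integral stays within the interval where differentiability is assumed, and that $D_v\ge\delta>0$, so the division is legitimate. The degenerate case $v=0$ is also harmless: then $\bar{x}^{(k)}=x^{(0)}=\hat{x}$, $\gamma_v=0$, and the integral term vanishes. The identity then still holds, since both sides reduce to $\bigl(E_\theta(x^{(0)},z)-E_\theta(x^{(k)},z)\bigr)/D_v-C(\tau_k)$.
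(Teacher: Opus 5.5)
Your proposal is correct and follows essentially the same route as the paper's proof. You split the energy gap at the chord point $\bar{x}^{(k)}$, apply the fundamental theorem of calculus along $s\mapsto x^{(0)}+sv$ on $[\tau_k,1]$ with $\nabla_xE_\theta = c(s)v+e_v(s)$, then divide by $D_v$ and subtract $C(\tau_k)$; your remarks on $\tau_k\in[0,1]$, $D_v\geq\delta>0$, and the case $v=0$ are harmless extra checks already covered by the general argument.
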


\begin{proof}
Define the chord
\[
q(s):=x^{(0)}+sv.
\]
Since
\[
q(\tau_k)=\bar{x}^{(k)},
\qquad
q(1)=\hat{x},
\qquad
\frac{dq(s)}{ds}=v,
\]
the fundamental theorem of calculus gives
\begin{align*}
E_\theta(\hat{x},z)-E_\theta(\bar{x}^{(k)},z)
&=
\int_{\tau_k}^1
\nabla_xE_\theta(q(s),z)^\top v\,ds\\
&=
\int_{\tau_k}^1
\bigl(c(s)v+e_v(s)\bigr)^\top v\,ds\\
&=
\|v\|_2^2
\int_{\tau_k}^1c(s)\,ds
+
\int_{\tau_k}^1e_v(s)^\top v\,ds\\
&=
\|v\|_2^2C(\tau_k)
+
\int_{\tau_k}^1e_v(s)^\top v\,ds.
\end{align*}
Adding
\[
E_\theta(\bar{x}^{(k)},z)-E_\theta(x^{(k)},z)
\]
to both sides gives
\begin{align*}
E_\theta(\hat{x},z)-E_\theta(x^{(k)},z)
&=
\|v\|_2^2C(\tau_k)
+
\int_{\tau_k}^1e_v(s)^\top v\,ds\\
&\quad+
E_\theta(\bar{x}^{(k)},z)
-
E_\theta(x^{(k)},z).
\end{align*}
Dividing by $D_v$ and subtracting $C(\tau_k)$ yields
\begin{align*}
r_k^{\Delta E}
&=
\frac{1}{D_v}
\int_{\tau_k}^1e_v(s)^\top v\,ds\\
&\quad+
\frac{
E_\theta(\bar{x}^{(k)},z)
-
E_\theta(x^{(k)},z)
}{D_v}\\
&\quad+
\left(
\frac{\|v\|_2^2}{D_v}-1
\right)C(\tau_k).
\end{align*}
Using
\[
\gamma_v=\frac{\|v\|_2^2}{D_v}
\]
proves \eqref{eq:energy-consistency-residual-decomposition}.
\end{proof}

The three terms in
\eqref{eq:energy-consistency-residual-decomposition} have distinct origins. The first is the integrated field mismatch along the endpoint chord. The second measures the energy discrepancy caused by deviation of the realized iterate from that chord. The third is the correction introduced by the normalization floor.

Define the mean integrated chord mismatch
\begin{equation}
A_K
:=
\frac{1}{KD_v}
\sum_{k=1}^K
\left|
\int_{\tau_k}^1e_v(s)^\top v\,ds
\right|.
\label{eq:mean-integrated-field-mismatch}
\end{equation}

\begin{assumption}[Bounded field on off-chord segments]
\label{ass:bounded-off-chord-field}
There exists $G<\infty$ such that
\[
\|\nabla_xE_\theta(y,z)\|_2\leq G
\]
for every $y$ on every segment
\[
[x^{(k)},\bar{x}^{(k)}],
\qquad
k=1,\ldots,K.
\]
\end{assumption}

Under Assumption~\ref{ass:bounded-off-chord-field}, define
\begin{equation}
B_K
:=
\frac{G}{KD_v}
\sum_{k=1}^K\|h_k\|_2
+
\frac{|\gamma_v-1|}{K}
\sum_{k=1}^K|C(\tau_k)|.
\label{eq:off-chord-floor-bound}
\end{equation}

\begin{proposition}[Bounds on the energy-consistency score]
\label{prop:energy-consistency-bounds}
Under Assumption~\ref{ass:bounded-off-chord-field},
\begin{equation}
[A_K-B_K]_+
\leq
U_{\Delta E}
\leq
A_K+B_K,
\label{eq:energy-consistency-absolute-bound}
\end{equation}
where
\[
[q]_+:=\max(q,0).
\]
Moreover,
\begin{equation}
A_K
\leq
\frac{\|v\|_2}{KD_v}
\int_0^1
N_K(s)\|e_v(s)\|_2\,ds,
\label{eq:energy-consistency-field-bound}
\end{equation}
where
\begin{equation}
N_K(s)
:=
\sum_{k=1}^K
\mathbf{1}\{\tau_k\leq s\}.
\label{eq:remaining-interval-count}
\end{equation}
\end{proposition}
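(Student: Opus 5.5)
The plan is to start from the pointwise decomposition in Theorem~\ref{thm:energy-consistency-decomposition} and write, for each $k\in\{1,\ldots,K\}$,
\[
r_k^{\Delta E}=a_k+b_k,\qquad
a_k:=\frac{1}{D_v}\int_{\tau_k}^1e_v(s)^\top v\,ds,\qquad
b_k:=\frac{E_\theta(\bar{x}^{(k)},z)-E_\theta(x^{(k)},z)}{D_v}+(\gamma_v-1)C(\tau_k).
\]
By construction $A_K=\frac1K\sum_k|a_k|$. It therefore suffices to show $\frac1K\sum_k|b_k|\le B_K$. The two-sided bound \eqref{eq:energy-consistency-absolute-bound} then follows from the triangle inequalities $|a_k|-|b_k|\le|r_k^{\Delta E}|\le|a_k|+|b_k|$ after averaging over $k$. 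For the lower bound, note that $U_{\Delta E}\ge0$ trivially, which gives the positive part.

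To control $b_k$, apply the mean value inequality along the segment $[x^{(k)},\bar{x}^{(k)}]$. The energy is continuously differentiable there, which I would either assume alongside Assumption~\ref{ass:bounded-off-chord-field} or read as implicit in it. Writing $p(u)=x^{(k)}+u(\bar{x}^{(k)}-x^{(k)})$, the fundamental theorem of calculus gives
\[
\bigl|E_\theta(\bar{x}^{(k)},z)-E_\theta(x^{(k)},z)\bigr|=\Bigl|\int_0^1\nabla_xE_\theta(p(u),z)^\top(-h_k)\,du\Bigr|\le G\|h_k\|_2 .
\]
The floor term contributes exactly $|\gamma_v-1|\,|C(\tau_k)|$. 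Averaging over $k$ reproduces $B_K$ term by term.

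For \eqref{eq:energy-consistency-field-bound}, bound each integral by Cauchy--Schwarz:
\[
\Bigl|\int_{\tau_k}^1e_v(s)^\top v\,ds\Bigr|\le\|v\|_2\int_0^1\mathbf 1\{\tau_k\le s\}\|e_v(s)\|_2\,ds .
\]
Summing over $k$ and exchanging the finite sum with the integral produces $N_K(s)$. Dividing by $KD_v$ then gives the claim.

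The only delicate point is regularity. The field must be integrable on the chord, which is guaranteed by the $C^1$ hypothesis of Theorem~\ref{thm:energy-consistency-decomposition}, and differentiability is needed on the off-chord segments. Beyond that the argument is routine, and I expect no real obstacle.

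One indexing caveat should be stated explicitly. The sum in \eqref{eq:deployment-energy-score} runs over $k=1,\ldots,K$, whereas the main text averages over $k=0,\ldots,K-1$. The bound holds for whichever index set is used consistently in $U_{\Delta E}$, $A_K$, and $B_K$.
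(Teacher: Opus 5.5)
Your proposal is correct and follows the paper's proof essentially step for step. It uses the decomposition from Theorem~\ref{thm:energy-consistency-decomposition}, the triangle and reverse-triangle inequalities averaged over $k$ (with $U_{\Delta E}\ge 0$ supplying the positive part), the fundamental-theorem-of-calculus bound $G\|h_k\|_2$ on the off-chord segment, and Cauchy--Schwarz followed by exchanging the finite sum with the integral to produce $N_K(s)$. The only differences are cosmetic: you merge the off-chord and floor terms into a single $b_k$, whereas the paper keeps them separate as $b_k$ and $q_k$. Your remarks on the implicit differentiability along the off-chord segments and on the $k=0,\ldots,K-1$ versus $k=1,\ldots,K$ indexing are accurate side observations rather than gaps.
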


\begin{proof}
For each $k$, write
\begin{align*}
a_k
&:=
\frac{1}{D_v}
\int_{\tau_k}^1e_v(s)^\top v\,ds,\\
b_k
&:=
\frac{
E_\theta(\bar{x}^{(k)},z)
-
E_\theta(x^{(k)},z)
}{D_v},\\
q_k
&:=
(\gamma_v-1)C(\tau_k).
\end{align*}
By Theorem~\ref{thm:energy-consistency-decomposition},
\[
r_k^{\Delta E}=a_k+b_k+q_k.
\]
The triangle and reverse-triangle inequalities give
\begin{equation}
|a_k|-|b_k|-|q_k|
\leq
|r_k^{\Delta E}|
\leq
|a_k|+|b_k|+|q_k|.
\label{eq:per-iterate-residual-bound}
\end{equation}

By the fundamental theorem of calculus on the segment from
$x^{(k)}$ to $\bar{x}^{(k)}$,
\begin{align*}
&\left|
E_\theta(\bar{x}^{(k)},z)
-
E_\theta(x^{(k)},z)
\right|\\
&\quad\leq
\sup_{y\in[x^{(k)},\bar{x}^{(k)}]}
\|\nabla_xE_\theta(y,z)\|_2
\,
\|\bar{x}^{(k)}-x^{(k)}\|_2\\
&\quad\leq
G\|h_k\|_2.
\end{align*}
Therefore,
\[
|b_k|
\leq
\frac{G}{D_v}\|h_k\|_2,
\qquad
|q_k|
=
|\gamma_v-1|\,|C(\tau_k)|.
\]
Averaging \eqref{eq:per-iterate-residual-bound} over
$k=1,\ldots,K$ gives
\[
A_K-B_K
\leq
U_{\Delta E}
\leq
A_K+B_K.
\]
Since $U_{\Delta E}\geq0$, the lower bound can be written as
$[A_K-B_K]_+$.

For the second result, Cauchy--Schwarz implies
\[
|e_v(s)^\top v|
\leq
\|e_v(s)\|_2\|v\|_2.
\]
Consequently,
\begin{align*}
A_K
&\leq
\frac{\|v\|_2}{KD_v}
\sum_{k=1}^K
\int_{\tau_k}^1
\|e_v(s)\|_2\,ds\\
&=
\frac{\|v\|_2}{KD_v}
\int_0^1
\left(
\sum_{k=1}^K
\mathbf{1}\{\tau_k\leq s\}
\right)
\|e_v(s)\|_2\,ds\\
&=
\frac{\|v\|_2}{KD_v}
\int_0^1
N_K(s)\|e_v(s)\|_2\,ds.
\end{align*}
\end{proof}

Proposition~\ref{prop:energy-consistency-bounds} shows that the observed score $U_{\Delta E}$ is controlled by the integrated chord-wise field error, up to explicit corrections for off-chord motion and normalization.

\begin{corollary}[Exact on-chord characterization]
\label{cor:exact-on-chord-characterization}
Suppose that the normalization floor is inactive,
\begin{equation}
D_v=\|v\|_2^2,
\label{eq:inactive-normalization-floor}
\end{equation}
and every refinement iterate lies on the endpoint chord:
\begin{equation}
x^{(k)}=\bar{x}^{(k)},
\qquad
k=1,\ldots,K.
\label{eq:on-chord-condition}
\end{equation}
Then
\begin{equation}
U_{\Delta E}
=
\frac{1}{K\|v\|_2^2}
\sum_{k=1}^K
\left|
\int_{\tau_k}^1e_v(s)^\top v\,ds
\right|.
\label{eq:energy-consistency-on-chord-exact}
\end{equation}
Furthermore,
\begin{equation}
\frac{1}{K\|v\|_2^2}
\left|
\int_0^1
N_K(s)e_v(s)^\top v\,ds
\right|
\leq
U_{\Delta E}
\leq
\frac{1}{K\|v\|_2}
\int_0^1
N_K(s)\|e_v(s)\|_2\,ds.
\label{eq:energy-consistency-on-chord-bounds}
\end{equation}
\end{corollary}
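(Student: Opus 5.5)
Under the two hypotheses, the two correction terms in Theorem~\ref{thm:energy-consistency-decomposition} vanish, and everything reduces to the chord-mismatch term.

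First, \eqref{eq:inactive-normalization-floor} gives $\gamma_v=\|v\|_2^2/D_v=1$, so the normalization correction $(\gamma_v-1)C(\tau_k)$ is zero. Second, \eqref{eq:on-chord-condition} gives $\bar x^{(k)}=x^{(k)}$, so the off-chord term $\bigl(E_\theta(\bar x^{(k)},z)-E_\theta(x^{(k)},z)\bigr)/D_v$ is zero. This uses only the decomposition identity, not Assumption~\ref{ass:bounded-off-chord-field}. What remains is
\[
r_k^{\Delta E}=\frac{1}{\|v\|_2^2}\int_{\tau_k}^1e_v(s)^\top v\,ds .
\]
Taking absolute values and averaging over $k=1,\ldots,K$ yields \eqref{eq:energy-consistency-on-chord-exact}. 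In other words, $U_{\Delta E}=A_K$ exactly.

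For the upper bound in \eqref{eq:energy-consistency-on-chord-bounds}, I would invoke \eqref{eq:energy-consistency-field-bound} from Proposition~\ref{prop:energy-consistency-bounds} with $D_v=\|v\|_2^2$. The prefactor $\|v\|_2/(K\|v\|_2^2)$ simplifies to $1/(K\|v\|_2)$. That proposition's field-bound step relies only on Cauchy--Schwarz and exchanging the finite sum with the integral, so it needs no bounded-field assumption.

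For the lower bound, I would use the triangle inequality in the reverse direction: $\sum_k|\int_{\tau_k}^1 e_v^\top v\,ds|\ge|\sum_k\int_{\tau_k}^1 e_v^\top v\,ds|$. Then I would rewrite the sum of integrals via the indicator identity $\sum_k\int_{\tau_k}^1 f(s)\,ds=\int_0^1 N_K(s)f(s)\,ds$. This identity holds because $\tau_k\in[0,1]$ after clipping, and $\mathbf 1\{\tau_k\le s\}$ differs from $\mathbf 1\{s\in[\tau_k,1]\}$ only at measure-zero points. Dividing by $K\|v\|_2^2$ gives the left inequality.

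The main point needing care is well-definedness. Continuous differentiability on the chord makes $s\mapsto e_v(s)^\top v$ continuous when $c$ is continuous, so all integrals exist and Fubini/linearity applies. The floor being inactive also implies $\|v\|_2^2\ge\delta>0$, so every division by $\|v\|_2$ is legitimate. I expect no genuine obstacle beyond bookkeeping the indicator exchange.
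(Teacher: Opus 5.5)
Your proposal is correct and follows the paper's proof step for step. The two hypotheses remove the off-chord and floor terms of the decomposition. The lower bound then comes from the triangle inequality combined with the indicator rewrite of $\sum_k\int_{\tau_k}^1 e_v(s)^\top v\,ds$ as $\int_0^1 N_K(s)e_v(s)^\top v\,ds$. The upper bound comes from Cauchy--Schwarz: you import it from the proposition's field bound, whereas the paper redoes that short computation directly, and you correctly note that the bounded-field assumption is not needed.
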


\begin{proof}
If $D_v=\|v\|_2^2$, then
\[
\gamma_v=1.
\]
If $x^{(k)}=\bar{x}^{(k)}$, then
\[
E_\theta(\bar{x}^{(k)},z)
-
E_\theta(x^{(k)},z)
=
0.
\]
The decomposition in
\eqref{eq:energy-consistency-residual-decomposition} therefore reduces to
\[
r_k^{\Delta E}
=
\frac{1}{\|v\|_2^2}
\int_{\tau_k}^1e_v(s)^\top v\,ds.
\]
Taking absolute values and averaging proves
\eqref{eq:energy-consistency-on-chord-exact}.

For the lower bound, observe that
\begin{align*}
\int_0^1N_K(s)e_v(s)^\top v\,ds
&=
\int_0^1
\sum_{k=1}^K
\mathbf{1}\{\tau_k\leq s\}
e_v(s)^\top v\,ds\\
&=
\sum_{k=1}^K
\int_{\tau_k}^1
e_v(s)^\top v\,ds.
\end{align*}
Hence, by the triangle inequality,
\[
\left|
\int_0^1N_K(s)e_v(s)^\top v\,ds
\right|
\leq
\sum_{k=1}^K
\left|
\int_{\tau_k}^1e_v(s)^\top v\,ds
\right|.
\]
Dividing by $K\|v\|_2^2$ proves the lower bound.

For the upper bound,
\begin{align*}
U_{\Delta E}
&=
\frac{1}{K\|v\|_2^2}
\sum_{k=1}^K
\left|
\int_{\tau_k}^1e_v(s)^\top v\,ds
\right|\\
&\leq
\frac{1}{K\|v\|_2}
\sum_{k=1}^K
\int_{\tau_k}^1\|e_v(s)\|_2\,ds\\
&=
\frac{1}{K\|v\|_2}
\int_0^1N_K(s)\|e_v(s)\|_2\,ds.
\end{align*}
\end{proof}

Corollary~\ref{cor:exact-on-chord-characterization} provides the exact interpretation of the EqC score in the ideal geometric regime: $U_{\Delta E}$ is the average normalized magnitude of the remaining integrated longitudinal field error along the refinement path.

\subsection{Scope of the auxiliary analysis}\label{app:scope}

\begin{itemize}
\item The empirical findings of \citet{pan2026much} motivate the conditional-concentration regime; they do not prove
that \(\sigma_{\mathrm{act}}\) or \(\sigma_{\mathrm{scale}}\) is small for EqC. These quantities require direct
diagnostics on the selected datasets and trained policies.
\item Proposition~\ref{prop:stationary-energy-approximation} is an existence upper bound based on a single-basin witness. Low conditional dispersion is
sufficient, not necessary: a nonlinear energy may represent several action basins. Its weighted moment conditions
also cover schedules for which \(c(T)/(1-T)\) grows near the endpoint.
\item Theorem~\ref{thm:monotone-ascent} concerns optimization of the learned scalar only. Neither stationarity nor increasing energy is a
correctness guarantee, and a generic stationary field can have a separately established Lyapunov function.
\item Theorem~\ref{thm:energy-consistency-decomposition} decomposes the mean absolute self-consistency residual. Absolute averaging prevents cancellation
across refinement iterates, but opposite field errors can still cancel within an individual remaining-work integral.
The normalization floor matters for short chords, and a wrong but internally coherent trajectory can have a small
score.
\end{itemize}


\section{Real-Robot Experimental Setup and Evaluation Protocol}\label{app:real-robot-details}

\subsection{Platform and Hardware Architecture}\label{app:real-robot-hardware}
We conduct all physical experiments on a bimanual manipulation workstation comprising two lightweight, force-controlled, 6-DOF \textbf{Galaxea A1Z} robotic arms. The policies are fine-tuned from the \textbf{Galaxea G0.5} VLA model. Each A1Z manipulator weighs $5\,\text{kg}$ and has an operating reach radius of $626.5\,\text{mm}$, an operating height of $722\,\text{mm}$, and a rated continuous payload of $3\,\text{kg}$ (with a peak payload of $5\,\text{kg}$ at full extension).

Both manipulators are equipped with parallel-jaw grippers, yielding 14 active control degrees of freedom (twelve arm joints and two gripper channels). Visual inputs are captured simultaneously by three synchronized RGB cameras: an overhead global camera and two wrist-mounted cameras (left and right). Policy inputs also incorporate 14-dimensional bimanual proprioception (joint angles and gripper states). The physical control frequency is set to $15\,\text{Hz}$, and the receding-horizon action chunk size is set to $T_a = 32$.

\subsection{Task Specifications and Success Criteria}\label{app:real-robot-tasks}

\paragraph{Benchmark 1: Flower Insertion.}
The \textit{Flower Insertion} task (Section~\ref{real-world-experiment}) evaluates bimanual coordination, 3D spatial alignment, and contact-rich insertion. To successfully complete the task, the dual-arm system must:
\begin{enumerate}
    \item Approach and grasp the flexible stem from the workspace surface;
    \item Lift and reorient the stem in 3D space to align its longitudinal axis with the vertical entry axis of the vase opening;
    \item Smoothly feed the stem through the aperture until fully seated, dynamically correcting minor contact misalignments at the rim without tipping the vase over.
\end{enumerate}
A trial is scored as a success if the stem is fully inserted and the gripper safely releases without disturbing the upright posture of the vase.

\paragraph{Benchmark 2: Towel Folding.}
The \textit{Towel Folding} task evaluates bimanual spatial coordination, deformable object manipulation, and sequential multi-stage execution. The robot must perform a two-stage quarter-fold (folding both laterally and longitudinally):
\begin{enumerate}
    \item \textbf{First Fold (Horizontal):} Simultaneously approach and pinch two adjacent corners/edges of the unfolded towel using both grippers, lift and fold the towel in half laterally, and lay it down smoothly;
    \item \textbf{Reorientation:} Rotate the laterally folded towel using both arms to reorient it for the longitudinal fold;
    \item \textbf{Second Fold (Vertical) and Release:} Reposition both arms to regrasp the newly formed folded edges, lift the towel, and fold it in half longitudinally to form a compact four-layer square; then smooth the top layer down and cleanly release both grippers without disturbing the folded creases or dragging the fabric.
\end{enumerate}
A trial is scored as a success if and only if: (1) both the lateral and longitudinal folds are successfully executed, (2) the final folded fabric is reduced to approximately a quarter of its original footprint with aligned edges, and (3) the towel remains stably and neatly folded without unfolding or flipping after gripper release.

\paragraph{Benchmark 3: Debris Sweeping.}
The \textit{Debris Sweeping} task evaluates multi-stage bimanual tool coordination and granular object collection. The dual-arm robot must coordinate a hand brush (or mini broom) and a dustpan to gather scattered paper scraps from the tabletop and dump them into a designated waste bin:
\begin{enumerate}
    \item \textbf{Dual-Tool Grasping and Positioning:} The left manipulator picks up the brush from the holder tray, while the right manipulator grasps the dustpan and places its front edge flat against the table surface near the perimeter of the debris cluster;
    \item \textbf{Contact-Rich Sweeping:} The left arm coordinates with the right arm to sweep scattered paper scraps cleanly into the cavity of the dustpan using multiple sweeping strokes, maintaining proper surface contact without crushing or scattering the debris beyond the reachable workspace;
    \item \textbf{Transport and Dumping:} The right arm lifts the loaded dustpan, transfers it over the rim of the desktop waste bin, tilts it to dump the collected scraps into the container, and returns to a neutral position.
\end{enumerate}
A trial is scored as a success if and only if: (1) all of the scattered paper scraps are cleanly collected and transferred into the waste bin, (2) neither the brush nor the dustpan is dropped or detached from the grippers, and (3) the waste bin remains standing upright without being tipped over during the dumping motion.

\paragraph{Benchmark 4: Hierarchical Block Stacking.}
The \textit{Hierarchical Block Stacking} task serves as the testbed for both online competence awareness (Section~\ref{real-world-rollout-analysis}) and closed-loop lifelong learning (Section~\ref{lifelong-learning}), evaluating multi-stage sequential execution, precise contact and balance, and tolerance to compounding errors. The task involves three color-coded blocks (green, blue, and red). In the nominal collinear configuration, the three blocks are placed collinearly on the tabletop in each of the $3! = 6$ possible color permutations (e.g., green-blue-red, blue-green-red, and red-blue-green). The green block serves as a stationary base, remaining at its initial position while the dual-arm system identifies the scene layout and manipulates the other blocks. In the disordered configuration, the blocks are initialized at arbitrary, non-collinear 2D positions with unaligned orientations. The robot must perform the following stages:
\begin{enumerate}
    \item \textbf{Base Identification and Blue Block Placement:} The dual-arm system identifies the scene layout, grasps the blue block, and balances it on the green base;
    \item \textbf{Red Block Placement:} The robot then grasps the red block and balances it on the blue block to construct a freestanding three-tier tower;
    \item \textbf{Release and Stabilization:} Both grippers release the assembly and clear the scene, leaving the tower upright.
\end{enumerate}
A trial is scored as a success if and only if: (1) the blue block is successfully balanced on the green base, (2) the red block is successfully balanced on the blue block, and (3) the final three-tier assembly (green $\to$ blue $\to$ red) remains upright and stable for at least 3 seconds after both grippers release and clear the scene.

\paragraph{Benchmark 5: Garment Folding.}
The \textit{Garment Folding} task evaluates long-horizon bimanual deformable-object manipulation. Starting from an unfolded garment laid flat on the table, the dual-arm system must:
\begin{enumerate}
    \item \textbf{Sleeve Folding:} Sequentially grasp, lift, and fold the left and right sleeves inward across the torso;
    \item \textbf{Longitudinal Fold:} Reposition both grippers to pinch the bottom hem and fold the lower half of the garment upward along the vertical centerline;
    \item \textbf{Lateral Fold:} Grasp the outer folded edge and fold the garment horizontally to form a compact rectangular profile;
    \item \textbf{Smoothing and Alignment:} Dynamically smooth out wrinkles and creases across the top layer with downward contact wiping.
\end{enumerate}
A trial is scored as a success if and only if: (1) all four folding and smoothing stages are sequentially completed, (2) the final footprint is reduced cleanly without unfolding or severe edge misalignments, and (3) both grippers safely disengage without dragging or wrinkling the finished garment.

\paragraph{Benchmark 6: Hierarchical Cup Stacking (Cross-Task Shift).}
To evaluate the policy's lifelong adaptation and catastrophic forgetting mitigation under explicit cross-task shift, we introduce the \textit{Hierarchical Cup Stacking} task. The setup employs three color-coded tapered cups (green, blue, and orange). Identical to the collinear setup in \textit{Hierarchical Block Stacking}, the three cups are initialized collinearly in front of the robot across all $3! = 6$ spatial permutations. The green cup serves as the fixed base. The dual-arm system must:
\begin{enumerate}
    \item Identify the target cup layout, approach, and securely grasp the blue cup, carefully placing it upside down on top of the green base cup with its rim facing downward, without knocking the base cup over;
    \item Reposition the manipulators to grasp the orange cup, align its orientation, and stack it neatly atop the blue cup to form a compact three-tier nested cup assembly.
\end{enumerate}
The success criterion strictly mirrors block stacking: a trial is deemed successful if and only if both stacking stages are completed without tipping, and the resulting three-cup tower remains stable and upright for at least 3 seconds after the grippers fully release and clear the scene.

\subsection{Training Configurations and Computational Overhead}\label{app:real-robot-training}

All models are fine-tuned from base Galaxea G0.5 checkpoints on a GPU cluster with 768GB of total GPU memory and approximately 1.2 PFLOPS of BF16 compute, using the AdamW optimizer with a cosine learning-rate decay schedule. Table~\ref{tab:real-robot-params} lists the hyperparameters shared across the real-world experiments.

During physical deployment, EqC performs $K = 3$ energy gradient ascent steps starting from Gaussian noise $\mathcal{N}(0, I)$ to iteratively refine the action chunk, employing a decaying step-size schedule $[\eta_0, \eta_1, \eta_2] = [0.12, 0.10, 0.05]$. This decaying schedule facilitates rapid initial progress toward the high-compatibility energy basin followed by fine-grained local stabilization, while keeping the overall inference latency within practical limits.

\begin{table}[ht]
  \caption{Unified training and inference hyperparameters for real-robot experiments.}
  \label{tab:real-robot-params}
  \centering
  \tabstyle
  \setlength{\tabcolsep}{14pt}
  \begin{tabular}{@{}ll@{}}
    \toprule
    \headrow Hyperparameter & Value \\
    \midrule
    Base VLA Backbone & Galaxea G0.5 \\
    Optimizer & AdamW \\
    Batch Size & 16 \\
    Peak Learning Rate & $8 \times 10^{-5}$ \\
    Warmup Schedule & Linear warmup (500 steps) \\
    Decay Schedule & Cosine decay (\texttt{warmup\_constant\_cosine}) \\
    Action Chunk Horizon ($T_a$) & 32 \\
    Control Frequency & 15\,Hz \\
    \midrule
    EqC Refinement Steps ($K$) & 3 \\
    EqC Step Sizes ($\eta_k$) & $[0.12, 0.10, 0.05]$ \\
    \midrule
    Training Epochs (Flower Insertion) & 6 epochs (82 demonstrations) \\
    Training Epochs (Towel Folding) & 4 epochs (36 demonstrations) \\
    Training Epochs (Debris Sweeping) & 4 epochs (142 demonstrations) \\
    Training Epochs (Lifelong Learning) & 2 epochs per update session \\
    \bottomrule
  \end{tabular}
\end{table}

\paragraph{Wall-Clock Fine-Tuning Overhead.}
To assess the computational cost of learning an explicit energy function, Table~\ref{tab:training-time} reports wall-clock fine-tuning times for \textit{Flower Insertion} (6 epochs, 82 demonstrations), \textit{Towel Folding} (4 epochs, 36 demonstrations), and \textit{Debris Sweeping} (4 epochs, 142 demonstrations). Computing explicit energy gradients during backpropagation introduces only a modest overhead, increasing training time by 25.9\% on Flower Insertion, 22.1\% on Towel Folding, and 17.5\% on Debris Sweeping.

\begin{table}[ht]
  \caption{Wall-clock fine-tuning time across real-robot tasks.}
  \label{tab:training-time}
  \centering
  \tabstyle
  \setlength{\tabcolsep}{12pt}
  \begin{tabular}{@{}lcc@{}}
    \toprule
    \headrow Method & Time (min) & Overhead \\
    \midrule
    \multicolumn{3}{l}{\paneltitle{Flower Insertion}} \\
    Galaxea G0.5 & \best{11.58} & --- \\
    \addlinespace[1pt]
    \eecrow Galaxea G0.5 + \eqcmethod & 14.58 & +25.9\% \\
    \midrule
    \multicolumn{3}{l}{\paneltitle{Towel Folding}} \\
    Galaxea G0.5 & \best{40.48} & --- \\
    \addlinespace[1pt]
    \eecrow Galaxea G0.5 + \eqcmethod & 49.42 & +22.1\% \\
    \midrule
    \multicolumn{3}{l}{\paneltitle{Debris Sweeping}} \\
    Galaxea G0.5 & \best{115.30} & --- \\
    \addlinespace[1pt]
    \eecrow Galaxea G0.5 + \eqcmethod & 135.47 & +17.5\% \\
    \bottomrule
  \end{tabular}
\end{table}

\subsection{Energy-Gated DAgger Protocol}\label{app:real-robot-lifelong-protocol}

\paragraph{Initial Demonstration Dataset ($\mathcal{D}_0$).}
For \textit{Hierarchical Block Stacking}, the base policy $E_{\theta_0}$ is initialized by fine-tuning on a small seed demonstration set $\mathcal{D}_0$ comprising $N_0 = 18$ teleoperated episodes collected exclusively under nominal, collinear block initializations (spanning all $3! = 6$ color permutations with 3 demonstrations per configuration), yielding a $0\%$ initial success rate on perturbed setups. For \textit{Garment Folding}, the base policy is initialized on $N_0 = 18$ demonstrations on standard unfolded garments, yielding an initial autonomous success rate of $30\%$.

\paragraph{Human Intervention Baseline.}
To isolate the benefit of automated competence-aware intervention, we additionally evaluate a Human Intervention DAgger baseline under the same data and training budget as Energy-Gated DAgger. The human baseline is initialized from exactly the same seed demonstration dataset $\mathcal{D}_0$ containing $N_0=18$ teleoperated episodes. Intervention decisions are made by skilled teleoperators with more than three months of experience collecting VLA training data. At each round, the current checkpoint is deployed on the same nominal collinear configurations used for autonomous evaluation, spanning all $3! = 6$ possible color permutations for Block Stacking, while the operator continuously monitors whether autonomous execution is likely to fail or has moved beyond the policy's apparent competence. When the operator identifies such a failure-prone state, autonomous execution is suspended and the operator intervenes to provide a recovery trajectory. For Block Stacking, we collect exactly $6$ targeted recovery episodes per round over $R=5$ rounds, yielding a total of $30$ additional recovery demonstrations and expanding the dataset from $18$ to $48$ episodes. For Garment Folding, we follow the same procedure over $R=2$ rounds, collecting exactly $6$ targeted recovery episodes per round, yielding $12$ additional recovery demonstrations and expanding the dataset from $18$ to $30$ episodes. The human-in-the-loop baseline therefore follows the same task-specific schedule and collection budget as Energy-Gated DAgger: $R=5$ rounds and $30$ recovery demonstrations for Block Stacking, and $R=2$ rounds and $12$ recovery demonstrations for Garment Folding. The two methods share the same initial dataset, collection budget, deployment schedule, and policy-update procedure, and differ in how intervention decisions are triggered: external human judgment for the baseline and the policy-internal energy-consistency signal $U_{\Delta E}$ for our method.

\paragraph{Galaxea G0.5 + Diff-DAgger Baseline.}
To provide a rigorous automated baseline for lifelong learning, we adapt Diff-DAgger~\citep{lee2025diff} to the physical Galaxea G0.5 policy. Its online failure-detection scoring is identical to the flow-matching formulation detailed in Appendix~\ref{app:competence-awareness-diffdagger-baseline-method}, evaluating $M=10$ Monte Carlo samples per replan across active embodiment dimensions. Analogous to EqC, persistent threshold exceedances over $m$ consecutive replanning steps trigger expert recovery intervention. Crucially, G0.5 + Diff-DAgger operates under the exact same experimental conditions: it starts from the identical seed dataset, collects the exact same budget of targeted recovery demonstrations per round, and follows the identical iterative fine-tuning hyperparameters as Energy-Gated DAgger and the human baseline.

\paragraph{Active Deployment and Intervention Mechanism.}
During autonomous deployment in round $r$, the policy executes action prefixes under receding-horizon control while continually monitoring the energy-consistency signal $U_{\Delta E}$. If $U_{\Delta E}$ exceeds the fixed alarm threshold $\gamma$ for $m$ consecutive replanning steps, autonomous execution is suspended, and an expert intervenes to provide a recovery trajectory $\xi_j^\star$ that guides the robot to successful task completion. Each round collects $|\mathcal{Q}_r| = 6$ targeted recovery episodes.

\paragraph{Iterative Fine-Tuning.}
Following Energy-Gated DAgger, each round aggregates newly collected recovery trajectories into the historical dataset ($\mathcal{D}_{r+1} \gets \mathcal{D}_r \cup \mathcal{Q}_r$). The policy is re-initialized from the same initial checkpoint and independently trained for 2 epochs on $\mathcal{D}_{r+1}$ using the hyperparameters in Table~\ref{tab:real-robot-params}. For Block Stacking, across $R = 5$ rounds, the robot incorporates 30 active recovery demonstrations, expanding the dataset from $|\mathcal{D}_0| = 18$ to $|\mathcal{D}_5| = 48$ episodes. For Garment Folding, the policy completes lifelong adaptation in $R = 2$ rounds, incorporating 12 targeted recoveries and expanding the dataset from $|\mathcal{D}_0| = 18$ to $|\mathcal{D}_2| = 30$ episodes. To ensure statistical consistency, all lifelong learning experiments across both tasks and all compared methods are repeated over 3 independent end-to-end runs. After each fine-tuning round, the resulting policy checkpoint is evaluated over 20 independent autonomous trials, and we report the mean success rate and standard deviation across the 3 runs in Table~\ref{tab:lifelong-detailed-stats} and Figure~\ref{fig:lifelong-curves}.

\begin{table}[t]
  \caption{Detailed progressive success rates (\%) across 3 independent lifelong learning runs on the physical Galaxea bimanual workstation (20 autonomous trials per checkpoint per run). Both individual run outcomes and summary statistics ($\text{Mean} \pm \text{Std}$) are reported across deployment rounds.}
  \label{tab:lifelong-detailed-stats}
  \centering
  \fontsize{7.5pt}{8.8pt}\selectfont
  \setlength{\tabcolsep}{3.5pt}
  \renewcommand{\arraystretch}{0.98}

  \paneltitle{(a) Hierarchical Block Stacking ($|\mathcal{D}_0|=18 \to |\mathcal{D}_5|=48$)}\par\vspace{3pt}
  \resizebox{\linewidth}{!}{%
  \begin{tabular}{@{} l l cccccc @{}}
    \toprule
    \headrow Method & Run & R0 (18) & R1 (24) & R2 (30) & R3 (36) & R4 (42) & R5 (48) \\
    \midrule
    \multirow{4}{*}{Human Intervention} 
      & Run 1 & 0.0 & 0.0 & 5.0  & 15.0 & 20.0 & 50.0 \\
      & Run 2 & 0.0 & 0.0 & 10.0 & 35.0 & 60.0 & 70.0 \\
      & Run 3 & 0.0 & 0.0 & 0.0  & 20.0 & 45.0 & 55.0 \\
      & \textit{Mean$\pm$Std} & $0.0 \pm 0.0$ & $0.0 \pm 0.0$ & $5.0 \pm 5.0$ & $23.3 \pm 10.4$ & $41.7 \pm 20.2$ & $58.3 \pm 10.4$ \\
    \midrule
    \multirow{4}{*}{Galaxea G0.5 + Diff-DAgger} 
      & Run 1 & 0.0 & 0.0 & 20.0 & 60.0 & 70.0 & 90.0 \\
      & Run 2 & 0.0 & 0.0 & 10.0 & 80.0 & 70.0 & 70.0 \\
      & Run 3 & 0.0 & 0.0 & 5.0  & 35.0 & 50.0 & 80.0 \\
      & \textit{Mean$\pm$Std} & $0.0 \pm 0.0$ & $0.0 \pm 0.0$ & $11.7 \pm 7.6$ & $58.3 \pm 22.5$ & $63.3 \pm 11.5$ & $80.0 \pm 10.0$ \\
    \midrule
    \multirow{4}{*}{\eqcmethod} 
      & Run 1 & 0.0 & 0.0 & 25.0 & 60.0 & 50.0 & 100.0 \\
      & Run 2 & 0.0 & 5.0 & 30.0 & 55.0 & 85.0 & 95.0 \\
      & Run 3 & 0.0 & 0.0 & 30.0 & 70.0 & 80.0 & 95.0 \\
      \eecrow
      & \textit{Mean$\pm$Std} 
      & $\mathbf{0.0 \pm 0.0}$ 
      & $\mathbf{1.7 \pm 2.9}$ 
      & $\mathbf{28.3 \pm 2.9}$ 
      & $\mathbf{61.7 \pm 7.6}$ 
      & $\mathbf{71.7 \pm 18.9}$ 
      & $\mathbf{96.7 \pm 2.9}$ \\
    \bottomrule
  \end{tabular}%
  }

  \vspace{8pt}

  \paneltitle{(b) Garment Folding ($|\mathcal{D}_0|=18 \to |\mathcal{D}_2|=30$)}\par\vspace{3pt}
  \begin{tabular}{@{} l l ccc @{}}
    \toprule
    \headrow Method & Run & R0 (18) & R1 (24) & R2 (30) \\
    \midrule
    \multirow{4}{*}{Human Intervention} 
      & Run 1 & 30.0 & 35.0 & 40.0 \\
      & Run 2 & 30.0 & 40.0 & 65.0 \\
      & Run 3 & 30.0 & 35.0 & 60.0 \\
      & \textit{Mean$\pm$Std} & $30.0 \pm 0.0$ & $36.7 \pm 2.9$ & $55.0 \pm 13.2$ \\
    \midrule
    \multirow{4}{*}{Galaxea G0.5 + Diff-DAgger} 
      & Run 1 & 25.0 & 35.0 & 90.0 \\
      & Run 2 & 25.0 & 50.0 & 80.0 \\
      & Run 3 & 25.0 & 45.0 & 75.0 \\
      & \textit{Mean$\pm$Std} & $25.0 \pm 0.0$ & $43.3 \pm 7.6$ & $81.7 \pm 7.6$ \\
    \midrule
    \multirow{4}{*}{\eqcmethod} 
      & Run 1 & 30.0 & 45.0 & 100.0 \\
      & Run 2 & 30.0 & 80.0 & 100.0 \\
      & Run 3 & 30.0 & 60.0 & 95.0 \\
      \eecrow
      & \textit{Mean$\pm$Std} 
      & $\mathbf{30.0 \pm 0.0}$ 
      & $\mathbf{61.7 \pm 17.6}$ 
      & $\mathbf{98.3 \pm 2.9}$ \\
    \bottomrule
  \end{tabular}
\end{table}

\section{Robot Agent as Expert}
\label{app:robot-agent-as-expert}

Energy-Gated DAgger connects competence detection, expert recovery, and policy updates through a common intervention interface. We extend this framework by using a robot agent powered by \texttt{gpt-6-astra} as the recovery expert on \textit{Garment Folding}. Once the expert agent is prepared, both the decision to request assistance and the subsequent recovery are automated: EqC detects when its execution becomes unreliable, and the agent takes over to complete the task without human intervention. Incorporating these recoveries into policy training closes an automated intervention-and-learning loop. This also provides a route to \emph{behavioral distillation}, through which a compact robot policy acquires capabilities from a larger model through the recovery demonstrations it generates.

\paragraph{Experimental setup.}
We use the bimanual platform and garment-folding task described in Appendix~\ref{app:real-robot-details}. We initialize from the seed-trained EqC checkpoint of Run~1 at Round~0 in Table~\ref{tab:lifelong-detailed-stats}(b), obtained using the original 18 garment-folding demonstrations. During deployment, EqC generates actions and monitors its energy-consistency signal $U_{\Delta E}$. The alarm threshold remains fixed at $\gamma=0.2$, identical to the preceding lifelong learning experiments, and we retain the same persistence criterion for triggering intervention. Upon an alarm, autonomous execution is suspended and the robot agent assumes the expert-recovery role in Algorithm~\ref{alg:eqc-lifelong}.

\paragraph{Expert agent preparation.}

We prepare the agent through full-task practice followed by
takeover-specific practice (Algorithm~\ref{alg:agent-preparation-and-takeover}). We first provide Astra with the original 18 teleoperation
demonstrations. The agent inspects these demonstrations offline to understand the folding procedure and writes a robot-control script that approximates the recorded manipulation trajectories. It then performs repeated complete garment-folding attempts on the physical robot. After each attempt, Astra reviews the execution outcome, revises the script, and consolidates its spatial reasoning and manipulation experience into reusable executable skills. This process continues until the agent achieves a success rate of approximately 100\% on complete garment-folding attempts.

Because EqC has already performed part of the task when an intervention is triggered, recovery requires adaptation to intermediate garment and robot configurations. We therefore conduct several rounds of takeover-specific practice. Following an EqC alarm, the agent assesses the current scene, estimates spatial relationships and distances, and adapts its control script to complete the task from the intervention state. After execution, it reviews the recovery, refines the reusable control code, and incorporates the new experience into its skill library. Once preparation is complete, the agent enters routine takeover: it uses the prepared code and accumulated skills to adapt an execution script to each intervention state, without further post-execution reflection or skill consolidation. The adapted script is executed directly on the robot, avoiding a separate language-model response for each low-level action.

\begin{algorithm}[t]
\caption{Expert Agent Preparation and Routine Takeover}
\label{alg:agent-preparation-and-takeover}
\footnotesize

\noindent
\textit{Notation.}
$P$: control script;
$\mathcal{S}$: skill library;
$z$: observed intervention state;
$\tau$: execution record, including the observed outcome.
\par\smallskip

\begin{algorithmic}[1]
\Require Initial demonstrations $\mathcal{D}_0$;
         full-task practice rounds $R_{\mathrm{full}}$;
         takeover practice rounds $R_{\mathrm{takeover}}$
\State $P \gets \Call{AstraFitScript}{\mathcal{D}_0}$
\State $\mathcal{S} \gets \varnothing$

\Statex
\Statex \textbf{Stage 1: Full-task practice}
\For{$r = 1,\ldots,R_{\mathrm{full}}$}
    \State $\tau \gets \Call{ExecuteFromTaskStart}{P}$
    \State $P \gets \Call{AstraRefineScript}{P,\mathcal{S},\tau}$
    \State $\mathcal{S} \gets
        \Call{AstraConsolidateSkills}{\mathcal{S},P,\tau}$
\EndFor

\Statex
\Statex \textbf{Stage 2: Takeover practice}
\For{$r = 1,\ldots,R_{\mathrm{takeover}}$}
    \State $z \gets \Call{WaitForEqCAlarm}{}$
        \Comment{EqC pauses at the intervention state}
    \State $\widetilde{P} \gets
        \Call{AstraAdaptScript}{P,\mathcal{S},z}$
    \State $\tau \gets \Call{ExecuteRecovery}{\widetilde{P},z}$
    \State $P \gets
        \Call{AstraRefineScript}{\widetilde{P},\mathcal{S},\tau}$
    \State $\mathcal{S} \gets
        \Call{AstraConsolidateSkills}{\mathcal{S},\widetilde{P},\tau}$
\EndFor

\Statex
\Statex \textbf{Stage 3: Routine takeover}
\State $(P^\star,\mathcal{S}^\star) \gets (P,\mathcal{S})$
\For{each subsequent EqC alarm}
    \State $z \gets \Call{ObserveInterventionState}{}$
    \State $\widetilde{P} \gets
        \Call{AstraAdaptScript}{P^\star,\mathcal{S}^\star,z}$
    \State $\tau \gets \Call{ExecuteRecovery}{\widetilde{P},z}$
    \State \Call{RecordRecoveryDemonstration}{$\tau$}
    \Comment{No post-execution reflection or skill updates}
\EndFor
\end{algorithmic}
\end{algorithm}

\paragraph{Takeover and policy learning.}
After expert preparation, we deploy the agent for routine takeover, following Stage~3 of Algorithm~\ref{alg:agent-preparation-and-takeover}.
For each EqC alarm, Astra uses the prepared code and accumulated skills to adapt a recovery script to the current intervention state. The adapted code is then executed to complete the remaining operations. 

We collect six agent-generated recovery demonstrations and combine them with the original demonstration set:
\begin{equation}
    |\mathcal{D}_0| = 18,
    \quad
    |\mathcal{Q}_{\mathrm{Astra}}| = 6,
    \quad
    \mathcal{D}_{\mathrm{1}}
    =
    \mathcal{D}_0 \cup \mathcal{Q}_{\mathrm{Astra}},
    \quad
    |\mathcal{D}_{\mathrm{1}}| = 24.
\end{equation}
After fine-tuning on this combined dataset, the updated EqC policy successfully completes all 20 autonomous garment-folding evaluation trials, achieving 100\% success without agent takeover.

\paragraph{Implications.}
This proof-of-concept experiment demonstrates how EqC can support a fully automated intervention-and-learning loop after initialization and expert preparation. Its introspective signal determines when assistance is needed, the robot agent supplies recovery behavior, and policy updates incorporate the resulting experience into subsequent autonomous execution. The system also supports behavioral distillation at two levels: the larger model's reasoning is consolidated into executable robot skills, and the recoveries produced by those skills become supervision for the compact EqC policy. By connecting autonomous competence detection, agent-directed recovery, and policy improvement, this framework provides a promising path toward lifelong robot learning: situations beyond the policy's current competence become opportunities for targeted learning, allowing the robot to progressively expand its autonomous capabilities through continued deployment.

\section{Cross-Task Continual Learning and Catastrophic Forgetting}\label{app:task-shift-cup-stacking}

\paragraph{Protocol.}
Real-world lifelong deployment often requires agents to acquire novel manipulation skills sequentially while retaining competence on previously mastered behaviors. To test this capability, we select the fully converged Block Stacking policy checkpoint from Round 5 that achieved $100.0\%$ autonomous success (corresponding to Run 1 in Table~\ref{tab:lifelong-detailed-stats}, trained on $|\mathcal{D}_5| = 48$ demonstrations) and deploy it directly onto the novel \textit{Hierarchical Cup Stacking} benchmark without any prior cup interactions.
\begin{itemize}
    \item \textbf{Round 0:} Evaluated directly on the cup task, this base checkpoint achieves $0.0\%$ success due to geometric discrepancy and altered contact dynamics, while maintaining its initial $100.0\%$ competence on block stacking. As visualized in Figure~\ref{fig:cup-stacking-rollouts} (Top row), trajectory drift and grasp slippage immediately trigger persistent $U_{\Delta E}$ violations, prompting expert assistance before hardware collisions occur.
    \item \textbf{Round 1:} Using EqC's Energy-Gated DAgger with the frozen threshold $\gamma$, persistent $U_{\Delta E}$ violations detect out-of-competence execution and prompt human recovery, collecting $|\mathcal{Q}_1^{\text{cup}}| = 6$ cup stacking demonstrations. After fine-tuning on $\mathcal{D}_5 \cup \mathcal{Q}_1^{\text{cup}}$, cup stacking performance jumps to $50.0\%$. However, objective conflicts between tasks induced representation interference and parameter drift, causing the original block stacking success to degrade to $60.0\%$ and manifesting classic catastrophic forgetting in continual learning.
    \item \textbf{Round 2:} During Round 2 deployment, EqC monitors both tasks simultaneously. It detects residual competence gaps in cup stacking as well as regression in block stacking, triggering targeted interventions that collect an additional 6 cup stacking episodes and 6 block stacking episodes ($|\mathcal{Q}_2| = 12$). Fine-tuning on the cumulative dataset successfully resolves task competition: both \textit{Hierarchical Block Stacking} and \textit{Hierarchical Cup Stacking} achieve \textbf{100.0\%} autonomous success (with autonomous cup stacking illustrated in Figure~\ref{fig:cup-stacking-rollouts}, Bottom row; Table~\ref{tab:task-shift-results}).
\end{itemize}

\begin{figure}[htbp]
  \centering
  \includegraphics[width=\linewidth]{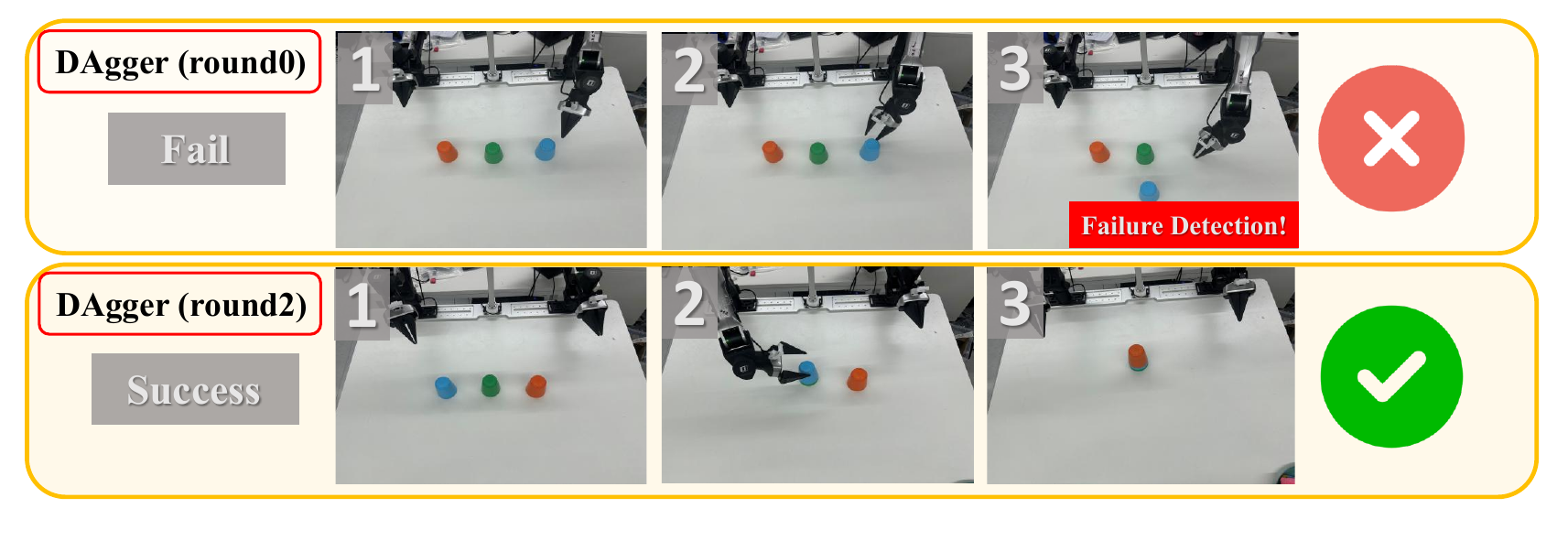}
  \vspace{-0.6em}
  \caption{\textbf{Cross-task adaptation on \textit{Hierarchical Cup Stacking}.} 
  \textbf{Top (Round 0):} Zero-shot deployment causes grasp slippage and drift, promptly caught by a persistent $U_{\Delta E}$ failure alarm. 
  \textbf{Bottom (Round 2):} After two rounds of Energy-Gated DAgger, the policy achieves fully autonomous execution, cleanly assembling the three-tier cup tower.}
  \label{fig:cup-stacking-rollouts}
  \vspace{-0.5em}
\end{figure}

\begin{table}[htbp]
  \caption{Progressive performance (\%) across continual learning rounds under task shift.}
  \label{tab:task-shift-results}
  \centering
  \tabstyle
  \setlength{\tabcolsep}{14pt}
  \begin{tabular}{@{}lccc@{}}
    \toprule
    \headrow Metric / Task & Round 0 (Base) & Round 1 (+6 Cup) & Round 2 (+6 Cup, +6 Block) \\
    \midrule
    Cumulative Demos & 48 (Block only) & 54 (48 Block, 6 Cup) & 66 (54 Block, 12 Cup) \\
    Block Stacking Success & \best{100.0} & 60.0 & \best{100.0} \\
    Cup Stacking Success   & 0.0 & 50.0 & \best{100.0} \\
    \midrule
    \eecrow Dual-Task Average & 50.0 & 55.0 & \best{100.0} \\
    \bottomrule
  \end{tabular}
\end{table}

This result demonstrates that actively gating data collection for degraded and novel skills provides a practical mechanism for recovering from task-interference-induced forgetting.

\section{Ablation Study} \label{app:ablations}

\subsection{Robustness to Optimization Step Schedules}
\label{subsec:opt-schedule-robustness}

To further evaluate the sensitivity of EqC to the underlying optimization step-size schedule, we conduct an ablation study across the LIBERO benchmark. Specifically, we compare our default Cosine decay schedule against uniform step schedules (with 15 and 20 optimization steps). Table~\ref{tab:opt-schedule} presents the performance breakdown across the four standard LIBERO task suites (Goal, Spatial, Object, and Long). The results demonstrate that EqC is highly robust to variations in the optimization scheduling strategy. While the Cosine decay schedule yields the best average performance (achieving $94.3$), the uniform schedules maintain competitive results, with the 20-step uniform schedule reaching $94.2$ and the 15-step schedule maintaining $93.1$. This consistent performance indicates that our energy-based failure detection mechanism relies on the fundamental energy landscape rather than highly tuned optimization step sizes, ensuring reliable and stable operation across different scheduling choices.

\begin{table}[htbp]
  \caption{Success rate (\%) of EqC under different optimization step schedules across the LIBERO suites. The highest values in each column are highlighted in bold.}
  \label{tab:opt-schedule}
  \centering
  \small
  \setlength{\tabcolsep}{8pt}
  \begin{tabular}{@{} l c c c c c @{}}
    \toprule
    \headrow Methods & Goal & Spatial & Object & Long & Average \\
    \midrule
    EqC (uniform 15 steps) & \textbf{95.8} & 95.2 & 98.4 & 83.0 & 93.1 \\
    EqC (uniform 20 steps) & 95.4 & 96.2 & 98.0 & \textbf{87.2} & 94.2 \\
    \eecrow EqC (Cosine decay) & 95.6 & \textbf{97.2} & \textbf{98.6} & 85.4 & \textbf{94.3} \\
    \bottomrule
  \end{tabular}
\end{table}

\subsection{Adaptive Inference}\label{subsec:adaptive-inference}

To investigate whether our action optimization converges to an equilibrium during inference, we adapt EqC to DiT backbones and analyze the optimization dynamics across the LIBERO benchmark. During inference procedure, we observe that candidate actions rapidly reach a stationary regime, as evidenced by two main aspects in Table~\ref{tab:grad-energy-convergence-dit}:
\begin{itemize}
    \item \textbf{Rapid Gradient Decay}: the mean gradient norm $\|\nabla E\|$ decreases almost monotonically from an initial noise scale ($49.3$) at step $0$ to around $1.0$ by step $14$, ultimately flattening to a floor of $0.3$ at the final step.
    \item \textbf{Early Energy Saturation}: the normalized energy convergence ratio—defined as $(E_k - E_0)/(E_{\mathrm{final}} - E_0)$—saturates remarkably early, reaching $99.0\%$ by step $7$ and exceeding $99.9\%$ by step $9$.
\end{itemize}
These empirical observations indicate that our method rapidly reaches an equilibrium, as action candidate refinement enters a stable energy state within the first $10$--$15$ steps.

Motivated by this steady-state behavior, we introduce an adaptive early-stopping mechanism that terminates action chunk optimization once the gradient norm falls below a threshold $g_{\mathrm{min}} \in \{1.0, 3.0\}$. As summarized in Table~\ref{tab:early-stopping-ablation}, adaptive early stopping significantly enhances inference efficiency without incurring any notable performance degradation. Specifically, setting $g_{\mathrm{min}} = 1.0$ reduces the average optimization steps to $16.8$ (a $47.5\%$ reduction in computation compared to the full budget) while maintaining an equivalent average success rate ($93.7$ vs. $94.0$ baseline). Setting the threshold to $g_{\mathrm{min}} = 3.0$ further truncates the average optimization steps down to $11.1$ (saving $65.3\%$ of inference computation) while achieving a success rate of $94.2$. These results confirm that our proposed method---Equilibrium Energy Control---can effectively eliminate redundant computation while preserving policy execution quality through an adaptive gradient-based stopping strategy.

\begin{table}[htbp]
  \caption{Convergence profile of mean gradient norm $\|\nabla E\|$ and normalized energy saturation ratio across optimization steps $k$ on LIBERO.}
  \label{tab:grad-energy-convergence-dit}
  \centering
  \small
  \setlength{\tabcolsep}{5pt}
  \begin{tabular}{@{} l c c c c c c c c c c @{}}
    \toprule
    \headrow Step $k$ & $0$ & $5$ & $6$ & $7$ & $8$ & $9$ & $14$ & $19$ & $32$ \\
    \midrule
    Mean Gradient Norm $\|\nabla E\|$ & 49.3 & 44.4 & 29.9 & 12.9 & 6.0 & 3.6 & 1.2 & 0.7 & 0.3 \\
    $E$ Saturation Ratio (\%) & 0.00 & 82.51 & 95.18 & 99.22 & 99.79 & 99.91 & 99.99 & -- & -- \\
    \bottomrule
  \end{tabular}
\end{table}

\begin{table}[htbp]
  \caption{Performance and inference efficiency under adaptive early stopping across LIBERO task suites. Metrics are reported as Success Rate (\%) / Average Optimization Steps per action chunk.}
  \label{tab:early-stopping-ablation}
  \centering
  \small
  \setlength{\tabcolsep}{8pt}
  \begin{tabular}{@{} l c c c @{}}
    \toprule
    \headrow Task Suite & Baseline ($K=32$) & $g_{\mathrm{min}}=1.0$ & $g_{\mathrm{min}}=3.0$ \\
    \midrule
    libero\_goal & $95.6$ / $32.0$ & $97.0$ / $16.9$ & $97.0$ / $11.1$ \\
    libero\_spatial & $96.4$ / $32.0$ & $95.2$ / $17.3$ & $98.0$ / $11.3$ \\
    libero\_object & $98.1$ / $32.0$ & $98.0$ / $16.2$ & $97.6$ / $10.9$ \\
    libero\_long & $85.5$ / $32.0$ & $84.6$ / $16.8$ & $84.2$ / $10.9$ \\
    \midrule
    \eecrow \textbf{Average} & $\mathbf{94.0}$ / $\mathbf{32.0}$ & $\mathbf{93.7}$ / $\mathbf{16.8}$ & $\mathbf{94.2}$ / $\mathbf{11.1}$ \\
    \bottomrule
  \end{tabular}
\end{table}

\section{Case Study: Detection Timeliness and Failure Modes}\label{app:case-study}

A practical introspective robot system must trigger assistance \emph{at the competence boundary}: alarming neither prematurely on benign execution, nor belatedly after irreversible physical damage has occurred. In this section, we present real-world qualitative case studies on the physical Galaxea bimanual manipulation platform to evaluate detection timeliness across two domains: rigid-body multi-stage contact (\textit{Hierarchical Block Stacking}) and deformable manipulation (\textit{Garment Folding}). We first highlight EqC's proactive detection capability, followed by a detailed diagnostic of the baseline's characteristic pathology modes.

\begin{figure*}[t]
  \centering
  \includegraphics[width=0.92\linewidth]{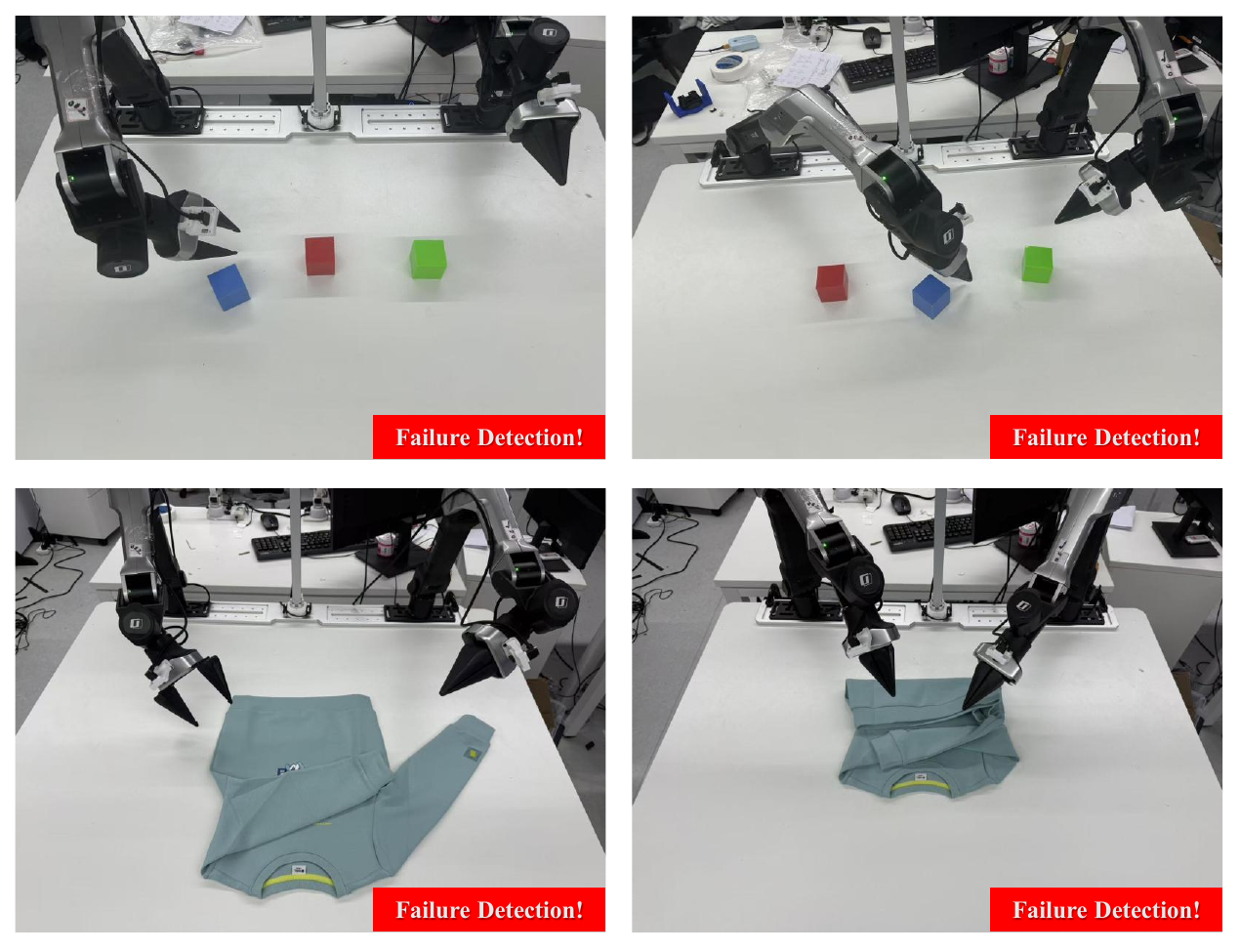}
  \vspace{-0.6em}
  \caption{\textbf{Proactive competence-boundary detection of EqC across rigid and deformable manipulation.} 
  Persistent $U_{\Delta E}$ violations trigger timely expert recovery at the earliest onset of execution drift, preserving an ample window for teleoperation before irreversible physical compounding occurs. 
  \textbf{Top row (Hierarchical Block Stacking):} 
  \textit{Left:} Alarm raised when the block is accidentally pushed away, preventing it from being pushed further; 
  \textit{Right:} Alarm triggered by angular misalignment. 
  \textbf{Bottom row (Garment Folding):} 
  \textit{Left:} Alarm triggered when the robot attempts to grasp the right sleeve but fails to secure it;
  \textit{Right:} Alarm triggered upon failure to fold the garment to the designated position.}
  \label{fig:case-study-eqc-timing}
  \vspace{-0.8em}
\end{figure*}

\subsection{Proactive Detection by EqC at the Competence Boundary}
As illustrated in Figure~\ref{fig:case-study-eqc-timing}, EqC's path-wise energy consistency residual $U_{\Delta E}$ demonstrates sharp sensitivity to the earliest emergence of off-manifold execution drift:
\begin{itemize}
    \item \textbf{Rigid-Body Manipulation (Top row):} During \textit{Hierarchical Block Stacking}, when the block is accidentally pushed away (Top-left) or angular misalignment occurs during contact placement (Top-right), $U_{\Delta E}$ persistently exceeds the threshold $\gamma$, arresting autonomous motion \emph{before} blocks are knocked over or the tower collapses.
    \item \textbf{Deformable Manipulation (Bottom row):} In \textit{Garment Folding}, EqC reliably flags failure to secure the right sleeve during grasping (Bottom-left) and failure to reach the target folding position (Bottom-right), requesting expert takeover while the fabric remains completely recoverable.
\end{itemize}

\begin{figure*}[t]
  \centering
  \includegraphics[width=\linewidth]{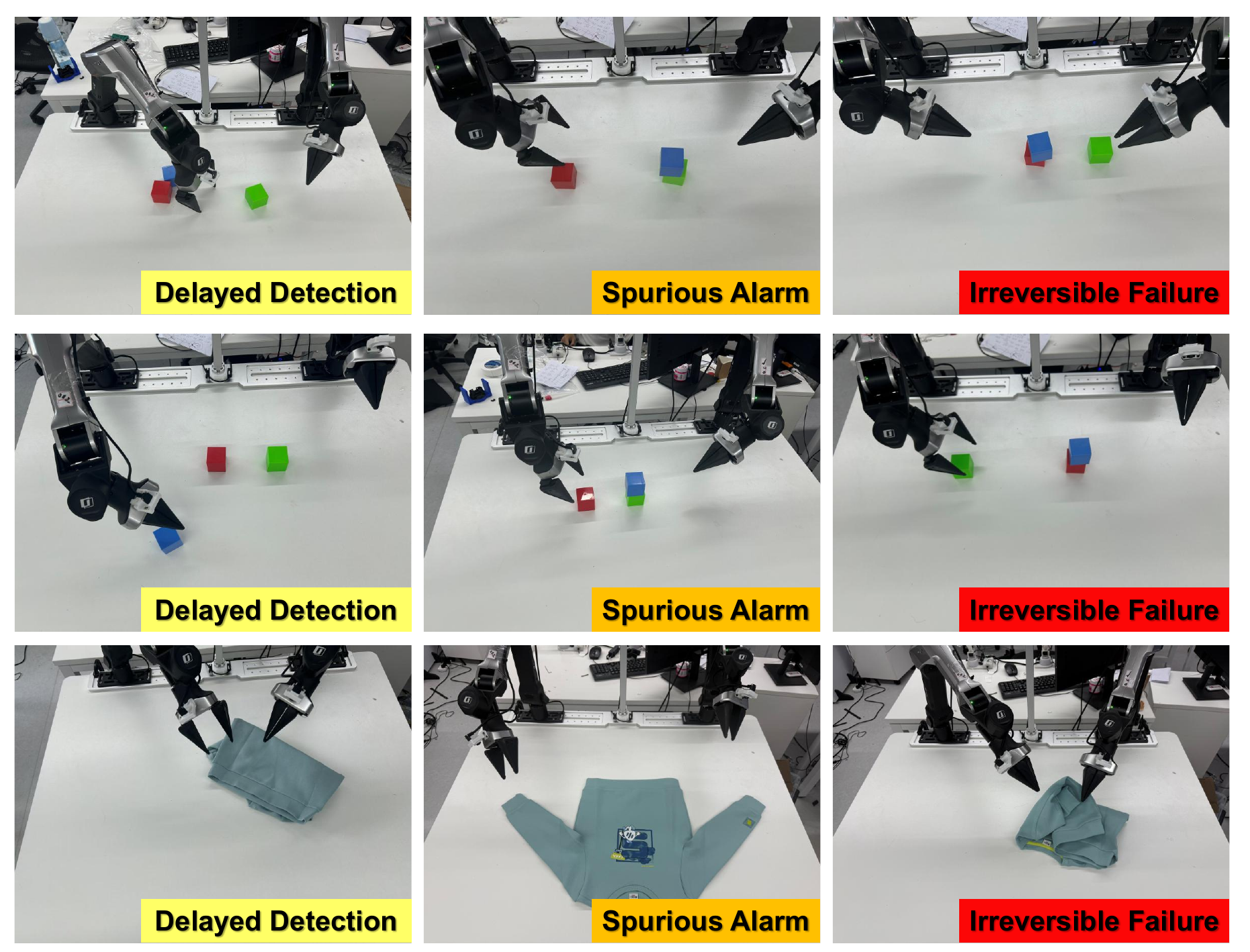}
  \vspace{-0.6em}
  \caption{\textbf{Qualitative failure modes of Galaxea G0.5 + Diff-DAgger across rigid and deformable manipulation.} 
  Rows 1--2 evaluate \textit{Hierarchical Block Stacking} under nominal setups (two independent instances per category), while Row 3 evaluates \textit{Garment Folding}.
  \textbf{Left column (Delayed Detection):} The detector remains silent throughout compounding drift, alarming only belatedly after the block arrangement is disrupted or a severe angular tilt has already occurred, missing the teleoperation recovery window. 
  \textbf{Middle column (Spurious Alarm):} During nominal execution---including pristine, fully unfolded garment setups (Row 3)---noisy flow-loss expectation triggers false-positive alarms, needlessly aborting valid rollouts. 
  \textbf{Right column (Irreversible Failure):} The detector triggers only after irrecoverable failures have occurred (e.g., mis-sequenced block placement or fabric bunched into an entangled knot), leaving no viable recovery path without a complete manual scene reset.}
  \label{fig:case-study-snapshots}
  \vspace{-0.8em}
\end{figure*}

\subsection{Pathology Diagnostics of Flow-Loss Detection (Diff-DAgger)}
In contrast to EqC's proactive warnings, point-wise velocity error in flow-matching policies displays significant variance and poor temporal sensitivity to compounding geometric error. As illustrated in Figure~\ref{fig:case-study-snapshots}, evaluating \textbf{Galaxea G0.5 + Diff-DAgger} across both rigid stacking (Rows 1--2) and deformable garment folding (Row 3) reveals three distinct, pervasive pathology modes across multiple independent trials:

\begin{enumerate}
    \item \textbf{Delayed Detection during Cumulative Drift (Figure~\ref{fig:case-study-snapshots}, Left column):} 
    When the manipulators suffer from progressive spatial drift, Diff-DAgger persistently fails to raise an alarm throughout the approach trajectory. Across both block stacking (Rows 1--2) and garment folding (Row 3), it alarms only at a belated stage---after the block cluster is disrupted or the fabric has already suffered a severe angular tilt---severely compressing or completely missing the reaction window required for human teleoperator intervention.
    \item \textbf{Spurious False Alarm on Supported Execution (Figure~\ref{fig:case-study-snapshots}, Middle column):} 
    During nominal, error-free execution, Diff-DAgger's noisy loss expectation frequently induces spurious spikes. This occurs not only during mid-trajectory block transit, but even on completely flat, undisturbed garments (Row 3, Middle), mistakenly triggering false-positive alarms that prematurely halt valid execution.
    \item \textbf{Irreversible Failure Triggering (Figure~\ref{fig:case-study-snapshots}, Right column):} 
    When execution diverges critically (e.g., stacking onto the wrong block or severely bunching fabric), Diff-DAgger fails to catch the deviation during transit. It alerts only post-contact or after the garment is balled into an entangled knot (Row 3, Right). At this terminal stage, in-place teleoperation is impossible, requiring a full manual reset of the environment.
\end{enumerate}

\section{Code Availability}\label{app:code-availability}

We have released the code at
\url{https://github.com/hrh6666/EqC}.

\end{document}